\documentclass[journal,twoside,web]{ieeecolor}
\usepackage{lcsys}
\usepackage{generic}
\usepackage{cite}
\usepackage{amsmath,amssymb,amsfonts}
\usepackage{algorithmic}
\usepackage{graphicx}
\usepackage{algorithm,algorithmic}
\usepackage{hyperref}
\usepackage{textcomp}
\usepackage{booktabs}
\usepackage{comment}
\usepackage{mathrsfs}
\usepackage[T1]{fontenc}
\usepackage{oldgerm}
\usepackage{scalerel,stackengine}
\usepackage{stmaryrd}

\newtheorem{theorem}{Theorem}[section]
\newtheorem{definition}{Definition}[section]

\newtheorem{proposition}{Proposition}[section]
\newtheorem{lemma}{Lemma}[section]
\newtheorem{corollary}{Corollary}[section]
\newtheorem{remark}{Remark}[section]
\newtheorem{assumption}{Assumption}[section]
\newtheorem{claim}{Claim}[section]

\newcommand{\js}[1]{\textcolor{blue}{(JS: #1)}}

\def\BibTeX{{\rm B\kern-.05em{\sc i\kern-.025em b}\kern-.08em
    T\kern-.1667em\lower.7ex\hbox{E}\kern-.125emX}}
\begin{document}
\title{The Attention Within:  Consensus Dynamics in Selective State Space Models}
\author{
    João Pedro Silvestre$^{1}$, \'Alvaro Rodr\'iguez Abella$^{2}$ and Paulo Tabuada$^{1}$
    \thanks{This work was partially supported by the NSF award 2502536 and the Air Force Office of Scientific Research under the Multidisciplinary University Research Initiative grant Hybrid Dynamics - Deconstruction and Aggregation (HyDDRA). J.P.S was partially supported by the
PhD fellowship 2023.01843.BD from the Fundação para a Ciência e a Tecnologia (FCT), Portugal. A.R.A. was partially supported by grant PID2024-156578NB-I00 funded by MICIU/AEI/10.13039/501100011033/FEDER, EU.}\thanks{$^{1}$João Pedro Silvestre and Paulo Tabuada are with the Electrical and Computer Engineering Department, University of California at Los Angeles, Los Angeles, CA 90095 USA (e-mail: {\tt\small \{joaosilvestre, tabuada\}@ucla.edu}).}
\thanks{$^{2}$\'Alvaro Rodr\'iguez Abella is with the Department of Applied Mathematics, Comillas Pontifical University, Madrid, 28015 - Madrid, Spain (e-mail: {\tt\small arabella@comillas.edu}).}
}

\maketitle

\begin{abstract}
Selective state space models (SSMs) have recently emerged as a compelling alternative to transformers, combining competitive performance with substantially improved inference efficiency. At each SSM layer, a sequence of hidden states are propagated by a recurrence, mixing information of different tokens. Despite using a different mechanism, this mixing plays a role analogous to attention in transformers. In fact, recent works have shown that the two architectures may be closer than they first appear, as this recurrence admits a formulation akin to linear attention. In transformers, attention is known to drive the tokens to cluster, i.e., to reach consensus, collapsing in the limit to a single direction. Thus, we ask: does the recurrence at the core of SSMs drive the tokens to consensus, as attention does in transformers?

To answer this question, we take a dynamical systems perspective on SSMs, modeling the evolution of tokens across layers as an ordinary differential equation. By exploiting input-to-state stability arguments, we establish local exponential stability of the consensus equilibria and characterize their domain of attraction for time-varying weight matrices, a setting not addressed by previous results. We thereby show that the resemblance between SSMs and transformers does run deeper: the recurrence at the core of SSMs aggregates tokens just as attention does. Numerical experiments on a pretrained Mamba-2 model point to the output gate as the component that regulates the extent of this consensus, preventing the tokens from reaching it in full.
\end{abstract}


\maketitle

\section{Introduction}\label{sec:intro}

In recent years, large language models (LLMs) have seen widespread adoption across a rapidly expanding range of tasks~\cite{zhao2026survey}. Chief among these models is the transformer~\cite{vaswani2017attention}, which has emerged as the dominant architectural paradigm: it underpins the foundation models in widest use today, including popular models such as ChatGPT~\cite{achiam2023gpt}. The reach of the architecture, however, extends well beyond language. Attention, the mechanism at its core, first introduced for neural machine translation \cite{bahdanau2014neural}, has since proved to be a powerful mechanism, allowing the introduction of transformers into domains such as vision \cite{dosovitskiy2020image} and protein structure prediction \cite{jumper2021highly}.

However, transformers are not without shortcomings. The first is computational: the cost of attention grows quadratically with the sequence length~\cite{tay2022efficient}. The second emerges with depth. As models grow deeper, now reaching hundreds of layers~\cite{waleffe2024empirical, grattafiori2024llama}, the returns diminish and the expressive power of the network saturates beyond a certain point~\cite{levine2020limits}, while the tokens grow alike as more and more layers are traversed~\cite{dong2021attention, noci2022signal}. What to make of this last effect is still contested. Some works see it as a defect~\cite{dong2021attention}, since tokens that become indistinguishable can no longer carry distinct information. Others see it as the very mechanism by which the model groups related tokens~\cite{geshkovski2023emergence}, and it has been used directly to solve language tasks, by clustering the tokens of a sentence around the ones that carry most of its meaning~\cite{alcalde2024clustering}.

Selective state space models, first introduced in~\cite{gu2024mamba, dao2024transformers}, were designed to reduce the quadratic cost of attention at inference time. In doing so, they retain the benefits of recurrent neural networks~\cite{hochreiter1997long} while avoiding their classical shortcomings, such as vanishing gradients and the lack of parallelism during training.

The differences between SSMs and transformers, however, may be more subtle than they first appear. Recent works have shown that the core of an SSM admits a formulation akin to linear attention~\cite{dao2024transformers, ali2025hidden}, exposing intrinsic similarities between the two architectures. However, that similarity is structural, and does not by itself determine how the tokens behave across layers. A natural question thus arises: do the models share fundamental dynamical properties?

Several works have shown that
SSMs already mirror transformers in some of their expressiveness
barriers~\cite{merrill2024illusion}; empirical studies suggest that the parallel
may extend to the consensus phenomenon, where SSM tokens can cluster and become
increasingly indistinguishable across layers~\cite{wang2025oversmoothing,
skean2025comparative}. For transformers, the phenomenon is by now well documented: as layers accumulate, the tokens cluster together and drift toward a common direction~\cite{geshkovski2023mathematical}, an effect that has been analysed with time-varying weights, multiple heads, and in the autoregressive setting~\cite{rodriguez2024asymptotic, rodriguez2025consensus}. For SSMs, the comparable theory is far more limited, confined to time-invariant parameters and no normalization~\cite{vo2025demystifying,joseph2025lambda}, a setting too narrow to capture the models used in practice.

In this paper, we show that, in fact, the resemblance does run deeper: it extends to the dynamical evolution of tokens across layers. We do so by taking a dynamical systems perspective on selective SSMs: we model the evolution of tokens across the layers of the Mamba-2 model as an ordinary differential equation, and exploit its causal cascade structure to establish consensus through an input-to-state stability (ISS) argument~\cite{sontag2008input, khalil2002nonlinear}. To the best of our knowledge, these are the first such results for SSMs with time-varying weight matrices, the regime that faithfully reflects how their parameters vary across layers. We then analyse the Mamba-2 model experimentally, confirming that its core clusters the tokens and identifying the output gate as the component that prevents full consensus.

Our contributions are threefold:

\begin{enumerate}
\item We derive a continuous-time model of token evolution in Mamba-2 capturing multi-dimensional tokens, time-varying weights, and layer normalization.
\item We prove local exponential stability of the consensus equilibria, under persistence of excitation.
\item We characterize the domain of attraction of the consensus equilibria.
\end{enumerate}

\subsection*{Notations}

Let $r,s,\ell\in\mathbb N=\{1,2,\hdots\}$. The space of $r\times s$ real matrices is denoted by $\mathbb R^{r\times s}$. In particular, $\mathbb I_r\in\mathbb R^{r\times r}$ denotes the identity matrix. The transpose and Frobenius norm of a matrix $A\in\mathbb R^{r\times s}$ are denoted by $A^\top$ and $\|A\|$, respectively. Given $a_i\in \mathbb{R}$, $1\leq i\leq r$, the diagonal matrix with entries $a_1,\hdots,a_r$ is denoted by $\operatorname{diag}(a_1, \ldots, a_r) \in \mathbb{R}^{r \times r}$. Points in the Euclidean space $\mathbb{R}^r$ are regarded as column vectors and denoted by $x=(x^1,\dots,x^r)\in\mathbb R^r\equiv\mathbb R^{r\times 1}$. Tuples of $\ell$ points are denoted by $X=(x_1,\hdots,x_\ell)\in(\mathbb R^r)^\ell$. Open intervals are denoted by $]a,b[\,$, while closed intervals are denoted by $[a,b]$. In particular, we denote $\mathbb R_0^+=[0,\infty[$ and $\mathbb R^+=\,]0,\infty[\,$. The tangent space of a smooth manifold $M$ at $p\in M$ and its elements are denoted by $T_pM$ and $X_p\in T_p M$, respectively. Given another smooth manifold $N$ and a smooth map \mbox{$\phi:M\to N$}, \textit{i.e.}, \mbox{$\phi\in C^\infty(M,N)$}, the corresponding tangent map is denoted by \mbox{$T\phi:TM\to TN$}.

\section{Dynamics of selective state space models}

In this section, we introduce the Mamba-2 model and derive a continuous-time
approximation of its dynamics. Our objective is to derive a model
that directly relates the input and output of each layer, allowing us to then analyse the model as a dynamical system. In these models the input is typically considered to be a token, \textit{i.e.}, a
numerical representation of a word or sub-word, and the output sequence is
used to predict the next word or sub-word in the sequence. 

While input-output
equations for the Mamba-2 have been derived in~\cite{ali2025hidden} in
discrete time and in~\cite{vo2025demystifying} in continuous time, our
formulation differs from both by incorporating layer normalization, which
projects the tokens onto the unit sphere
$\mathbb{S}^{n-1}$ after each layer update. This constraint is natural due to the use of the RMSNorm~\cite{zhang2019root}, the normalization used
in both Mamba and Mamba-2, which projects each token to a sphere.

\subsection{Configuration space}

Let $n\in\mathbb N$ and consider the Euclidean inner product on $\mathbb R^n$, \textit{i.e.}, $\langle x_1,x_2\rangle=x_1^\top\,x_2$ for $x_1,x_2\in\mathbb R^n$. The corresponding norm is denoted by $|x|=\langle x,x\rangle^{1/2}$. The points of $\mathbb R^n$ of unit norm define the $(n-1)$-dimensional sphere:
$$\mathbb S^{n-1} =\{y\in\mathbb R^n\mid\langle y,y\rangle=1\}.
$$
As we consider a model consisting of $\ell$ tokens, the resulting state space is the Cartesian product of $\ell$ copies of the $n$-sphere:
\begin{align*}
(\mathbb S^{n-1})^\ell=\underbrace{\mathbb S^{n-1} \times{\hdots}\times\mathbb S^{n-1} }_{\ell\text{-times}}.
\end{align*}
Similarly, consider the sphere projection:
\begin{equation*}
\pi:\mathbb R^n-\{0\}\to\mathbb S^{n-1},\quad x\mapsto\pi(x)=x\,|x|^{-1}.
\end{equation*}
Its tangent map at each \mbox{$x\in\mathbb R^n-\{0\}$}, $T_x\pi:T_x(\mathbb R^n-\{0\})\to T_{\pi(x)}\mathbb S^{n-1}$, is given by:
\begin{equation*}
T_x\pi\cdot X_x =|x|^{-1}\left(\mathbb I_n-x \,x^\top\,|x|^{-2}\right)\cdot X_{x},
\end{equation*}
for each $X_x\in T_x(\mathbb R^n-\{0\})$. In particular, for $z\in\mathbb S^{n-1}$, it reads $T_z\pi\cdot X_z=\left(\mathbb I_n-z\,z^\top\right)\cdot X_z$.

\subsection{The Mamba-2 model}\label{sec:model}

Analogously to transformers, the Mamba-2 model can be described as a sequence-to-sequence map: given $n\in\mathbb N$, the model takes
an input sequence of $\ell\in\mathbb N$ tokens:
\begin{align*}
Z(1) = (z_1(1), \dots, z_\ell(1)) \in (\mathbb{S}^{n-1})^\ell,
\end{align*}
and produces an output sequence:
\begin{align*}
Z(\kappa) = (z_1(\kappa), \dots, z_\ell(\kappa)) \in (\mathbb{S}^{n-1})^\ell,
\end{align*}
where $\kappa\in\mathbb N$ is the depth of the model, \textit{i.e.}, the number of layers, and $z_i(k)$ denotes the $i$-th token at layer $k\in\{1,\hdots,\kappa\}$. Moreover, the output of each layer is dependent on its input and the layer index $k$, described as $Z(k+1)=f(k,Z(k))$, $1\leq k\leq \kappa-1$. In this work, we examine the asymptotic behavior of the model, \textit{i.e.}, its evolution as the number of layers increases indefinitely $\kappa\to\infty$.

Since the Mamba-2 model operates with a separate set of parameters for each of the
$n$ components of a token, we use $\mu \in \{1, \dots, n\}$ to index
quantities that vary across components\footnote{In the machine learning
literature, each component $\mu$ is commonly referred to as a
\emph{channel}.}. Thus, the scalar
$z_i^\mu(k)$ denotes the $\mu$-th entry of the $i$-th token at layer $k$, where $\mu\in\{1,\hdots,n\}$, $i\in\{1,\hdots,\ell\}$ and $k\in\mathbb N$.

Similarly to older models, such as Recurrent Neural Networks, the model maintains a hidden state indexed by each token component, \textit{i.e.}, a vector $h_i^\mu(k) \in \mathbb{R}^m$ that acts as a
compressed memory of the preceding $i-1$ tokens. This hidden state $h_i^\mu(k)$ is computed from $z_i(k)$ and $h_{i-1}^\mu(k)$ by the recurrence relation~\eqref{SSM:recurrence_hidden}, which is a function of the following input-dependent matrices:
\begin{align*}
& A_i^\mu(k) = \exp(-\alpha^\mu(k)\,\Delta^\mu_i(k))\,\mathbb{I}_m\in\mathbb R^{m\times m},\\
& B_i^\mu(k) = \Delta^\mu_i(k)\, S_B(k)\, z_i(k) \in \mathbb{R}^m,
\end{align*}
where $\alpha^\mu(k) \in \mathbb{R}$ is the learnable decay rate, and $\Delta_i^\mu(k) =
\operatorname{softplus}(W_\Delta^\mu(k)\, z_i(k) + b_\Delta^\mu(k))$, with $S_B(k)\in \mathbb{R}^{m \times n}$,
$W_\Delta^\mu(k) \in \mathbb{R}^{1 \times n}$ and
$b_\Delta^\mu(k) \in \mathbb{R}$ consisting of trainable weights. Recall that $\operatorname{softplus}(a)=\ln(\exp(a)+1)$ for each $a\in\mathbb R$.

The recurrence equations for the Mamba-2 consist of two coupled updates
operating along different axes: one propagates the hidden state $h^\mu_i$ along the
sequence (indexed by $i$), and the other propagates the token $z_i$ across layers
(indexed by $k$). The hidden state evolves independently along each
component $\mu$:
\begin{align}\label{SSM:recurrence_hidden}
\boxed{
h_i^\mu(k) = A_i^\mu(k)\, h_{i-1}^\mu(k) + B_i^\mu(k)\, z_i^\mu(k),
}
\end{align}
for each  $\mu\in\{1,\hdots,n\}$, $i\in\{1,\hdots,\ell\}$ and $k\in\mathbb N$, where $h_0^\mu(k) = 0$. The
token update then assembles all components and projects onto the sphere:
\begin{align}\label{SSM:recurrence_update}
\boxed{
z_i(k+1) = \pi\,\big(z_i(k) + \tau\, r_i(k)\big),
}
\end{align}
where $r_i(k)\in \mathbb{R}^n$ is given by\ $r_i^\mu(k) = z_i^\top(k)\,S_C^\top(k)\, h_i^\mu(k)$, $\mu\in\{1,\hdots,n\}$, with $S_C(k)\in\mathbb R^{m\times n}$ consisting of trainable weights, and $\tau\in\mathbb R^+$ being a small scale parameter.

Note that the first equation
propagates the hidden state $h_i^\mu(k)$ along the sequence: for a fixed layer
$k$, it accumulates a compressed representation of tokens $1$ through $i$,
with $A_i^\mu(k)$ controlling how past information decays and $B_i^\mu(k)$
injecting information about the current token. The second equation propagates the tokens across
layers: it updates $z_i(k)$ to $z_i(k+1)$ by reading out from the hidden
state via the output matrix $C_i(k)=z_i^\top(k)\,S_C^\top(k)$, adding the skip connection, and normalizing onto
$\mathbb{S}^{n-1}$. The parameter $\tau \in \mathbb{R}^+$ is a small
training weight that is usually absorbed into the output matrix, as
both are learned during training. Here we keep $\tau$ explicit, as it plays
the role of a step size in the derivation of the continuous-time model in section~\ref{sec:contmodel}.

{
The model presented so far excludes two nonlinearities of the Mamba-2 architecture, which we now make explicit. The first is the output gate. In the architecture, the readout $r_i(k)$ in~\eqref{SSM:recurrence_update} is multiplied elementwise by a gate $g(z_i(k)) \in \mathbb{R}^n$ before the skip connection and normalization, so that the update reads
$
z_i(k+1) = \pi\big(z_i(k) + \tau\, g(z_i(k)) \odot r_i(k)\big),
$
where $\odot$ denotes elementwise multiplication. Similarly to~\cite{vo2025demystifying}, we exclude the gate in order to isolate the recurrence, which is the component through which the tokens interact and the one that plays the role of attention. As we show in Section~\ref{Mamba}, the gate is the main component that attenuates the consensus induced by the recurrence.

The second nonlinearity concerns the matrices $B_i^\mu(k)$ and $C_i(k)$ defined in Section~\ref{sec:model}. Variants of the architecture differ in this respect, and one common choice, present in the Mamba-2 model considered in Section~\ref{Mamba}, is to pass the branches producing $B$ and $C$ through a SiLU nonlinearity. In this case, $S_B(k)z_i(k)$ and $S_C(k)z_i(k)$ are replaced by $\varsigma(S_B(k)z_i(k))$ and $\varsigma(S_C(k)z_i(k))$, respectively, where $\varsigma$ denotes the elementwise SiLU, i.e.,
$
\varsigma(x)^\mu = x^\mu(1+e^{-x^\mu})^{-1},
$
for each $\mu\in\{1,\dots,n\}$. Since the particular nonlinearity used depends on the specific architecture considered, we exclude it from the model and address its effect separately in Lemma~\ref{lemma:silu-bias}.
}

\subsection{Discrete time input-output model}
The recurrence presented in the previous section is written
per-component and depends on the hidden state, which
prevents a direct analysis of the full token dynamics. To
obtain a closed-form expression for the token update, we
unroll the hidden-state recurrence. To this end, we require some prior definitions that simplify the notation.

\subsubsection*{Notations}

For each $i,j\in\{1,\hdots,\ell\}$ with $j\leq i$ and $k\in\mathbb N$, let
$D_{ij}(k) \in \mathbb{R}^{n \times n}$ be the diagonal matrix:
\begin{align*}
D_{ij}(k) = \operatorname{diag}\!\left(
\lambda_{ij}^1(k), \dots,
\lambda_{ij}^n(k)\right),
\end{align*}
where:
\begin{align*}
\triangleright\quad
& \lambda_{ij}^\mu(k) =
\Delta_j^\mu(k)\,\exp(d_{ij}^\mu(k)),\\
\triangleright\quad
& d_{ij}^\mu(k) =
-(1-\delta_{ij})\,\alpha^\mu(k)
\sum_{l=j+1}^{i} \Delta_l^\mu(k),
\end{align*}
with $\delta_{ij}$ denoting the Kronecker's delta. The
eigenvalues of $D_{ij}(k)$ are
$\lambda_{ij}^\mu(k) \in \mathbb{R}^+$,
$1 \le \mu \le n$, with associated unit eigenvectors:
\begin{align*}
\mathfrak{e}^\mu = (0, \dots, 0,
\underbrace{1}_{\mu\text{-th}}, 0, \dots, 0)
\in \mathbb{R}^n.
\end{align*}
We also define the \emph{interaction kernel} as:
\begin{align*}
S_{BC}(k) = S_C^\top(k)\,S_B(k) \in \mathbb{R}^{n \times n}.    
\end{align*}

\begin{remark}[Dependence on the state]\label{remark:dependence}
Recall that the matrix $D_{ij}(k)$ and its eigenvalues $\lambda_{ij}^\mu(k)$, $1\leq\mu\leq n$, depend on the state $Z(k)=(z_1(k),\hdots,z_\ell(k))$. We will make this dependence explicit when necessary: $D_{ij}(k)=D_{ij}(k,Z(k))$ and $\lambda_{ij}^\mu(k)=\lambda_{ij}^\mu(k,Z(k))$.
\end{remark}

\subsubsection*{Input-output equation}

With these definitions in hand, we eliminate the hidden
state from~\eqref{SSM:recurrence_hidden} and~\eqref{SSM:recurrence_update}, and express the token
update in a closed-form function of the tokens alone.

\begin{lemma}[Input-output formulation of Mamba-2 model]\label{SSM:lemma}
A sequence $Z(k)=(z_1(k),\hdots,z_\ell(k))\in(\mathbb S^{n-1})^\ell$, $k\in\mathbb N$, satisfies \eqref{SSM:recurrence_hidden} and \eqref{SSM:recurrence_update}, for some hidden states, if and only if it satisfies the following recurrence equations:
\begin{align*}
\boxed{z_i(k+1)=\pi\bigg(z_i(k)+\tau\sum_{j=1}^i\beta_{ij}(k)\,z_j(k)\bigg),} 
\end{align*}
for each $i\in\{1,\hdots,\ell\}$ and $k\in\mathbb N$, where:
\begin{align*}
& \beta_{ij}(k)=\left(z_i^\top(k)\,S_{BC}(k)\,z_j(k)\right)D_{ij}(k).
\end{align*}
\end{lemma}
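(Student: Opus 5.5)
The plan is to eliminate the hidden state by solving the linear recurrence \eqref{SSM:recurrence_hidden} explicitly for each fixed layer $k$ and channel $\mu$, and then to substitute the result into the readout $r_i(k)$ appearing in \eqref{SSM:recurrence_update}. Note first that $\beta_{ij}(k)$ is an $n\times n$ matrix: it is the scalar $z_i^\top S_{BC} z_j$ times the diagonal matrix $D_{ij}(k)$. The key observation is that, for fixed $k$ and $\mu$, \eqref{SSM:recurrence_hidden} is a scalar-coefficient (since $A_i^\mu(k)$ is a multiple of $\mathbb I_m$) linear time-varying recurrence in $i$ with initial condition $h_0^\mu(k)=0$. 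Its solution is therefore uniquely determined by the tokens $z_1(k),\hdots,z_\ell(k)$.

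First I will prove by induction on $i$ the variation-of-constants formula
\begin{align*}
h_i^\mu(k)=\sum_{j=1}^{i}\Big(\prod_{l=j+1}^{i}e^{-\alpha^\mu(k)\Delta_l^\mu(k)}\Big)\Delta_j^\mu(k)\,S_B(k)\,z_j(k)\,z_j^\mu(k),
\end{align*}
with the empty product equal to $1$. The inductive step is immediate from $h_i^\mu=A_i^\mu h_{i-1}^\mu+B_i^\mu z_i^\mu$. The product of exponentials equals $\exp(d_{ij}^\mu(k))$, where the factor $(1-\delta_{ij})$ in $d_{ij}^\mu$ exactly encodes the empty product when $j=i$. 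Hence the coefficient of $S_B z_j z_j^\mu$ is $\lambda_{ij}^\mu(k)$.

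Next I will compute the readout. Since $\lambda_{ij}^\mu$ and $z_j^\mu$ are scalars, they pull out of the bilinear form:
\begin{align*}
r_i^\mu(k)=z_i^\top S_C^\top h_i^\mu=\sum_{j=1}^{i}\big(z_i^\top S_{BC}(k)z_j\big)\,\lambda_{ij}^\mu(k)\,z_j^\mu(k).
\end{align*}
Assembling the components $\mu=1,\hdots,n$ into a vector, the map $z_j\mapsto(\lambda_{ij}^\mu z_j^\mu)_\mu$ is exactly multiplication by $D_{ij}(k)$. Thus $r_i(k)=\sum_{j\le i}\beta_{ij}(k)z_j(k)$, and inserting this into \eqref{SSM:recurrence_update} gives the boxed equation. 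This proves the ``only if'' direction.

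For the converse, I take a sequence satisfying the boxed equation and define $h_i^\mu(k)$ by the recurrence \eqref{SSM:recurrence_hidden} with $h_0^\mu(k)=0$. By the computation above, the associated readout equals $\sum_j\beta_{ij}z_j$, so \eqref{SSM:recurrence_update} holds. There is no real obstacle in this argument. The only point needing care is the bookkeeping of indices in the product $\prod_{l=j+1}^i$, in particular matching the $j=i$ case with the Kronecker delta in $d_{ij}^\mu$. Everything else is direct substitution.
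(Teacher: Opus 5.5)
Your proposal is correct and follows the paper's proof step for step. You unroll the hidden-state recurrence, identify the product of the $A_l^\mu$ with $\exp(d_{ij}^\mu)$, pull the scalars through $C_i$ to obtain $z_i^\top S_{BC} z_j$, and assemble the channels into $D_{ij}$. Your explicit treatment of the converse, namely that the hidden states are uniquely determined by the recurrence with $h_0^\mu=0$, fills in a direction the paper leaves implicit.
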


\begin{proof}
For each $i,j\in\{1,\hdots,\ell\}$ with $j\leq i$, a straightforward computation yields:
\begin{align*}
\prod_{l=j+1}^i A_l^\mu & =\prod_{l=j+1}^i\exp(-\alpha^\mu\,\Delta_l^\mu\,\mathbb I_m)\\
& =\prod_{l=j+1}^i\exp(-\alpha^\mu\,\Delta_l^\mu)\,\mathbb I_m\\
& =\exp\left(-\alpha^\mu\sum_{l=j+1}^i\Delta_l^\mu\right)\mathbb I_m=\exp(d_{ij}^\mu)\,\mathbb I_m,
\end{align*}
where we omitted the layer index $k\in\mathbb N$ for brevity.
Hence, since $h_0^\mu=0$, by unrolling \eqref{SSM:recurrence_hidden} we obtain:
\begin{align}\label{proof1}
h_i^\mu & =\sum_{j=1}^{i-1}\left(\prod_{l=j+1}^i A_l^\mu\right)B_j^\mu\,z_j^\mu+B_i^\mu\,z_i^\mu=\sum_{j=1}^i\exp(d_{ij}^\mu)B_j^\mu\,z_j^\mu,
\end{align}
where we used that $\exp(d_{ij}^\mu)=1$ when $i=j$. On the other hand, for each $1\leq\mu\leq n$, we have:
\begin{align}\nonumber
C_i\,\exp(d_{ij}^\mu)\,B_j^\mu\,z_j^\mu & =z_i^\top\,S_C^\top\,\exp(d_{ij}^\mu)\,\Delta_j^\mu\,S_B\,z_j\,z_j^\mu\\\nonumber
& =\left(z_i^\top\,S_C^\top\,S_B\,z_j\right)\Delta_j^\mu\,\exp(d_{ij}^\mu)\,z_j^\mu\,\\
\label{proof2}
& =\left(z_i^\top\,S_{BC}\,z_j\right)\Delta_j^\mu\,\exp(d_{ij}^\mu)\,z_j^\mu.
\end{align}

By substituting \eqref{proof1} into \eqref{SSM:recurrence_update} and taking \eqref{proof2} into account, we obtain:
\begin{align*}
C_i(k)\,h_i^\mu(k) &  =C_i(k)\sum_{j=1}^i\exp(d_{ij}^\mu(k))B_j^\mu(k)\,z_j^\mu(k)\\&=\sum_{j=1}^iz_i^\top(k)\,S_{BC}(k)\,z_j(k)\,\Delta_j^\mu\,\exp(d_{ij}^\mu(k))\,z_j^\mu.
\end{align*}
The result now follows from \eqref{SSM:recurrence_update} as well as the definition of $D_{ij}(k)$.
\end{proof}

\subsection{Continuous-time state space model}\label{sec:contmodel}

The next objective is to derive a continuous-time counterpart of the discrete
model in Lemma~\ref{SSM:lemma}, which will allow us to analyze the dynamics of the Mamba-2 model
with classical control-theoretic tools. To this end, we recall how
continuous-time models are obtained from discrete-time ones on manifolds.

For a compact and connected Riemannian manifold $(M,g)$, let
$\mathbf{d}_g:M\times M\to\mathbb{R}_0^+$ denote the induced geodesic distance. The
flow of a vector field $f\in\mathfrak{X}(M)$ is denoted by
$f^\tau:M\to M$, $\tau\in\mathbb{R}_0^+$. Recall that all vector fields on
a compact manifold are (forward) complete. A map
$\phi:M\times \mathbb{R}\to M$ is a first order approximation of $f^\tau(Z)$
if there exist $T\in \mathbb{R}^+$ and
$\sigma:M\to \mathbb{R}_0^+$ such that
$\mathbf{d}_g(f^\tau(Z),\phi(Z,\tau))\leq\sigma(Z)\,\tau^2$ for each
$\tau\in[0,T]$ and $Z\in M$.

Let $M=(\mathbb S^{n-1})^\ell$ be equipped with the Riemannian metric
induced by the Euclidean metric on $(\mathbb{R}^{n})^{\ell}$. Our objective
is to construct a vector field $f$ on $(\mathbb S^{n-1})^\ell$ whose flow
best approximates the discrete recurrence in Lemma~\ref{SSM:lemma}. Given the
discrete update on Lemma~\ref{SSM:lemma}, the best first-order
approximation in $\tau$ yields:
\begin{align*}
\dot z_i & =\left.\frac{d}{d\tau}\right|_{\tau=0}\pi\left(z_i+\tau\,
\sum_{j=1}^i\beta_{ij}\,z_j\right)=T_{z_i}\pi\cdot\sum_{j=1}^i\beta_{ij}\,z_j.
\end{align*}
By replacing the layer index $k\in\mathbb N$ by a continuous variable
$t\in\mathbb{R}_0^+$, which may be regarded as time, the continuous-time SSM dynamics reads:
\begin{align}\label{SSM:cont}
\boxed{\dot z_i=T_{z_i}\pi\cdot\sum_{j=1}^i\beta_{ij}(t)\,z_j,
\quad1\leq i\leq\ell,}
\end{align}
for $Z=(z_1,\hdots,z_\ell)\in(\mathbb S^{n-1})^\ell$, with:
\begin{align*}
\beta_{ij}(t) & =\left(z_i^\top\,S_{BC}(t)\,z_j\right)D_{ij}(t).
\end{align*}

The weight matrices, together with all quantities defined from them, are now functions
of time:
\begin{align*}
& \alpha^\mu:\mathbb R_0^+\to\mathbb R,
&& S_B,~S_C:\mathbb R_0^+\to\mathbb R^{m\times n},\\
& W_\Delta^\mu:\mathbb R_0^+\to\mathbb R^{1\times n},
&& b_\Delta^\mu:\mathbb R_0^+\to\mathbb R.
\end{align*}

Note that, by compactness of $\mathbb{S}^{n-1}$, the continuous SSM
dynamics is forward complete, \textit{i.e.}, solutions are defined for all
positive times under mild regularity assumptions on the parameter functions.

\begin{remark}[Dependence on the state]
Note that, as in Remark~\ref{remark:dependence}, for each $i,j\in\{1,\hdots,\ell\}$ with $j\leq i$ and $t\in\mathbb R_0^+$, the matrix $D_{ij}(t)$ and its eigenvalues $\lambda_{ij}^\mu(t)$, $1\leq\mu\leq n$, depend on the state $Z(t)=(z_1(t),\hdots,z_\ell(t))$. More generally, we can regard them as depending on two independent inputs: time $t\in\mathbb R_0^+$ and state $Z\in(\mathbb S^{n-1})^\ell$.
We will make this dependence explicit when necessary: $D_{ij}(t,Z)$ and $\lambda_{ij}^\mu(t,Z)$.
\end{remark}

\begin{remark}[Causal structure of SSMs]\label{rem:transformer_cascade}
The continuous-time dynamics~\eqref{SSM:cont} for SSMs is remarkably close
to the continuous-time models for the attention mechanism in
transformers~\cite{rodriguez2025consensus}. Crucially, the dynamics of each
token $z_i$ in~\eqref{SSM:cont} depends only on the preceding tokens
$z_1, \ldots, z_{i-1}$ and itself, endowing the system with a triangular,
or cascade, structure. As in our previous analysis of auto-regressive
transformers~\cite{rodriguez2025consensus}, this cascade structure enables a
systematic study of the asymptotic behavior of the entire model by
leveraging tools from control theory, specifically input-to-state stability (ISS).
\end{remark}

\section{Assumptions on the parameters}

Here we introduce the working assumptions on the parameters of the model. Roughly speaking, they are boundedness of the parameters, uniqueness of the principal eigenvalue of $D_{ij}$ and strict positivity of the spectral gap.

The \emph{spectral gap} is the smallest distance from the principal eigenvalue of $D_{ij}$ to the rest of them:
\begin{align*}
\gamma_{ij}=\inf_{(t,Z)\in\mathbb R_0^+\times(\mathbb S^{n-1})^\ell}~\min_{\mu\neq\tilde\mu}\left(\lambda_{ij}^{\tilde\mu}(t,Z)-\lambda_{ij}^\mu(t,Z)\right),
\end{align*}
for each $i,j\in\{1,\hdots,\ell\}$ with $j\leq i$, where $\lambda_{ij}^{\tilde\mu}(t,Z)$ is the largest eigenvalue of $D_{ij}(t,Z)$ for each $(t,Z)\in\mathbb R_0^+\times(\mathbb S^{n-1})^\ell$. Note that $\gamma_{ij}\in\mathbb R_0^+$. In general, $\tilde\mu$ may depend on the time and state, $(t,Z)$. Our assumption below says that, in fact, it is independent of them. 

\begin{assumption}\label{ass:SSM}
The weight matrices of the continuous model~\eqref{SSM:cont} satisfy:
\begin{enumerate}
    \item There exists $\tilde\mu\in\{1,\hdots,n\}$ such that, for each $(t,Z)\in\mathbb R_0^+\times(\mathbb S^{n-1})^\ell$ and $i,j\in\{1,\hdots,\ell\}$ with $j\leq i$, the principal eigenvalue of $D_{ij}(t,Z)$ is $\lambda_{ij}^{\tilde\mu}(t,Z)$. 
    \item $\gamma=\min_{1\leq j\leq i\leq\ell}~\gamma_{ij}\in\mathbb R^+$.
    \item $\max_{1\leq\mu\leq n}~\sup_{t\in\mathbb R_0^+}|\alpha^\mu(t)|\in\mathbb R^+$.
    \item $\max_{1\leq\mu\leq n}~\sup_{t\in\mathbb R_0^+}\|W_\Delta^\mu(t)\|\in\mathbb R^+$.
    \item $\max_{1\leq\mu\leq n}~\sup_{t\in\mathbb R_0^+}\|b_\Delta^\mu(t)\|\in\mathbb R^+$.
    \item $\min_{1\leq\mu\leq n}~\inf_{t\in\mathbb R_0^+}\|b_\Delta^\mu(t)\|\in\mathbb R^+$.
    \item $\sup_{t\in\mathbb R_0^+}\|S_{BC}(t)\|\in\mathbb R^+$.
\end{enumerate}
\end{assumption}

\medskip

\begin{remark}\label{rem:assumptions}
Items 1) and 2) of Assumption~\ref{ass:SSM} ensure that there is a unique,
strictly dominant principal eigenvalue of $D_{ij}(t,Z)$ for all
$(t,Z)\in\mathbb R_0^+\times(\mathbb S^{n-1})^\ell$.

Note that 1) does
not imply 2): since $t$ ranges over the non-compact set $\mathbb{R}_0^+$,
the spectral gap $\gamma_{ij}(t,Z)$ could be strictly positive for each
$(t,Z)$ yet have infimum equal to zero. Similarly, 2) does not imply 1), as it does not guarantee that $\tilde\mu$ is independent of $(t,Z)$. Nevertheless, 2) implies that the principal eigenvalue of $D_{ij}(t,Z)$ is simple. 

Intuitively, the dominant eigenvalue acts as the primary driving force for
the consensus dynamics, enabling us to prove in
Sections~\ref{sec:PE} and~\ref{sec:global} that tokens converge to the
principal eigenvector $\mathfrak{e}^{\tilde\mu}$ (up to sign) as
$t\to\infty$. Items 3)\,--\,7) are boundedness conditions on the learned
weights. They hold whenever the training procedure produces weights that
remain in a compact set, which is standard in practice.

If 1) in Assumption~\ref{ass:SSM} is violated, the index $\tilde\mu$ of the
dominant eigenvalue may change over time: two eigenvalues could coincide on
an open interval, after which a different component becomes dominant. In Remark~\ref{remark:ass1}, we discuss how this situation affects our analysis.
\end{remark}

Compactness of the sphere, combined with continuity and monotonicity of the exponential and softplus functions, together with the bounds on the weights $W_\Delta^\mu$, $b_\Delta^\mu$ and $\alpha^\mu$, lead to bounds on the matrices $D_{ij}$.

\begin{lemma}\label{lemma:lambdabound}
Under Assumption~\ref{ass:SSM}, for each $i,j\in\{1,\hdots,\ell\}$ with $j\leq i$ and $1\leq\mu\leq n$, we have:
\begin{align*}
& \sup_{(t,Z)\in\mathbb R_0^+\times(\mathbb S^{n-1})^\ell}~\lambda_{ij}^\mu(t,Z)\in\mathbb R^+,\\&\inf_{(t,Z)\in\mathbb R_0^+\times(\mathbb S^{n-1})^\ell}~\lambda_{ij}^\mu(t,Z)\in\mathbb R^+.
\end{align*}
In particular, $\sup_{(t,Z)\in\mathbb R_0^+\times\mathbb S^{n-1}}\|D_{ij}(t,Z)\|\in\mathbb R^+$.
\end{lemma}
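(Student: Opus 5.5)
We bound the two factors of $\lambda_{ij}^\mu(t,Z)=\Delta_j^\mu(t,Z)\exp(d_{ij}^\mu(t,Z))$ separately, uniformly in $(t,Z)\in\mathbb R_0^+\times(\mathbb S^{n-1})^\ell$, and then multiply the bounds.

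\emph{Step 1: the argument of the softplus.} For $z_j\in\mathbb S^{n-1}$, Cauchy--Schwarz gives
\begin{align*}
|W_\Delta^\mu(t)\,z_j+b_\Delta^\mu(t)|\leq \|W_\Delta^\mu(t)\|+|b_\Delta^\mu(t)|\leq w+b=:c,
\end{align*}
where $w$ and $b$ are the suprema in items 4) and 5) of Assumption~\ref{ass:SSM}. The softplus is continuous, strictly positive and increasing. Hence
\begin{align*}
0<\operatorname{softplus}(-c)\leq\Delta_j^\mu(t,Z)\leq\operatorname{softplus}(c),
\end{align*}
uniformly in $(t,Z)$ and in $j,\mu$.

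\emph{Step 2: the decay factor.} For $i=j$ we have $d_{ii}^\mu=0$, so the factor equals $1$. For $j<i$,
\begin{align*}
|d_{ij}^\mu(t,Z)|\leq |\alpha^\mu(t)|\sum_{l=j+1}^i\Delta_l^\mu(t,Z)\leq a\,(i-j)\operatorname{softplus}(c)\leq a\,\ell\,\operatorname{softplus}(c),
\end{align*}
where $a$ is the bound from item 3). Therefore $\exp(d_{ij}^\mu)$ lies in $[e^{-a\ell\operatorname{softplus}(c)},\,e^{a\ell\operatorname{softplus}(c)}]$, which is a compact subinterval of $\mathbb R^+$. We do not need the sign of $\alpha^\mu$; boundedness of $|\alpha^\mu|$ suffices.

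\emph{Step 3: combining.} Multiplying the two bounds gives
\begin{align*}
0<\operatorname{softplus}(-c)\,e^{-a\ell\operatorname{softplus}(c)}\leq\lambda_{ij}^\mu(t,Z)\leq\operatorname{softplus}(c)\,e^{a\ell\operatorname{softplus}(c)}<\infty.
\end{align*}
So the infimum and supremum are both positive and finite, i.e., they belong to $\mathbb R^+$. Since $D_{ij}$ is diagonal, $\|D_{ij}\|=\big(\sum_\mu(\lambda_{ij}^\mu)^2\big)^{1/2}\leq\sqrt n\,\max_\mu\sup\lambda_{ij}^\mu$, which gives the final claim. The supremum of $\|D_{ij}\|$ is also positive, since the eigenvalues are positive.

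There is no real obstacle. The only point needing care is that the bounds must be uniform in $t$, because $t$ ranges over a non-compact set. Continuity plus compactness of the sphere would not suffice on its own, which is why we use the explicit suprema from items 3)--5) rather than a compactness argument in $(t,Z)$. Item 6) is not needed for this lemma.
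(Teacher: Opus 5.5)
Your proof is correct and follows the route the paper sketches in the sentence just before the lemma. There, the weight bounds in items 3)--5) together with $|z_j|=1$ keep the softplus argument and the exponent $d_{ij}^\mu$ in bounded sets uniformly in $t$, and monotonicity of softplus and $\exp$ then gives uniform positive lower and finite upper bounds on $\lambda_{ij}^\mu$; you make explicit the constants the paper leaves implicit.
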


To conclude this section, we introduce three sets that will
be used to study the asymptotic stability of the consensus
equilibria and their domains of attraction.

\begin{definition}\label{def:sets}
For each $\mu \in \{1, \dots, n\}$, we define:
\begin{enumerate}
    \item The \emph{consensus set}:
    \begin{align*}
    \mathcal C_\ell(\mu)=\bigcup_{\Sigma\in\{-1,+1\}^\ell}\left\{(\sigma_1\,\mathfrak{e}^{\mu},\hdots,\sigma_\ell\,\mathfrak{e}^{\mu})\in(\mathbb S^{n-1})^\ell\right\},
    \end{align*}
    where $\Sigma = (\sigma_1, \dots, \sigma_\ell)$.
    \item The \emph{spherical cap}:
    \begin{align*}
    \Omega_c^\sigma(\mu) = \{z \in \mathbb{S}^{n-1}
    \mid \sigma\, z^\top \mathfrak{e}^\mu
    = \sigma\, z^\mu > c\},
    \end{align*}
    where $c \in [0,1[$ and $\sigma \in \{-1,+1\}$.
    \item The \emph{equator}: $\mathcal{E}(\mu) = \{z \in \mathbb{S}^{n-1}\mid z^\top \mathfrak{e}^\mu = 0\}$.
\end{enumerate}
\end{definition}

The consensus set $\mathcal{C}_\ell(\mu)$ consists of $2^\ell$
isolated points in $(\mathbb{S}^{n-1})^\ell$, each corresponding to a sign
pattern $\Sigma \in \{-1,+1\}^\ell$. At any consensus equilibrium, every
token is aligned with the eigenvector $\mathfrak{e}^{\mu}$,
up to sign: tokens need not all point in the same direction, but each must lie at one of the two poles\footnote{Here we use the term pole to refer to the antipodal pair $\pm\mathfrak{e}^{\mu}$, that is, the two points where the axis spanned by $\mathfrak{e}^{\mu}$ meets $\mathbb{S}^{n-1}$.} $\pm\mathfrak{e}^{\mu}$. In
Section~\ref{sec:PE}, we prove that each of these equilibria is locally
exponentially stable when $\mu=\tilde\mu$ corresponds to the principal eigenvector of $D_{ij}$. For
comparison, analogous results for
transformers~\cite{rodriguez2024asymptotic,geshkovski2023mathematical}
typically establish attractivity or asymptotic stability on an open
hemisphere. Hence, in that context consensus is understood as all tokens pointing to the same direction. To distinguish both scenarios, the consensus points in $\mathcal C_\ell(\mu)$ are sometimes called \emph{bipartite consensus}, but we will avoid such nomenclature.

\section{Persistency of Excitation\\ of the interaction kernel}\label{sec:PE}

In this section, we prove local exponential stability of the consensus
equilibria under Persistency of Excitation (PE). The remaining cascade
terms are handled by an ISS argument and a recursive choice of sufficiently
small caps. 

\medskip

Following Assumption~\ref{ass:SSM}, for each $i,j\in\{1,\hdots,\ell\}$ with $j\leq i$, we define:
\begin{align*}
\Gamma_{ij} & =\sup_{(t,Z)\in\mathbb R_0^+\times(\mathbb S^{n-1})^\ell}~\max_{\mu\neq\tilde\mu}\bigl(\lambda_{ij}^{\tilde\mu}(t,Z)-\lambda_{ij}^{\mu}(t,Z)\bigr),
\end{align*}
Note that $\Gamma_{ij}\in\mathbb R^+$ thanks to Lemma~\ref{lemma:lambdabound}. This allows for introducing the following parameters:
\begin{align*}
\Gamma & =\max_{1\leq j\leq i\leq\ell}\Gamma_{ij}\in\mathbb R^+,\\
\Lambda & =\max_{2\leq i\leq\ell}\,\sup_{(t,Z)\in\mathbb R_0^+\times(\mathbb S^{n-1})^\ell}\,\sum_{j=1}^{i-1}\lambda_{ij}^{\tilde\mu}(t,Z)\in\mathbb R^+.
\end{align*}

\medskip

Let $c\in\,]0,1[$. For each $(t,z)\in\mathbb R_0^+\times\Omega_c^\sigma(\tilde\mu)$, where $\sigma\in\{-1,+1\}$, we define:
\begin{align}
\label{def:ac}
a_c(t,z)=\begin{cases}
        \gamma c(1+c)\,(z^\top S_{BC}(t)z),
        & z^\top S_{BC}(t)z\in\mathbb R_0^+,\\[0.4em]
        2(\Gamma+\Lambda)\,(z^\top S_{BC}(t)z),
        & z^\top S_{BC}(t)z\in\mathbb R^-.
        \end{cases}
\end{align}

\medskip
We now require that $a_c$ persistently excites the dynamics: there is some $T\in\mathbb R^+$ such that, on average over any interval of size $T$, $a_c$ is positive. In this section we will prove that this is sufficient to establish local exponential stability of the consensus equilibria: intuitively, persistency of excitation allows the first token to converge, and the cascade structure then propagates this convergence through the remaining tokens.

\begin{assumption}[Persistency of excitation]
\label{ass:selfavg}
There exist $c\in\,]0,1[$ and $T,\rho\in\mathbb R^+$ such that $\int_t^{t+T}\alpha_c(s)\,ds\geq\rho T$ for each $t\in\mathbb R_0^+$, where $\alpha_c(t)=\min_{\sigma\in\{-1,+1\}}~\inf_{z\in\Omega_c^\sigma(\tilde\mu)}a_c(t,z)$.
\end{assumption}

\medskip


\begin{remark}[PE and Local Positive Definiteness]\label{remark:positivedefiniteness}
A sufficient condition for PE is $a_c(t,z)\in\mathbb R^+$ for each
$t\in\mathbb R_0^+$ and $z\in\Omega_c^\sigma(\tilde\mu)$ for some $c\in[0,1[$.
This is equivalent to assuming that
$\mathfrak{e}^{\tilde\mu\top} S_{BC}(t)\,\mathfrak{e}^{\tilde\mu}$ is greater than $0$:
positivity at the pole gives, by continuity, positivity on a cap around it,
and conversely positivity on any cap containing the pole gives positivity
at the pole itself. Since $\mathfrak{e}^{\tilde\mu}$ is a coordinate vector,
this is a condition on a single diagonal entry of the interaction kernel,
$\mathfrak{e}^{\tilde\mu\top} S_{BC}(t)\,\mathfrak{e}^{\tilde\mu} =
\big(S_{BC}(t)\big)_{\tilde\mu\tilde\mu}$. PE relaxes it further:
$\big(S_{BC}(t)\big)_{\tilde\mu\tilde\mu}$ is allowed to be negative on
subintervals of time, provided the positive part dominates, on average, over
windows of length $T$.
\end{remark}

Assumption~\ref{ass:selfavg} requires the interaction term to be sufficiently positive on average, rather than pointwise. Whether this property holds depends on the particular weights of the model and architecture considered. For the Mamba-2 architecture studied in Section~\ref{Mamba}, which has a SiLU nonlinearity, the following result provides some motivation for this assumption. In particular, under independently drawn zero-mean weights, the SiLU-modified interaction term has strictly positive expectation, whereas its bilinear counterpart has zero expectation. Thus, while the next result does not establish Assumption~\ref{ass:selfavg}, it shows that, for this particular architectural choice, the SiLU nonlinearity biases the interaction term toward positive values.

\begin{lemma}[Positive bias induced by the nonlinearity]\label{lemma:silu-bias}
Let $S_B, S_C$ be independent random matrices with $\mathbb{E}[S_B] = \mathbb{E}[S_C] = 0$, and let $z_i, z_j \in \mathbb{S}^{n-1}$ be such that, for some $\mu \in \{1,\dots,m\}$, it holds that:
\begin{equation*}
    \begin{aligned}
        \mathbb{P}\big[(S_C z_i)^\mu = 0\big] < 1
  \quad\text{and}\quad
  \mathbb{P}\big[(S_B z_j)^\mu = 0\big] < 1 .
    \end{aligned}
\end{equation*}
Then:
\begin{enumerate}
  \item $\mathbb{E}\big[z_i^\top S_C^\top S_B\, z_j\big] = 0$.\vspace{2mm}
  \item $\mathbb{E}\big[\varsigma(S_C z_i)^\top \varsigma(S_B z_j)\big] > 0$.
\end{enumerate}
\end{lemma}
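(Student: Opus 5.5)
Part 1 follows from independence and linearity of expectation. We write $z_i^\top S_C^\top S_B z_j = \sum_{\nu} (S_C z_i)^\nu (S_B z_j)^\nu$. Independence of $S_B$ and $S_C$ gives
\begin{align*}
\mathbb E\big[(S_C z_i)^\nu (S_B z_j)^\nu\big]=\mathbb E\big[(S_C z_i)^\nu\big]\,\mathbb E\big[(S_B z_j)^\nu\big].
\end{align*}
Since $z_i,z_j$ are deterministic, $\mathbb E[S_C z_i]=\mathbb E[S_C]\,z_i=0$, and likewise $\mathbb E[S_B z_j]=0$. So every summand vanishes. We implicitly assume the entries are integrable.

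For part 2 we use the same decomposition $\varsigma(S_C z_i)^\top\varsigma(S_B z_j)=\sum_\nu \varsigma((S_C z_i)^\nu)\,\varsigma((S_B z_j)^\nu)$. Independence factorizes each expectation:
\begin{align*}
\mathbb E\big[\varsigma(S_C z_i)^\top\varsigma(S_B z_j)\big]=\sum_\nu \mathbb E\big[\varsigma(X_\nu)\big]\,\mathbb E\big[\varsigma(Y_\nu)\big],
\end{align*}
where $X=S_C z_i$ and $Y=S_B z_j$ are zero-mean random vectors. Integrability of $\varsigma(X_\nu)$ follows from $|\varsigma(x)|\le|x|$. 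It then suffices to prove the following scalar claim: for a real random variable $X$ with $\mathbb E[X]=0$, we have $\mathbb E[\varsigma(X)]\ge 0$, with strict inequality if $\mathbb P[X=0]<1$. Given the claim, every summand is nonnegative, and the $\nu=\mu$ summand is a product of two strictly positive factors by the hypotheses. Hence the sum is strictly positive.

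The scalar claim is the key step. We decompose $\varsigma(x)=x\,s(x)$, where $s(x)=(1+e^{-x})^{-1}$ is the logistic function, and write $s(x)=\tfrac12+\psi(x)$ with $\psi$ odd and strictly increasing. Then $\mathbb E[\varsigma(X)]=\tfrac12\mathbb E[X]+\mathbb E[X\psi(X)]=\mathbb E[X\psi(X)]$. Because $\psi$ is odd and strictly increasing, $\psi(0)=0$ and $\psi(x)$ has the same sign as $x$. Therefore $x\psi(x)\ge 0$, with equality only at $x=0$. It follows that $\mathbb E[X\psi(X)]\ge0$, and the expectation is strictly positive whenever $X\ne0$ with positive probability, since the expectation of a nonnegative variable that is positive on a set of positive measure is positive.

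The main obstacle is only this sign argument; once the odd/even splitting of SiLU is in hand, the rest is routine. We should also note that the condition "for some $\mu\in\{1,\dots,m\}$" refers to a component of the $m$-dimensional vectors $S_Cz_i$ and $S_Bz_j$. The argument above does not require the zero-mean hypothesis for the factorization; that hypothesis is only what kills the linear $\tfrac12 X$ term.
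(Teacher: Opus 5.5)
Your proposal is correct and follows essentially the same route as the paper. Both arguments factorize the expectation via independence and split the logistic factor as $\tfrac12+\psi$ (the paper writes $\psi(x)=\tfrac12\tanh(x/2)$ explicitly), so that the zero-mean hypothesis removes the linear term. Both then use $x\psi(x)\ge 0$, with equality only at $x=0$, to get nonnegative summands and a strictly positive $\mu$-th summand. Your integrability remarks via $|\varsigma(x)|\le|x|$ make explicit a point the paper leaves implicit.
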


\begin{proof}
Result 1) follows from independence, since $\mathbb{E}[S_C^\top S_B] = \mathbb{E}[S_C]^\top \mathbb{E}[S_B] = 0$. For 2), the vectors $S_C z_i$ and $S_B z_j$ are independent, and therefore it follows:
\[
  \mathbb{E}\big[\varsigma(S_C z_i)^\top \varsigma(S_B z_j)\big]
  = \mathbb{E}\big[\varsigma(S_C z_i)\big]^\top
    \mathbb{E}\big[\varsigma(S_B z_j)\big].
\]
Since it holds that $(1+e^{-a})^{-1} = \tfrac12 + \tfrac12\tanh(\tfrac{a}{2})$ for each \mbox{$a\in\mathbb{R}$}, the SiLU decomposes as $\varsigma(x) = \tfrac{x}{2} + \varrho(x)$, where \mbox{$\varrho(x)^\mu = \tfrac{x^\mu}{2}\tanh(\tfrac{x^\mu}{2}) \geq 0$}, as $\tanh$ is odd. Computing componentwise, we obtain:
\begin{align*}
    \mathbb{E}[\varsigma(S_C z_i)^\mu] &= \tfrac12 \mathbb{E}[(S_C z_i)^\mu] + \mathbb{E}[\varrho(S_C z_i)^\mu] \\
    &= \mathbb{E}[\varrho(S_C z_i)^\mu] \geq 0,
\end{align*}
for each $\mu \in \{1,\dots,m\}$, since $\mathbb{E}[S_C] = 0$. The same holds for $S_B$, so the inner product above is a sum of products of nonnegative terms. Moreover, $\varrho(a) = 0$ if and only if $a = 0$, so $\mathbb{P}\big[(S_C z_i)^\mu = 0\big] < 1$ yields $\mathbb{E}[\varrho(S_C z_i)^\mu] > 0$, and likewise for $S_B$, so the result follows.
\end{proof}

In Section~\ref{Mamba}, we show the consequence of the positive bias: with weights drawn at random, the tokens cluster when the nonlinearity is present, and spread out when it is removed.

\medskip

Our next step is to introduce the inner product between each token and the candidate equilibrium, and derive its dynamics. This reduces the analysis to a scalar differential equation for each token, in which convergence to consensus amounts to that product reaching one. For convenience, given $\Sigma=(\sigma_1,\hdots,\sigma_\ell)\in\{-1,+1\}^\ell$, tokens are projected to the real line by defining:
\begin{align}\label{PE:bis}
b_i=\sigma_i\,(\mathfrak{e}^{\tilde\mu})^\top\,z_i=\sigma_i\,z_i^{\tilde\mu},\qquad i\in\{1,\hdots,\ell\}.    
\end{align}
The dynamics is obtained from~\eqref{SSM:cont} as $\dot b_i=\sigma_i\,(\mathfrak{e}^{\tilde\mu})^\top\,\dot z_i=\sigma_i\,\dot z_i^{\tilde\mu}$. Note that $z_i\in\Omega_c^{\sigma_i}(\tilde\mu)$ corresponds to $b_i\in\,]c,1]$, and $z_i=\sigma_i\mathfrak e^{\tilde\mu}$ corresponds to $b_i=1$. For convenience, we will also use the following maps as Lyapunov functions:
\begin{align}\label{PE:Vis}
V_i=1-b_i=1-\sigma_i\,\mathfrak e^{\tilde\mu},\quad i\in\{1,\hdots,\ell\}.
\end{align}
Lastly, we define the errors as:
\begin{align}\label{PE:eis}
e_i=z_i-\sigma_i\,\mathfrak{e}^{\tilde\mu},\qquad i\in\{1,\hdots,\ell\}.
\end{align}
An easy check shows that $|e_i|^2 =2\,(1-b_i)$. 

\medskip

The following result, whose proof can be found in the Appendix, is used to show local exponential stability of the consensus equilibria.

\medskip

\begin{proposition}[Bounds for the scalar dynamics]
\label{prop:windowed-ISS}
Let $\Sigma=(\sigma_1,\hdots,\sigma_\ell)\in\{-1,+1\}^\ell$ and consider the projected variables~\eqref{PE:bis} under Assumptions~\ref{ass:SSM} and~\ref{ass:selfavg}. Then:
\begin{enumerate}
    \item For $i=1$, we have $\dot b_1\geq a_c(t,z_1)(1-b_1)$ for each $(t,b_1)\in\mathbb R_0^+\times\,]c,1]$.
    \item For each $i\in\{2,\hdots,\ell\}$ and $\varepsilon\in\mathbb R^+$, there exist $d_\varepsilon\in\,]0,1-c]$ and $M\in\mathbb R^+$ such that $\dot b_i\geq(a_c(t,z_i)-\varepsilon)\,(1-b_i)-M\sum_{j=1}^{i-1}|e_j|$ for each $(t,b_i)\in\mathbb R_0^+\times[1-d_\varepsilon,1]$.
\end{enumerate}
\end{proposition}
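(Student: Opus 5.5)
The plan is to compute $\dot b_i=\sigma_i(\mathfrak e^{\tilde\mu})^\top\dot z_i$ explicitly from \eqref{SSM:cont}, using $T_{z_i}\pi=\mathbb I_n-z_iz_i^\top$. Then I bound each term using the eigenvalue structure of the diagonal matrices $D_{ij}$ and the bounds of Assumption~\ref{ass:SSM} and Lemma~\ref{lemma:lambdabound}. Write $s_{ij}=z_i^\top S_{BC}(t)z_j$.

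\textbf{Self-interaction term (and item 1).} Since $D_{ii}$ is diagonal, $\sigma_i(\mathfrak e^{\tilde\mu})^\top D_{ii}z_i=\lambda_{ii}^{\tilde\mu}b_i$ and $\sigma_i(\mathfrak e^{\tilde\mu})^\top z_iz_i^\top D_{ii}z_i=b_i\sum_\mu\lambda_{ii}^\mu(z_i^\mu)^2$. Using $\sum_\mu(z_i^\mu)^2=1$, the $j=i$ contribution is
\begin{align*}
s_{ii}\,b_i\sum_{\mu\neq\tilde\mu}\bigl(\lambda_{ii}^{\tilde\mu}-\lambda_{ii}^\mu\bigr)(z_i^\mu)^2 .
\end{align*}
Here $\sum_{\mu\neq\tilde\mu}(z_i^\mu)^2=1-b_i^2=(1-b_i)(1+b_i)$, and each gap lies in $[\gamma,\Gamma]$ by Assumption~\ref{ass:SSM}. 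The conclusion splits by the sign of $s_{ii}$.
\begin{itemize}
\item If $s_{ii}\ge0$, the contribution is at least $s_{ii}\gamma b_i(1+b_i)(1-b_i)\geq\gamma c(1+c)s_{ii}(1-b_i)$, since $b_i>c$.
\item If $s_{ii}<0$, it is at least $s_{ii}\Gamma b_i(1+b_i)(1-b_i)\ge2\Gamma s_{ii}(1-b_i)$, since $b_i(1+b_i)\le2$.
\end{itemize}
Since $\Lambda>0$, in both cases the contribution is at least $a_c(t,z_i)(1-b_i)$. For $i=1$ this is the whole dynamics, which proves item 1).

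\textbf{Cross terms (item 2).} For $j<i$ the $j$-th contribution is $s_{ij}\sigma_i\bigl[(\mathfrak e^{\tilde\mu})^\top D_{ij}z_j-b_i\,z_i^\top D_{ij}z_j\bigr]$. I substitute $z_j=\sigma_j\mathfrak e^{\tilde\mu}+e_j$. The map $z_j\mapsto$ (this expression) is Lipschitz on the sphere, uniformly in $t$ and $z_i$. This uses that $S_{BC}$ and the $\lambda_{ij}^\mu$ are bounded, and that $\lambda_{ij}^\mu$ is smooth in $Z$ with bounded derivative, via softplus/exp and items 3)--5) of Assumption~\ref{ass:SSM}. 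So the error is $\ge -M|e_j|$ for some $M\in\mathbb R^+$. One subtlety: $\lambda_{ij}$ also depends on $z_j$ itself, and this is absorbed in the same Lipschitz estimate.

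At $z_j=\sigma_j\mathfrak e^{\tilde\mu}$ the bracket equals $\sigma_j\lambda_{ij}^{\tilde\mu}(1-b_i^2)$. The contribution then becomes $(\sigma_iz_i^\top S_{BC}\mathfrak e^{\tilde\mu})\lambda_{ij}^{\tilde\mu}(1-b_i^2)$, where $\lambda_{ij}^{\tilde\mu}$ is evaluated at the modified state. The key trick is to write $\sigma_i\mathfrak e^{\tilde\mu}=z_i-e_i$, which gives $\sigma_iz_i^\top S_{BC}\mathfrak e^{\tilde\mu}=s_{ii}-z_i^\top S_{BC}e_i$. Since $|e_i|=\sqrt{2(1-b_i)}\le\sqrt{2d_\varepsilon}$, the second piece contributes at least $-2\sup_t\|S_{BC}\|\,\Lambda\sqrt{2d_\varepsilon}\,(1-b_i)$.

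\textbf{Choosing $d_\varepsilon$.} I pick $d_\varepsilon\le1-c$ (so that $b_i>c$ still holds) and small enough that the previous negative piece is $\ge-\varepsilon(1-b_i)$. Summing over $j$, the remaining main part is $s_{ii}(1-b_i^2)\sum_{j<i}\lambda_{ij}^{\tilde\mu}$, split by the sign of $s_{ii}$.
\begin{itemize}
\item If $s_{ii}\ge0$, this part is nonnegative and is discarded, leaving the self-term bound $\gamma c(1+c)s_{ii}(1-b_i)$.
\item If $s_{ii}<0$, it is at least $2\Lambda s_{ii}(1-b_i)$. Added to the self-term bound $2\Gamma s_{ii}(1-b_i)$, this gives $2(\Gamma+\Lambda)s_{ii}(1-b_i)=a_c(t,z_i)(1-b_i)$.
\end{itemize}

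\textbf{Main obstacle.} The delicate step is making the Lipschitz constant $M$ uniform in $t$ and $Z$ and correctly absorbing the $e_i$-dependent remainder. That remainder is not small relative to $(1-b_i)$ unless it carries the extra factor $\sqrt{d_\varepsilon}$, which is exactly why item 2) only holds near the pole, on $[1-d_\varepsilon,1]$.
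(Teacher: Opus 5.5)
Your route is the paper's: the self-term bounded via gaps in $[\gamma,\Gamma]$, the identity $\sigma_i\mathfrak e^{\tilde\mu}=z_i-e_i$ for the main cross term, the same sign split, and the same choice of $d_\varepsilon$. In the case $s_{ii}<0$ your accounting is in fact the right one: you take $2\Gamma s_{ii}(1-b_i)$ from the self-term and $2\Lambda s_{ii}(1-b_i)$ from the cross part. The paper's written chain instead bounds the self-term by $a_c(t,z_i)(1-b_i)$ already and then adds another $2\Lambda s_{ii}(1-b_i)$.

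One step, however, does not go through as written, and your Lipschitz detour causes it. You treat the $j$-th cross term as a Lipschitz function of $z_j$, \emph{including} the dependence of $\lambda_{ij}$ on $z_j$. This leaves each $\lambda_{ij}^{\tilde\mu}$ evaluated at its own state $Z^{(j)}$, in which only $z_j$ is replaced by $\sigma_j\mathfrak e^{\tilde\mu}$. But $z_j$ enters $\lambda_{ij}^{\tilde\mu}$ through the factor $\Delta_j^{\tilde\mu}$, and it enters every $\lambda_{ij'}^{\tilde\mu}$ with $j'<j$ through the decay $\exp(-\alpha^{\tilde\mu}\Delta_j^{\tilde\mu})$. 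Your sum $\sum_{j<i}\lambda_{ij}^{\tilde\mu}(t,Z^{(j)})$ uses two different values of $z_j$ in these places. It is therefore not the value of $\sum_{j<i}\lambda_{ij}^{\tilde\mu}(t,\cdot)$ at any single state, and in general it can exceed $\Lambda$, which is a supremum over single states. For instance, take $\alpha^{\tilde\mu}>0$ with $\Delta_j^{\tilde\mu}$ large at $\sigma_j\mathfrak e^{\tilde\mu}$ and small at $z_j$. Hence your claim that the main part is at least $2\Lambda s_{ii}(1-b_i)$ when $s_{ii}<0$ is unjustified. This is exactly the case where the constant must match the $\Lambda$ built into $a_c$.

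The repair is not to move $\lambda_{ij}$ at all, which is what the paper does. Keep $D_{ij}=D_{ij}(t,Z)$ at the true state and substitute $z_j=\sigma_j\mathfrak e^{\tilde\mu}+e_j$ only in the two explicit occurrences, $s_{ij}$ and $D_{ij}z_j$. The term is bilinear in these, so it splits into two parts:
\begin{itemize}
\item the main term $(\sigma_i z_i^\top S_{BC}\mathfrak e^{\tilde\mu})\,\lambda_{ij}^{\tilde\mu}(t,Z)\,(1-b_i^2)$, whose sum over $j$ is genuinely controlled by $\Lambda$;
\item three remainders that are linear or quadratic in $e_j$, with coefficients bounded by $\sup_t\|S_{BC}(t)\|$ and $\sup\|D_{ij}\|$ (Lemma~\ref{lemma:lambdabound}); using $|e_j|\le 2$, they are all $\geq -M|e_j|$.
\end{itemize}
This needs only boundedness of $\lambda_{ij}$, not Lipschitz regularity in $Z$. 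If you prefer to keep your estimate, you can instead substitute $z_1,\dots,z_{i-1}$ simultaneously, so that all $\lambda_{ij}^{\tilde\mu}$ are evaluated at one state. Alternatively, move each $\lambda_{ij}^{\tilde\mu}$ back to $Z$ at a cost $O(|e_j|)$ absorbed into $M$.

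Finally, a small slip: the $j$-th contribution is $s_{ij}\bigl[\sigma_i(\mathfrak e^{\tilde\mu})^\top D_{ij}z_j-b_i\,z_i^\top D_{ij}z_j\bigr]$. Inside your bracket, $b_i$ should read $z_i^{\tilde\mu}=\sigma_i b_i$. The value you state for the bracket at $z_j=\sigma_j\mathfrak e^{\tilde\mu}$ is nevertheless correct.
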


\medskip

The first result explaining the asymptotic behavior of \eqref{PE:bis} concerns the first token and it establishes exponential convergence to the principal eigenvector of $D_{ij}$.

\medskip

\begin{lemma}[Asymptotic behavior of the first token]\label{lemma:1stPE}
Under Assumptions~\ref{ass:SSM} and~\ref{ass:selfavg}, for each $\sigma_1\in\{-1,+1\}$, the point $z_{\sigma_1}^*=\sigma_1\,\mathfrak{e}^{\tilde\mu}\in\mathcal C_1(\tilde\mu)$ is a locally exponentially stable equilibrium for the dynamics of the first token in model~\eqref{SSM:cont} and its domain of attraction contains the set $\Omega_{c_1}^{\sigma_1}(\tilde\mu)$, where $c_1=1-(1-c)\exp(\eta_0 T)\in\,]c,1[$ and $\eta_0=\min\left\{0,~\inf_{(t,z)\in\mathbb R_0^+\times\Omega_c^{\sigma_1}(\tilde\mu)}~a_c(t,z)\right\}\in\mathbb R_0^-$.
\end{lemma}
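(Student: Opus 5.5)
We work with the Lyapunov function $V_1=1-b_1$ from~\eqref{PE:Vis}, where $b_1=\sigma_1 z_1^{\tilde\mu}$, and note that $|e_1|^2=2V_1$. Proposition~\ref{prop:windowed-ISS}(1) gives, as long as $b_1\in\,]c,1]$, the differential inequality $\dot V_1\leq -a_c(t,z_1)\,V_1$. Since $z_1\in\Omega_c^{\sigma_1}(\tilde\mu)$, the definition of $\alpha_c$ in Assumption~\ref{ass:selfavg} gives $a_c(t,z_1)\geq\alpha_c(t)$, hence $\dot V_1\leq-\alpha_c(t)V_1$. Likewise $a_c(t,z_1)\geq\eta_0$, with $\eta_0\leq 0$. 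The plan is to use the pointwise lower bound $\eta_0$ to control short times, and the averaged lower bound from persistency of excitation to control long times. All of this is valid only while the trajectory stays in the cap $\Omega_c^{\sigma_1}(\tilde\mu)$, so forward invariance is the key point.

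\textbf{Step 1 (invariance).} Let $z_1(0)\in\Omega_{c_1}^{\sigma_1}(\tilde\mu)$, so $V_1(0)<1-c_1=(1-c)e^{\eta_0 T}$. Let $t^*$ be the first exit time from $\Omega_c^{\sigma_1}(\tilde\mu)$, i.e.\ the first time with $V_1(t^*)=1-c$, and suppose for contradiction that $t^*<\infty$. On $[0,t^*]$, Grönwall gives
\[
V_1(t)\leq V_1(0)\exp\Big(-\int_0^t\alpha_c(s)\,ds\Big).
\]
Write $t=NT+r$ with $r\in[0,T[$. Persistency of excitation gives $\int_0^{NT}\alpha_c\geq N\rho T\geq 0$, and on the remainder $\int_{NT}^{t}\alpha_c\geq\eta_0 r\geq\eta_0 T$. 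For the remainder we bound $\alpha_c$ from below by $\eta_0$; this uses $\eta_0\leq\inf a_c$ over the cap, uniformly in $\sigma$. Strictly, $\alpha_c$ takes a minimum over both signs. If the cap infimum in the definition of $\eta_0$ differs between $\sigma=\pm1$, we would define $\eta_0$ via $\alpha_c$ itself, or note that $a_c$ is even in $z$ so the two caps give the same infimum. Hence
\[
V_1(t)\leq V_1(0)\,e^{-\eta_0 T}<1-c
\]
on $[0,t^*]$, which contradicts $V_1(t^*)=1-c$. So the trajectory remains in $\Omega_c^{\sigma_1}(\tilde\mu)$ for all time.

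\textbf{Step 2 (exponential decay).} With invariance established, the same estimate holds for all $t\geq0$:
\[
V_1(t)\leq V_1(0)\,e^{-\eta_0T}\,e^{-\rho(t-T)}=V_1(0)\,e^{(\rho-\eta_0)T}e^{-\rho t}.
\]
Here we used $N\rho T\geq\rho(t-T)$. Since $|e_1|^2=2V_1$, this yields
\[
|z_1(t)-\sigma_1\mathfrak e^{\tilde\mu}|\leq K\,|z_1(0)-\sigma_1\mathfrak e^{\tilde\mu}|\,e^{-\rho t/2},\qquad K=e^{(\rho-\eta_0)T/2}.
\]
This gives local exponential stability, and attraction from the whole of $\Omega_{c_1}^{\sigma_1}(\tilde\mu)$. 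Two further points need checking. First, $z^*$ is an equilibrium: at $z_1=\sigma_1\mathfrak e^{\tilde\mu}$ we have $\beta_{11}z_1\parallel\mathfrak e^{\tilde\mu}$ because $D_{11}$ is diagonal, and the projection $T_{z_1}\pi$ annihilates it. Second, $c_1\in\,]c,1[$, which follows from $0<(1-c)e^{\eta_0T}\leq1-c$. If $\eta_0=0$ then $c_1=c$, and the open interval claim needs $\eta_0<0$ (or should be read as $[c,1[$).

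\textbf{Main obstacle.} The delicate step is the invariance argument. Persistency of excitation only controls the decay on average, so within a single window $V_1$ may grow by up to the factor $e^{-\eta_0T}$. The shrunken cap $c_1$ is chosen exactly to absorb this worst-case transient growth. Moreover, Proposition~\ref{prop:windowed-ISS}(1) applies only inside the original cap, which is why the first-exit-time contradiction is needed.
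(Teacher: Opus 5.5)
Your proposal is correct and follows essentially the same route as the paper: a first-exit-time contradiction from the cap $\Omega_c^{\sigma_1}(\tilde\mu)$, using persistency of excitation over the full windows and the worst-case growth factor $\exp(-\eta_0 T)$ on the last partial window, followed by the same decay estimate with constant $\exp((\rho-\eta_0)T)$. Your side remarks are also right. Evenness of $a_c$ in $z$ gives $\alpha_c\geq\eta_0$ for either sign, and when $\eta_0=0$ one gets $c_1=c$, so the claim $c_1\in\,]c,1[$ in the statement should read $c_1\in[c,1[$.
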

\medskip

\begin{proof}
Note that, using the projected variables~\eqref{PE:bis}, it is enough to show that $b_1^*=1$ is an exponentially stable equilibrium for the dynamics
$\dot b_1=\sigma_1\dot z_1^{\tilde\mu}$ with domain of attraction containing the interval $]c_1,1]$.

Given that $a_c(t,z)\geq\eta_0$ by definition, from the first part of~Proposition~\ref{prop:windowed-ISS}, and using~\eqref{PE:Vis}, we conclude that $\dot V_1\leq-\eta_0\,V_1$ for each $(t,V_1)\in\mathbb R_0^+\times[0,1-c[\,$ and, in particular, for each $(t,V_1)\in\mathbb R_0^+\times[0,1-c_1[\,$. As a result, for each $t\in\mathbb R_0^+$ and $\tau\in[0,T]$ such that $V_1(s)\in[0,1-c_1[$ for each $s\in[t,t+\tau]$, we have:
\begin{align}
\label{eq:first-window-growth}
V_1(t+\tau)\leq\exp(-\eta_0\tau)\,V_1(t).
\end{align}

Next, we show that $V_1(t)\in[0,1-c[$ for each $t\in\mathbb R_0^+$ provided $V_1(0)\in[0,1-c_1[\,$. By contradiction, suppose that $t_0=\inf\{t\in\mathbb R_0^+\mid V_1(t)=1-c\}\in\mathbb R^+$, and write $t_0=(k-1)T+\tau$ for some $k\in\mathbb N$ and $\tau\in[0,T[\,$. Given that $V_1(0)\in[0,1-c_1[\,\subset[0,1-c[\,$, we know that $V_1(t)\in[0,1-c[$ by construction for each $t\in[0,t_0[\,$. Hence, $(1)$ of Proposition~\ref{prop:windowed-ISS} ensures that $\dot V_1(t)\leq -a_c(t,z_1(t))\,V_1(t)$ for each $t\in[0,t_0[\,$. From this and Assumption~\ref{ass:selfavg}, we obtain:
\begin{align*}
V_1((k-1)T) & \leq\exp\left(-\int_{(k-2)T}^{(k-1)T}\alpha_c(t)\,dt\right)V_1((k-2)T)\\
& \leq\exp(-\rho T)\,V_1((k-2)T),
\end{align*}
for each $k\in\mathbb N$. By iterating the previous inequality, we obtain:
\begin{align}\label{proof:1-b0}
V_1((k-1)T) & \leq\exp(-\rho(k-1)T)\,V_1(0),\qquad k\in\mathbb N. 
\end{align}
Thus, $V_1((k-1)T)<V_1(0)<1-c_1$. From this and~\eqref{eq:first-window-growth} with $t=(k-1)T$, we obtain a contradiction:
\begin{align*}
1-c=V_1(t_0) & \leq\exp(-\eta_0\tau)\,V_1((k-1)T)\\
& <\exp(-\eta_0 T)(1-c_1)\\
& =\exp(-\eta_0 T)(1-c)\exp(\eta_0 T)=1-c.
\end{align*}
As a result, $V_1(t)\in[0,1-c[$ for each $t\in\mathbb R_0^+$.

Lastly, for each $t\in\mathbb R_0^+$, let $k\in\mathbb N$ be such that $t\in[(k-1)T,kT[\,$. From~\eqref{eq:first-window-growth} and~\eqref{proof:1-b0}, we conclude:
\begin{align*}
V_1(t) & \leq\exp(-\eta_0(t-(k-1)T))\,V_1((k-1)T)\\
& <\exp(-\eta_0(t-(k-1)T))\exp(-\rho T(k-1))\,V_1(0)\\
& <\exp(-\eta_0T)\exp(-\rho(T(k-1)-t))\exp(-\rho t)\,V_1(0)\\
& <\exp(-\eta_0T)\exp(\rho T)\exp(-\rho t)\,V_1(0)\\
& =M_1\exp(-\rho t)\,V_1(0),
\end{align*}
where $M_1=\exp((\rho-\eta_0)T)\in\mathbb R^+$.
\end{proof}

\medskip

The next result leverages Lemma~\ref{lemma:1stPE} and Proposition~\ref{prop:windowed-ISS} to establish local exponential stability of the consensus set.

\medskip

\begin{theorem}[Local exponential stability of consensus]
\label{theorem:localavg}
Under Assumptions~\ref{ass:SSM} and~\ref{ass:selfavg}, for each
$\Sigma=(\sigma_1,\ldots,\sigma_\ell)\in\{-1,+1\}^\ell$, the consensus
equilibrium
$Z_\Sigma^*=(\sigma_1\,\mathfrak e^{\tilde\mu},\ldots,
\sigma_\ell\mathfrak e^{\tilde\mu})\in\mathcal C_\ell(\tilde\mu)$
is locally exponentially stable for the continuous SSM dynamics~\eqref{SSM:cont}.
\end{theorem}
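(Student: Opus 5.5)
We argue by induction on the token index $i$, exploiting the cascade structure of Remark~\ref{rem:transformer_cascade}. The scalar quantities are the Lyapunov functions $V_i=1-b_i$ from~\eqref{PE:Vis}, and we use $|e_i|^2=2V_i$ to move between $V_i$ and the errors. The base case $i=1$ is Lemma~\ref{lemma:1stPE}: if $V_1(0)<1-c_1$, then $V_1(t)\le M_1 e^{-\rho t}V_1(0)$ for all $t\ge0$, hence $|e_1(t)|\le \sqrt{M_1}\,e^{-\rho t/2}|e_1(0)|$. The inductive hypothesis for index $i-1$ is the following. There are caps of sizes $c^{(1)},\dots,c^{(i-1)}\in\,]c,1[$ and constants $K_{i-1},\rho_{i-1}>0$ such that, whenever $z_j(0)\in\Omega^{\sigma_j}_{c^{(j)}}(\tilde\mu)$ for all $j<i$, we have $\sum_{j<i}|e_j(t)|\le K_{i-1}e^{-\rho_{i-1}t}\sum_{j<i}|e_j(0)|$.

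For the inductive step we apply part~2 of Proposition~\ref{prop:windowed-ISS} with $\varepsilon=\rho/2$. This yields $d_\varepsilon$ and $M$ such that, while $b_i\in[1-d_\varepsilon,1]$,
\begin{align*}
\dot V_i\le -(a_c(t,z_i)-\varepsilon)\,V_i+M\sum_{j<i}|e_j| .
\end{align*}
Along this stretch the homogeneous part behaves like the first-token case. Since $\alpha_c$ averages to at least $\rho$ over windows of length $T$, the shifted rate $a_c-\varepsilon$ averages to at least $\rho/2$. We also have the pointwise lower bound $a_c-\varepsilon\ge\eta_0-\varepsilon$. Repeating the window argument of Lemma~\ref{lemma:1stPE} shows that the transition factor $\Phi(t,s)=\exp\!\bigl(-\int_s^t(a_c-\varepsilon)\bigr)$ satisfies $\Phi(t,s)\le M' e^{-\rho(t-s)/2}$, with $M'=e^{(\rho/2-\eta_0+\varepsilon)T}$. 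Comparison plus variation of constants then gives
\begin{align*}
V_i(t)\le M'e^{-\rho t/2}V_i(0)+M M' K_{i-1}\sum_{j<i}|e_j(0)|\int_0^t e^{-\rho(t-s)/2}e^{-\rho_{i-1}s}ds .
\end{align*}
The integral is bounded by $C e^{-\min(\rho/2,\rho_{i-1})t/2}$, with the slight loss in rate absorbing the case of equal rates. This is an ISS estimate with exponentially decaying input.

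The main obstacle is ensuring $b_i$ never leaves $[1-d_\varepsilon,1]$; this is where the recursive choice of small caps enters. The right-hand side above is bounded uniformly in $t$ by $M'V_i(0)+C'\sum_{j<i}|e_j(0)|$, where $C'=MM'K_{i-1}\cdot 2/\rho$ (using the decaying input). We therefore choose $c^{(i)}$ close enough to $1$, and shrink the earlier caps $c^{(j)}$, $j<i$, if necessary, so that this bound is strictly less than $d_\varepsilon$. Shrinking the earlier caps only strengthens their hypotheses, so the inductive hypothesis survives. Note that $|e_j(0)|^2=2V_j(0)<2(1-c^{(j)})$ is controlled by the cap size. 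A first-exit-time contradiction argument, as in Lemma~\ref{lemma:1stPE}, then shows the solution stays in the region where the differential inequality holds. As a result, $|e_i(t)|=\sqrt{2V_i(t)}$ decays exponentially with a bound linear in $\sum_{j\le i}|e_j(0)|$. Although $V_i$ itself is quadratic in $|e_i|$, we get this linear form because, on a bounded set, $\sqrt{V_i(0)+\sum_{j<i}|e_j(0)|}$ is dominated by $\sum_{j\le i}|e_j(0)|^{1/2}$.

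A second delicate point is that last step: the bound involves square roots, which would only give exponential stability in a Hölder sense. To fix this, I would run the induction directly on $|e_i|$ rather than on $V_i$. From the inequality for $V_i=|e_i|^2/2$ we get $\frac{d}{dt}|e_i|\le-\frac{a_c-\varepsilon}{2}|e_i|+\frac{M\sum_{j<i}|e_j|}{|e_i|}$, which is not linear. Instead, I would sharpen part~2 of Proposition~\ref{prop:windowed-ISS} so that the coupling term is $M|e_i|\sum_{j<i}|e_j|$; this should be possible, since the projected coupling $\sigma_i(\mathfrak e^{\tilde\mu})^\top T_{z_i}\pi\,\beta_{ij}z_j$ vanishes when $z_i=\sigma_i\mathfrak e^{\tilde\mu}$. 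With that form, the estimate above becomes linear in $|e_i|$. After $\ell$ steps, taking the product of caps gives a neighborhood of $Z^*_\Sigma$ on which $\sum_i|e_i(t)|\le K_\ell e^{-\rho_\ell t}\sum_i|e_i(0)|$, which is local exponential stability.
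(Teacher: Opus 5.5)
Your argument is correct, and it follows a different route from the paper's.

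\textbf{How the arguments are organized.} The paper discretizes time into windows of length $T$. Claim~\ref{PE:claim} gives invariance within each window and the recursion $V_i(kT)\le\theta V_i((k-1)T)+H\sum_{j<i}\sqrt{2M_jd_j}\,e^{-\rho_j(k-1)T/2}$. The discrete ISS Lemma~\ref{lemma:discrete-comp} closes this recursion, and invariance is secured by fixing in advance an increasing sequence of cap sizes $d_i$, each exceeding a multiple of $\sum_{j<i}\sqrt{2M_jd_j}$ (Lemma~\ref{lemma:cisavg}). You instead bound the continuous transition factor $\Phi(t,s)$ directly from Assumption~\ref{ass:selfavg}. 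This is legitimate because the averaging condition holds for windows starting at every $t$. You then use variation of constants, a first-exit argument, and shrinking of the earlier caps. The shrinking is harmless, since $K_{i-1},\rho_{i-1}$ remain valid on subsets. This replaces both technical lemmas and is shorter.

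\textbf{What the final estimate gives.} The paper keeps the coupling as $M\sum_{j<i}|e_j|$ and concludes $V_i(t)\le M_id_ie^{-\rho_it}$. That bound is proportional to the cap size $d_i$, not to the initial deviation. It yields uniform exponential convergence on the product of caps, and stability by shrinking the caps. It does not yield a bound linear in $|e(0)|$, and the paper's restatement $|e_j(t)|\le\sqrt{M_j}e^{-\rho_jt/2}|e_j(0)|$ in its induction hypothesis is not supported by it. As you note, with that coupling one can at best get $|e_i|$ of order $(\sum_{j<i}|e_j(0)|)^{1/2}$. Your sharpened coupling bound is what delivers the standard estimate $\sum_i|e_i(t)|\le K_\ell e^{-\rho_\ell t}\sum_i|e_i(0)|$.

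\textbf{Justifying the sharpened bound.} The bound is true, but ``vanishes at the pole'' is not by itself a proof. Each of \eqref{PE:dotbi3}--\eqref{PE:dotbi5} has the form $\sigma_i(\mathfrak e^{\tilde\mu})^\top T_{z_i}\pi\,w_j$ with $|w_j|\le C|e_j|$. This follows from Lemma~\ref{lemma:lambdabound}, item 7) of Assumption~\ref{ass:SSM}, and $|e_j|\le 2$. Since $T_{z_i}\pi$ is symmetric and $|T_{z_i}\pi\,\mathfrak e^{\tilde\mu}|=\sqrt{1-b_i^2}\le|e_i|$, each term is at most $C|e_i||e_j|$.

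\textbf{Two clean-ups.}
\begin{enumerate}
\item Handle the non-differentiability of $|e_i|$ where $e_i=0$, for example by working with $\sqrt{V_i+\delta}$ and letting $\delta\to0$.
\item Delete the sentence claiming a linear bound via square roots. It is false, and your fix supersedes it.
\end{enumerate}
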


\medskip

\begin{proof}
Given $\varepsilon\in\,]0,\rho[\,$, let $d_\varepsilon\in\,]0,1-c]$ and $M\in\mathbb R^+$ as in Proposition~\ref{prop:windowed-ISS}. In addition, let:
\begin{align*}
\eta_\varepsilon=\min\left\{\varepsilon,~\min_{\sigma\in\{-1,+1\}}~\inf_{(t,z)\in\mathbb R_0^+\times\Omega_c^\sigma(\tilde\mu)}~a_c(t,z)\right\}-\varepsilon\in\mathbb R_0^-.
\end{align*}
Let $\theta=\exp(-(\rho-\varepsilon)T)\in\,]0,1[\,$, $r=\exp(-\eta_\varepsilon T)\in[1,\infty[$, $R\in\,]r,\infty[$ and $H=T\,M\exp(-\eta_\varepsilon T)\in\mathbb R^+$. Following Lemma~\ref{lemma:1stPE}, let $M_1\in[1,\infty[$ and $\rho_1\in\mathbb R^+$ be such that:
\begin{align}\label{PE:exp1st}
V_1(t)\leq M_1\,\exp(-\rho_1 t)\,V_1(0),
\end{align}
for each $V_1(0)\in[0,d_1[\,$, where $d_1=1-c_1$ and $c_1=1-(1-c)\exp(\eta_0 T)\in\,]c,1[\,$. For $i\in\{2,\hdots,\ell\}$, we recursively define:
\begin{align}\label{eq:rhonui}
& \nu_i=\frac{1}{2}\min\left\{\rho_1,\hdots,\rho_{i-1}\right\},\\\label{eq:rhonui2}
& \rho_i=\frac{1}{2}\min\left\{-\frac{\log(\theta)}{T},\nu_i\right\}\in\mathbb R^+.
\end{align}
In addition, let $C_i=C(T,\theta,\rho_i)\in[1,\infty[$ be as in Lemma~\ref{lemma:discrete-comp} and:
\begin{align}\label{eq:Mi}
M_i\in[\max\{1,\exp(\rho_i T)(C_i r+1-\theta)\},\infty[\,.
\end{align}
Lastly, let $(d_1,\hdots,d_\ell)$ be the sequence given by Lemma~\ref{lemma:cisavg}, and denote $c_i=1-d_i\in\,]c,1[$ for $i\in\{1,\hdots,\ell\}$.

\medskip

For each $i\in\{1,\hdots,\ell\}$, the dynamics of the $i$-th token only depends on the previous ones $j\in\{1,\hdots,i\}$. Thus, we proceed by induction on the token index.

\medskip

\textbf{Base case.} From Lemma~\ref{lemma:1stPE}, the equilibrium $Z_\Sigma^*(1)=\sigma_1\,\mathfrak e^{\tilde\mu}\in\mathcal C_1(\tilde\mu)$ is locally exponentially stable for the subsystem of~\eqref{SSM:cont} given by the first token, and its domain of attraction contains the set $\Omega_{c_1}^{\sigma_1}(\tilde\mu)$. Specifically,~\eqref{PE:exp1st} holds.

\medskip

\textbf{Induction hypothesis.} Given $i\in\{2,\hdots,\ell\}$, the equilibrium $Z_\Sigma^*(i-1)=(\sigma_1\,\mathfrak{e}^{\tilde\mu},\hdots,\sigma_{i-1}\,\mathfrak{e}^{\tilde\mu})\in\mathcal C_{i-1}(\tilde\mu)$, is exponentially stable for the subsystem of~\eqref{SSM:cont} given by the first $i-1$ tokens and its domain of attraction contains the set $\Omega_{c_1}^{\sigma_1}(\tilde\mu)\times\hdots\times\Omega_{c_{i-1}}^{\sigma_{i-1}}(\tilde\mu)$. In terms of~\eqref{PE:Vis}, the previous condition reads:
\begin{align*}
V_j(t)\leq M_j\,d_j\,\exp(-\rho_j\,t),\qquad V_j(0)\in[0,d_j[\,,
\end{align*}
for each $j\in\{1,\hdots,i-1\}$. Analogously, in terms of the error~\eqref{PE:eis}, we may write:
\begin{align}\label{proof:exponential}
|e_j(t)|\leq\sqrt{M_j}\,\exp\left(-\frac{\rho_j}{2}t\right)\,|e_j(0)|,
\end{align}
for each $|e_j(0)|\in[0,\sqrt{2d_j}[\,$.

\medskip

\textbf{Inductive step.}  We need to show that $b_i^*=1$ is a locally exponentially stable equilibrium for the projected dynamics~\eqref{PE:bis} and its domain of attraction contains the set $]c_i,1]$. To that end, we use the following claim, whose proof can be found in the Appendix.

\medskip

\begin{claim}\label{PE:claim} 
Let $k\in\mathbb N\cup\{0\}$ and denote $K=\sum_{j=1}^{i-1}M\sqrt{2M_jd_j}\exp\left(-\frac{\rho_j}{2}(k-1)T\right)$. If $V_i((k-1)T)\in[0,d_i[\,$, then:
\begin{enumerate}
    \item $V_i(t)\in[0,Rd_i[$ for each $t\in[(k-1)T,kT[\,$.
    \item $V_i(kT)\leq\theta\,V_i((k-1)T)+KT\exp(-\eta_\varepsilon T)$.
    \item $V_i(kT)\in[0,d_i[\,$.
\end{enumerate}
\end{claim}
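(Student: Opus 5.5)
We work on the window $[(k-1)T,kT[$ and use part 2) of Proposition~\ref{prop:windowed-ISS} in Lyapunov form. Writing $V_i=1-b_i$ and $a_c\ge \eta_\varepsilon+\varepsilon$ (so $a_c-\varepsilon\ge\eta_\varepsilon$), part 2) gives, while $V_i\le d_\varepsilon$,
\begin{align*}
\dot V_i\leq-(a_c(t,z_i)-\varepsilon)\,V_i+M\sum_{j=1}^{i-1}|e_j(t)|.
\end{align*}
By the induction hypothesis~\eqref{proof:exponential}, with $|e_j(0)|<\sqrt{2d_j}$, we have $M\sum_{j<i}|e_j(t)|\le K$ for all $t\ge (k-1)T$. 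The exponential decay of $e^{-\rho_j t/2}$ makes the bound at the left endpoint of the window valid on the whole window. We assume the sequence $(d_1,\dots,d_\ell)$ of Lemma~\ref{lemma:cisavg} is chosen so that $Rd_i\le d_\varepsilon$. Then Proposition~\ref{prop:windowed-ISS} applies as long as $V_i<Rd_i$, and we may also assume that $KT e^{-\eta_\varepsilon T}$ is small relative to $d_i$. I expect Lemma~\ref{lemma:cisavg} to supply exactly these smallness relations.

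\textbf{Item 1.} We argue by contradiction. Let $t_0$ be the first time in the window at which $V_i=Rd_i$. On $[(k-1)T,t_0]$ we have $\dot V_i\le-\eta_\varepsilon V_i+K$. Gronwall's inequality, using $\eta_\varepsilon\le 0$ and $t_0-(k-1)T<T$, then gives
\begin{align*}
V_i(t_0)\le e^{-\eta_\varepsilon T}V_i((k-1)T)+TKe^{-\eta_\varepsilon T}<rd_i+H\sum_j\sqrt{2M_jd_j}.
\end{align*}
Since $R>r$, the right-hand side is below $Rd_i$ provided $H\sum_j\sqrt{2M_jd_j}\le (R-r)d_i$. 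This is the recursive smallness condition on the caps, which I expect Lemma~\ref{lemma:cisavg} to provide, and it yields the contradiction.

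\textbf{Item 2.} Knowing item 1, the differential inequality holds on the whole window. The variation of constants formula gives
\begin{align*}
V_i(kT)\le \exp\Big(-\!\int_{(k-1)T}^{kT}(a_c(s,z_i(s))-\varepsilon)ds\Big)V_i((k-1)T)+\int_{(k-1)T}^{kT}e^{-\int_s^{kT}(a_c-\varepsilon)}K\,ds.
\end{align*}
In the first term we use $a_c(s,z_i(s))\ge\alpha_c(s)$, valid because $z_i$ stays in the cap $\Omega_c^{\sigma_i}$ (as $Rd_i\le d_\varepsilon\le 1-c$). Assumption~\ref{ass:selfavg} then bounds the exponent by $-(\rho-\varepsilon)T$, so the factor is $\theta$. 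In the second term we bound the inner exponential by $e^{-\eta_\varepsilon T}$, which gives $KTe^{-\eta_\varepsilon T}$.

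\textbf{Item 3.} This follows from item 2 once $\theta V_i((k-1)T)+KTe^{-\eta_\varepsilon T}<d_i$. Since $K\le\sum_j M\sqrt{2M_jd_j}$ for $k\ge1$, it suffices that $H\sum_j\sqrt{2M_jd_j}\le(1-\theta)d_i$, again a condition from Lemma~\ref{lemma:cisavg}. The case $k=0$ needs care because the factor $\exp(\rho_jT/2)$ enlarges $K$, and this must be absorbed into the constants. The main obstacle is to check that the sequence $d_i$ of Lemma~\ref{lemma:cisavg} makes all three smallness conditions hold simultaneously, which requires the square roots $\sqrt{d_j}$ of the earlier caps to dominate $d_i$ appropriately. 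If the lemma is phrased differently, I would first try to choose $d_i$ recursively as the minimum of $d_\varepsilon/R$ and the quantities $\big(\min\{R-r,1-\theta\}\big)d_i$-type bounds derived above.
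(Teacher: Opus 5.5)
Your proposal is correct and follows essentially the same route as the paper: a first-exit contradiction with a Gr\"onwall bound for item 1, a variation-of-constants bound combined with Assumption~\ref{ass:selfavg} for item 2, and cap smallness for item 3, where Lemma~\ref{lemma:cisavg} supplies exactly the relations you anticipated, namely $Rd_i<d_\varepsilon$ and $H\sum_{j<i}\sqrt{2M_jd_j}<\min\{1-\theta,R-r\}\,d_i$. The one point to correct is $k=0$: it is not a matter of absorbing $\exp(\rho_jT/2)$ into constants, because the window $[-T,0[$ lies outside the time domain, where neither the solution nor Assumption~\ref{ass:selfavg} is available, so the claim should be read for $k\in\mathbb N$, which is how the paper uses it.
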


\medskip

An induction argument using Claim~\ref{PE:claim} shows that, if $V_i(0)\in[0,d_i[\,$, then:
\begin{align}\label{claim:1}
V_i(kT)\leq & \theta\,V_i((k-1)T)\\\nonumber
& +H\sum_{j=1}^{i-1}\sqrt{2M_jd_j}\exp\left(-\frac{\rho_j}{2}(k-1)T\right),
\end{align}
for each $k\in\mathbb N\cup\{0\}$.

\medskip

On the other hand, using~\eqref{eq:rhonui},~\eqref{eq:rhonui2} and $(1-\theta)d_i\geq H\sum_{j=1}^{i-1}\sqrt{2M_jd_j}$ (recall Lemma~\ref{lemma:cisavg}), we obtain:
\begin{align*}
& \sum_{j=1}^{i-1}\sqrt{2M_jd_j}\,\exp\left(-\frac{\rho_j}{2}(k-1)T\right)\\
& \leq\sum_{j=1}^{i-1}\sqrt{2M_jd_j}\exp(-\nu_i(k-1)T)\\
& \leq\frac{1-\theta}{H}\exp(-\nu_i(k-1)T)\,d_i,
\end{align*}
for each $k\in\mathbb N$. From this and~\eqref{claim:1}, we obtain:
\begin{align*}
V_i(kT)\leq & \theta\,V_i((k-1)T))+(1-\theta)\,d_i\,\exp(-\nu_i(k-1)T),
\end{align*}
for each $k\in\mathbb N$. Hence, Lemma~\ref{lemma:discrete-comp} with $\nu=\nu_i$, $d=d_i$, $\varrho=\rho_i$ and $y_k=1-b_i(kT)$ for $k\in\mathbb N\cup\{0\}$, leads to:
\begin{align}\label{proof:Cineq}
V_i(kT)\leq C(T,\theta,\rho_i)\,d_i\,\exp(-\rho_i kT),\qquad k\in\mathbb N.
\end{align}

To conclude, let $V_i(0)\in[0,c_i[$ and $t\in\mathbb R_0^+$. For convenience, we write $t=(k-1)T+\tau$ for some $k\in\mathbb N$ and $\tau\in[0,T[\,$. As above, from $1)$ of Claim~\ref{PE:claim} and Proposition~\ref{prop:windowed-ISS}, we obtain:
\begin{align*}
\dot V_i\leq-\eta_\varepsilon\,V_i+M\sum_{j=1}^{i-1}\sqrt{2M_jd_j}\exp\left(-\frac{\rho_j}{2}(k-1)T\right),
\end{align*}
for each $s\in[(k-1)T,t]$, where~\eqref{proof:exponential} has been used. By integrating the previous expression between $(k-1)T$ and $t$, we obtain:
\begin{align*}
V_i(t)\leq & r\,V_i((k-1)T)\\
& +H\,\sum_{j=1}^{i-1}\sqrt{2M_jd_j}\exp\left(-\frac{\rho_j}{2}(k-1)T\right).
\end{align*}

By gathering the previous expression and~\eqref{proof:Cineq}, we conclude:
\begin{align*}
V_i(t) & \leq\big(C(T,\theta,\rho_i)\,r+1-\theta\big)\,d_i\,\exp(-\rho_i(k-1)T)\\
& \leq M_i\,d_i\,\exp(-\rho_i t),
\end{align*}
where we used~\eqref{eq:rhonui},~\eqref{eq:rhonui2} and $H\sum_{j=1}^{i-1}\sqrt{2M_jd_j}\leq(1-\theta)d_i$, as well as~\eqref{eq:Mi}.
\end{proof}

\medskip

\begin{remark}\label{remark:ass1}
Note that 1) in Assumption~\ref{ass:SSM} is key in our proof. When this assumption is violated, our analysis applies on each interval where a single eigenvalue of $D_{ij}$ dominates, in which case the tokens realign to the new principal eigenvector. On the intervals where the principal eigenvalue is not simple, tokens converge to the intersection of the principal eigenspace with the sphere, which defines a submanifold instead of two isolated points.
\end{remark}

\medskip

As mentioned in Remark~\ref{remark:positivedefiniteness}, when $S_{BC}(t)$ is locally positive definite, Assumption~\ref{ass:selfavg} is automatically satisfied, which yields the following particular case of the previous result. 

\medskip

\begin{corollary}
Suppose that Assumption~\ref{ass:SSM} holds and there exists $c\in\,]0,1[$ such that:
\begin{align*}
\inf_{(t,z)\in\mathbb R_0^+\times\Omega_c^{+1}(\tilde\mu)}~z^\top\,S_{BC}(t)\,z\in\mathbb R^+.    
\end{align*}
Then, for each $\Sigma=(\sigma_1,\ldots,\sigma_\ell)\in\{-1,+1\}^\ell$, the consensus
equilibrium
$Z_\Sigma^*=(\sigma_1\,\mathfrak e^{\tilde\mu},\ldots,
\sigma_\ell\mathfrak e^{\tilde\mu})\in\mathcal C_\ell(\tilde\mu)$
is locally exponentially stable for the continuous SSM dynamics~\eqref{SSM:cont}.
\end{corollary}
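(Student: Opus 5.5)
The plan is to reduce the corollary to Theorem~\ref{theorem:localavg} by showing that the hypothesis implies Assumption~\ref{ass:selfavg}. Once Assumption~\ref{ass:selfavg} holds, the conclusion follows verbatim from the theorem. The work is therefore to find $c'\in\,]0,1[$ and $T,\rho\in\mathbb R^+$ such that $\int_t^{t+T}\alpha_{c'}(s)\,ds\geq\rho T$ for every $t\in\mathbb R_0^+$. I will use the same cap parameter, $c'=c$, and in fact aim for the stronger pointwise bound $\alpha_c(t)\geq\rho$ for all $t$. With that bound, any $T\in\mathbb R^+$ works.

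The first step handles the sign $\sigma=-1$, since the hypothesis only concerns the cap $\Omega_c^{+1}(\tilde\mu)$. The quadratic form is even: $(-z)^\top S_{BC}(t)(-z)=z^\top S_{BC}(t)z$. The map $z\mapsto -z$ is a bijection from $\Omega_c^{-1}(\tilde\mu)$ onto $\Omega_c^{+1}(\tilde\mu)$. Hence
\[
\inf_{(t,z)\in\mathbb R_0^+\times\Omega_c^{-1}(\tilde\mu)} z^\top S_{BC}(t)z=\inf_{(t,z)\in\mathbb R_0^+\times\Omega_c^{+1}(\tilde\mu)} z^\top S_{BC}(t)z=:\kappa\in\mathbb R^+ .
\]

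The second step evaluates $a_c$. Because $z^\top S_{BC}(t)z\geq\kappa>0$ on both caps, only the first branch of~\eqref{def:ac} is ever active there. On both caps this gives
\[
a_c(t,z)=\gamma c(1+c)\,z^\top S_{BC}(t)z\geq\gamma c(1+c)\kappa .
\]
Here $\gamma\in\mathbb R^+$ by item~2) of Assumption~\ref{ass:SSM}, and $c\in\,]0,1[$. Taking the minimum over $\sigma$ and the infimum over $z$, we obtain $\alpha_c(t)\geq\rho:=\gamma c(1+c)\kappa>0$ for every $t$. Integrating over any window of length $T$ then yields Assumption~\ref{ass:selfavg}.

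The only point needing care is the measurability or integrability of $\alpha_c$, which is implicit in Assumption~\ref{ass:selfavg}. Since $\alpha_c$ is bounded below by the positive constant $\rho$, and bounded above by item~7) of Assumption~\ref{ass:SSM}, the integral makes sense under the same regularity conventions the paper already adopts for the parameter functions. I expect no real obstacle here. The only subtlety is the symmetry step that transfers the hypothesis from $\Omega_c^{+1}$ to $\Omega_c^{-1}$, as already noted in Remark~\ref{remark:positivedefiniteness}.
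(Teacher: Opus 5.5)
Your proposal is correct and follows the paper's route. The paper also obtains the corollary by noting, as in Remark~\ref{remark:positivedefiniteness}, that uniform positivity of $z^\top S_{BC}(t)z$ on the cap forces Assumption~\ref{ass:selfavg}, and then applying Theorem~\ref{theorem:localavg}; you make this explicit via the antipodal symmetry $\Omega_c^{-1}(\tilde\mu)=-\Omega_c^{+1}(\tilde\mu)$ and the uniform pointwise bound $\alpha_c(t)\geq\gamma c(1+c)\inf_{(t,z)}z^\top S_{BC}(t)z>0$.
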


{
As mentioned in Section~\ref{sec:model}, different Mamba-2 variants employ different nonlinearities, and the model adopted in this paper does not include the SiLU nonlinearity present in the Mamba-2 model considered in Section~\ref{Mamba}. In that architecture, SiLU is applied to the branches producing $B$ and $C$, so that $S_B(k)z_j(k)$ and $S_C(k)z_i(k)$ are replaced by $\varsigma(S_B(k)z_j(k))$ and $\varsigma(S_C(k)z_i(k))$, respectively. Assumption~\ref{ass:selfavg} and Theorem~\ref{theorem:localavg} can be suitably reformulated to account for this modification, and the corresponding arguments carry over \emph{mutatis mutandis} after replacing the bilinear interaction term by its nonlinear counterpart.

}

\section{Global positive definiteness of the interaction kernel}\label{sec:global}

In this section, the objective is to describe the domain of attraction of the consensus equilibria. An attentive reader may note that Assumption~\ref{ass:selfavg} is in practice difficult to verify, as it requires computing an integral of an infimum over the embedding space in which the tokens lie. To obtain a more practical characterization, we strengthen the assumption to global positive definiteness. A PE variant could be considered as well, with qualitatively similar results, nonetheless we adopt the pointwise condition here for cleaner exposition and ease of comparison with the standard positive-definite setting.

\medskip

\begin{assumption}\label{ass:SBCglobal}
There exists $\alpha\in\mathbb R^+$ such that $S_{BC}(t)\succ\alpha\,\mathbb I_n$ for each $t\in\mathbb R_0^+$.
\end{assumption}
\medskip

Let us introduce some parameters that will be helpful to characterize the domain of attraction of the consensus equilibria:

\medskip

\begin{enumerate}
    \item\emph{Anisotropy measure}:
    \begin{align*}
    c_\star=\frac{\sup_{t\in\mathbb R_0^+}\|S_{BC}(t)-((\mathfrak{e}^{\tilde\mu})^\top\,S_{BC}(t)\,\mathfrak{e}^{\tilde\mu})\mathbb I_n\|}{\inf_{(t,z)\in\mathbb R_0^+\times\mathbb S^{n-1}}z^\top\,S_{BC}(t)\,z}.
    \end{align*}
    \item\emph{Uniform eigenvalue upper bound}:
    \begin{align*}
    \lambda_{\max}=\max_{1\leq j\leq i\leq\ell}~\sup_{(t,Z)\in\mathbb R_0^+\times(\mathbb S^{n-1})^\ell}~\lambda_{ij}^{\tilde\mu}(t,Z).
    \end{align*}
    \item\emph{Uniform eigenvalue lower bound}:
    \begin{align*}
    \lambda_{\min}=\min_{1\leq j\leq i\leq \ell}~\inf_{(t,Z)\in\mathbb R_0^+\times(\mathbb S^{n-1})^\ell}~\lambda_{ij}^{\tilde\mu}(t,Z).
    \end{align*}
    \item\emph{Effective threshold}:
    \begin{align*}
    c_\textrm{eff}=\frac{c_\star\,\lambda_{\max}}{\sqrt{(\gamma+(\ell-1)\,\lambda_{\min})^2+c_\star^2\,\lambda_{\max}^2\,(\ell-1)^2}}
    \end{align*}
\end{enumerate}

\medskip

From Lemma~\ref{lemma:lambdabound}, it is clear that $c_\star\in\mathbb R_0^+$, $\lambda_{\max},\lambda_{\min}\in\mathbb R^+$ and $c_\textrm{eff}\in\,[0,1[\,$.
\medskip

\begin{remark}
The anisotropy measure $c_\star$ quantifies how far $S_{BC}(t)$ deviates
from being a scalar multiple of the identity in the
$\mathfrak{e}^{\tilde\mu}$ direction: when $S_{BC}(t)$ is isotropic,
$c_\star = 0$. The effective threshold $c_\textrm{eff}$ determines the
boundary of the domain of attraction in Theorem~\ref{theorem:global}. Note
that $c_\textrm{eff}$ is an increasing function of $c_\star$: the more
anisotropic the interaction kernel, the larger the set of initial conditions
that may fail to reach consensus. In the isotropic case $c_\star = 0$, we
obtain $c_\textrm{eff} = 0$.
\end{remark}

\medskip

Let us introduce the \emph{trapping set} as:
\begin{align*}
\mathcal B_\ell(\tilde\mu)= & \{Z_0\in(\mathbb S^{n-1})^\ell\mid~\exists\,i\in\{1,\hdots,\ell\}\\
& ~\text{such that}~\limsup_{t\to\infty}|z_i^{\tilde\mu}(t,Z_0)|\leq c_\textrm{eff}\},
\end{align*}
where $Z(t,Z_0)=(z_1(\cdot,Z_0),\hdots,z_\ell(\cdot,Z_0)):\mathbb R_0^+\to(\mathbb S^{n-1})^\ell$ denotes the solution of~\eqref{SSM:cont} with initial condition $Z_0$ at $t=0$.

As in section~\ref{sec:PE}, we work with the scalar variables~\eqref{PE:bis}, the functions~\eqref{PE:Vis} and the errors~\eqref{PE:eis}. Analogous to Proposition~\ref{prop:windowed-ISS}, the following result, whose proof can be found in the Appendix, provides bounds that will be useful in the proof of Theorem~\ref{theorem:global}.

\medskip

\begin{proposition}\label{prop:dotbibound}
Let $Z_0\in(\mathbb S^{n-1})^\ell-\mathcal B_\ell(\tilde\mu)$, $\Sigma\in\{-1,+1\}^\ell$, and $c\in\,]c_\textrm{eff},b^\textrm{max}[\,$, where $b^\textrm{max}=\min_{i\in\{1,\hdots,\ell\}}\limsup_{t\to\infty}|b_i(t)|$. Then, under Assumptions~\ref{ass:SSM} and~\ref{ass:SBCglobal}, there exist $\kappa,M\in\mathbb R^+$ such that, for each $t\in I_{c,i}^\Sigma(Z_0,\tilde\mu)=\{t\in\mathbb R_0^+\mid b_i(t)\in[c,1]\}$, we have:
\begin{align*}
\dot b_i & \geq\kappa\,(1-b_i)-M\sum_{j=1}^{i-1}|e_j(t)|,\qquad i\in\{1,\hdots,\ell\}.
\end{align*}
\end{proposition}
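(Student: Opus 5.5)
We plan to compute $\dot b_i$ explicitly from \eqref{SSM:cont}, isolate the self-interaction term $j=i$, replace every preceding token $z_j$ by its target $\sigma_j\mathfrak e^{\tilde\mu}$ up to an error controlled by $|e_j|$, and then bound the resulting scalar expression from below on $\{b_i\in[c,1]\}$ using Assumption~\ref{ass:SBCglobal}, the spectral gap, and the choice $c>c_\textrm{eff}$.

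\textbf{Expansion of the dynamics.} Since $T_{z_i}\pi=\mathbb I_n-z_iz_i^\top$, we have
\begin{align*}
\dot b_i=\sigma_i\sum_{j=1}^i (z_i^\top S_{BC}z_j)\Bigl((\mathfrak e^{\tilde\mu})^\top D_{ij}z_j-z_i^{\tilde\mu}\,z_i^\top D_{ij}z_j\Bigr).
\end{align*}
For $j<i$ we substitute $z_j=\sigma_j\mathfrak e^{\tilde\mu}+e_j$. The quantities $\|S_{BC}\|$ and $\|D_{ij}\|$ are uniformly bounded by Assumption~\ref{ass:SSM} and Lemma~\ref{lemma:lambdabound}, and all vectors are unit, so each $j<i$ summand differs from its value at $z_j=\sigma_j\mathfrak e^{\tilde\mu}$ by at most $M_0|e_j|$. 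This produces the term $-M\sum_{j<i}|e_j|$. At $z_j=\sigma_j\mathfrak e^{\tilde\mu}$ we get $D_{ij}\mathfrak e^{\tilde\mu}=\lambda_{ij}^{\tilde\mu}\mathfrak e^{\tilde\mu}$, and the $j$-th term becomes
\begin{align*}
\lambda_{ij}^{\tilde\mu}\,(\sigma_i z_i^\top S_{BC}\mathfrak e^{\tilde\mu})\,(1-b_i^2).
\end{align*}
The self term $j=i$ equals
\begin{align*}
(z_i^\top S_{BC}z_i)\bigl(\sigma_i\lambda^{\tilde\mu}_{ii}z_i^{\tilde\mu}-b_i\,z_i^\top D_{ii}z_i\bigr).
\end{align*}
Writing $z_i^\top D_{ii}z_i=\lambda_{ii}^{\tilde\mu}b_i^2+\sum_{\mu\neq\tilde\mu}\lambda_{ii}^\mu(z_i^\mu)^2$ and using $\sum_{\mu\ne\tilde\mu}(z_i^\mu)^2=1-b_i^2$, the self term is at least
\begin{align*}
\alpha\,\gamma\,b_i(1-b_i^2)\geq \alpha\gamma c(1-b_i^2),
\end{align*}
by Assumption~\ref{ass:SBCglobal} and item 2) of Assumption~\ref{ass:SSM}.

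\textbf{Handling the cross terms (main obstacle).} The delicate part is the cross terms, because $\sigma_iz_i^\top S_{BC}\mathfrak e^{\tilde\mu}$ need not be positive. We decompose
\begin{align*}
S_{BC}=s\,\mathbb I_n+(S_{BC}-s\,\mathbb I_n),\qquad s=(\mathfrak e^{\tilde\mu})^\top S_{BC}\mathfrak e^{\tilde\mu}.
\end{align*}
This gives $\sigma_iz_i^\top S_{BC}\mathfrak e^{\tilde\mu}\ge s\,b_i-\|S_{BC}-s\mathbb I_n\|\sqrt{1-b_i^2}$, which by the definition of $c_\star$ is $\geq \underline s\,(b_i-c_\star\sqrt{1-b_i^2})$ after normalizing by $\underline s=\inf z^\top S_{BC}z$. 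Summing the cross terms with $\lambda_{ij}^{\tilde\mu}\in[\lambda_{\min},\lambda_{\max}]$ and combining with the self term, the bracket multiplying $(1-b_i^2)$ is bounded below by a multiple of
\begin{align*}
(\gamma+(\ell-1)\lambda_{\min})\,b_i-c_\star\lambda_{\max}(\ell-1)\sqrt{1-b_i^2}.
\end{align*}
Here the negative part can only be paired with $\lambda_{\max}$ and the positive part with $\lambda_{\min}$, in a crude but uniform way. This expression is positive exactly when $b_i>c_\textrm{eff}$, which is how the threshold $c_\textrm{eff}$ arises. Taking the worst constants between $\ell-1$ and $i-1$ is the step where bookkeeping is most error-prone, and we expect the precise form of $c_\textrm{eff}$ to come out of it.

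\textbf{Conclusion.} Since the bracket is continuous and increasing in $b_i$, on $[c,1]$ it is bounded below by its value at $c$, which is a positive constant $\kappa_0$. Using $1-b_i^2\ge 1-b_i$ for $b_i\in[0,1]$ gives $\dot b_i\ge\kappa(1-b_i)-M\sum_{j<i}|e_j|$. For $i=1$ there are no cross terms, and the bound follows directly from the self term. The hypothesis $Z_0\notin\mathcal B_\ell$ and $c<b^\textrm{max}$ only serve to guarantee that $I^\Sigma_{c,i}$ is nonempty and meaningful; the inequality itself is pointwise on that set.
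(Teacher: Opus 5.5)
\textbf{Same route as the paper.} Your argument matches the paper's proof step for step. You split $\dot b_i$ into the self term, the cross terms frozen at $z_j=\sigma_j\mathfrak e^{\tilde\mu}$, and remainders bounded by $M\sum_{j<i}|e_j|$. You bound the self term by $\alpha\gamma b_i(1-b_i^2)$, split $S_{BC}=s\,\mathbb I_n+(S_{BC}-s\,\mathbb I_n)$, and pair $\lambda_{\min}$ with the positive part and $\lambda_{\max}$ with the negative part. Your passage from $i-1$ to $\ell-1$ is also fine: for $i\ge2$ the $i$-th bracket exceeds $\tfrac{i-1}{\ell-1}$ times the $\ell$-th bracket by $\gamma b_i\bigl(1-\tfrac{i-1}{\ell-1}\bigr)\ge0$.

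\textbf{The gap: the threshold.} The step that fails is the one you left unverified. Set $P=\gamma+(\ell-1)\lambda_{\min}$ and $Q=(\ell-1)c_\star\lambda_{\max}$. Then $Pb-Q\sqrt{1-b^2}>0$ if and only if $b>Q/\sqrt{P^2+Q^2}=(\ell-1)\,c_\textrm{eff}$, because the numerator of $c_\textrm{eff}$ is $c_\star\lambda_{\max}$, without the factor $\ell-1$. So for $\ell\ge3$, $c_\star>0$ and $c\in\,]c_\textrm{eff},(\ell-1)c_\textrm{eff}]$, your $\kappa_0$ for $i=\ell$ is $\le0$, and no $\kappa\in\mathbb R^+$ comes out.

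The paper's own proof has exactly this defect. Its squaring chain starts from $\beta>0$ and weakens it, via $(i-1)\le\ell-1$ and $(i-1)^2\ge1$, down to $c>c_\textrm{eff}$. That shows $c>c_\textrm{eff}$ is necessary for $\beta>0$; the paper then reads it as sufficient.

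\textbf{Sharper estimates cannot rescue the stated constant.} Take $n=2$, $\ell=3$, $\tilde\mu=1$, $\Sigma=(+1,+1,+1)$ and $S_{BC}=\mathbb I_2-\mathfrak e^2(\mathfrak e^1)^\top$, so that $\inf_{z\in\mathbb S^1}z^\top S_{BC}z=\tfrac12$ and $c_\star=2$. Choose nearly constant $\lambda_{ij}^1\approx\lambda$ and $\lambda_{ij}^2\approx\lambda-\gamma$ (small $\alpha^\mu$ and $W_\Delta^\mu$, suitable biases). Then $c_\textrm{eff}\approx2\lambda/\sqrt{(\gamma+2\lambda)^2+16\lambda^2}<\tfrac12$.

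Now evaluate at $z_1=z_2=\mathfrak e^1$ and $z_3=(\tfrac12,\tfrac{\sqrt3}{2})$, where $e_1=e_2=0$ and $b_3=\tfrac12$. One gets $\dot b_3\approx\tfrac34\bigl[\tfrac12\bigl(1-\tfrac{\sqrt3}{4}\bigr)\gamma-(\sqrt3-1)\lambda\bigr]<0$, since $\gamma<\lambda$. Along this trajectory $b_3\to-1$, so $Z_0\notin\mathcal B_3$ and $b^\textrm{max}=1$. Taking $c\in\,]c_\textrm{eff},\tfrac12]$ puts $t=0$ in $I_{c,3}^\Sigma$, and the claimed inequality fails there.

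\textbf{Fix.} The statement needs $c_\textrm{eff}$ replaced by $(\ell-1)c_\textrm{eff}$, i.e. numerator $(\ell-1)c_\star\lambda_{\max}$. With that correction, your proof is complete as written.
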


\medskip

As in section~\ref{sec:PE}, we start by studying the dynamics of the first token.

\medskip

\begin{lemma}[Asymptotic behavior of first token]\label{lemma:1stglobal}
Under Assumptions~\ref{ass:SSM} and~\ref{ass:SBCglobal}, for each $\sigma_1\in\{-1,+1\}$, the point $z_{\sigma_1}^*=\sigma_1\,\mathfrak{e}^{\tilde\mu}\in\mathcal C_1(\tilde\mu)$ is an exponentially stable equilibrium for the dynamics of the first token of model~\eqref{SSM:cont} with domain of attraction $\Omega_0^{\sigma_1}(\tilde\mu)$.
\end{lemma}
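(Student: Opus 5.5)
Since the dynamics of the first token depends only on $z_1$, I will work with the scalar variable $b_1=\sigma_1 z_1^{\tilde\mu}$ from~\eqref{PE:bis}. The first step is to compute $\dot b_1$ explicitly from~\eqref{SSM:cont} with $i=1$. Since $\beta_{11}=(z_1^\top S_{BC}z_1)D_{11}$ and $T_{z_1}\pi=\mathbb I_n-z_1z_1^\top$, we get
\begin{align*}
\dot b_1=(z_1^\top S_{BC}(t)z_1)\,b_1\Big(\lambda_{11}^{\tilde\mu}-\sum_{\mu=1}^n\lambda_{11}^\mu (z_1^\mu)^2\Big).
\end{align*}
Using $\sum_\mu (z_1^\mu)^2=1$, the bracket equals $\sum_{\mu\neq\tilde\mu}(\lambda_{11}^{\tilde\mu}-\lambda_{11}^\mu)(z_1^\mu)^2$. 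By items 1)--2) of Assumption~\ref{ass:SSM}, this is bounded below by $\gamma(1-b_1^2)=\gamma(1-b_1)(1+b_1)$. By Assumption~\ref{ass:SBCglobal}, $z_1^\top S_{BC}z_1>\alpha$. Hence, on $b_1\in\,]0,1]$ we have
\begin{align*}
\dot b_1\geq \alpha\gamma\, b_1(1+b_1)(1-b_1)\geq 0,
\end{align*}
with equality only at $b_1=1$. This is the global analogue of part 1) of Proposition~\ref{prop:windowed-ISS}.

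Forward invariance of $\Omega_0^{\sigma_1}(\tilde\mu)$ then follows: $b_1(t)$ is nondecreasing while it lies in $]0,1]$, so it cannot leave, and $b_1(t)\geq b_1(0)=:c_0>0$ for all $t$. On $[c_0,1]$ we obtain $\dot V_1\leq-\alpha\gamma c_0(1+c_0)V_1$ with $V_1=1-b_1$. This gives exponential convergence $V_1(t)\leq e^{-\kappa_0 t}V_1(0)$ with $\kappa_0=\alpha\gamma c_0(1+c_0)$, and since $|e_1|^2=2V_1$, also exponential convergence of $z_1$ to $\sigma_1\mathfrak e^{\tilde\mu}$. Stability in the Lyapunov sense follows from monotonicity of $V_1$. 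Local exponential stability with uniform rate is obtained by restricting to any cap $\Omega_c^{\sigma_1}(\tilde\mu)$ with $c>0$; alternatively, one can invoke Lemma~\ref{lemma:1stPE}, since pointwise positivity implies Assumption~\ref{ass:selfavg}.

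To show that the domain of attraction is exactly $\Omega_0^{\sigma_1}(\tilde\mu)$, I will check that its complement does not converge to $\sigma_1\mathfrak e^{\tilde\mu}$. The equator $\mathcal E(\tilde\mu)$ is invariant because $\dot z_1^{\tilde\mu}$ is proportional to $z_1^{\tilde\mu}$. The opposite cap $\Omega_0^{-\sigma_1}(\tilde\mu)$ is invariant by the same argument with $-\sigma_1$, and its trajectories converge to $-\sigma_1\mathfrak e^{\tilde\mu}$.

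The main obstacle is modest: it is the eigenvalue-gap bound, where one must use that $\tilde\mu$ is fixed (item 1) and that the gap lower bound $\gamma$ is uniform in $(t,Z)$ (item 2). Once this is in place, the non-uniformity of the convergence rate near the equator (the rate $\kappa_0\to0$ as $c_0\to0$) is handled by stating exponential convergence with an initial-condition-dependent rate on the whole open cap.
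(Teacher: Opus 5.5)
Your proposal is correct and follows the paper's proof: you use the same computation of $\dot b_1$, the same spectral-gap bound $\gamma(1-b_1^2)$, and the same key inequality $\dot b_1\geq\alpha\gamma(1-b_1^2)\,b_1$ on $[0,1]$. The differences are minor. Where the paper closes by comparison (Gr\"onwall--Bellman) with $\dot a_1=(1-a_1^2)a_1$, you use monotonicity to get $b_1(t)\geq b_1(0)$ and then a linear inequality for $V_1$; you also show explicitly, via invariance of the equator and of the opposite cap, why the domain of attraction is exactly $\Omega_0^{\sigma_1}(\tilde\mu)$, which the paper leaves implicit in its listing of the equilibria.
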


\medskip

\begin{proof}
In terms of the scalar variable~\eqref{PE:bis}, the dynamics of the first token reads:
\begin{align}\label{proof:dotb1}
\dot b_1=(z_1^\top\,S_{BC}(t)\,z_1)\left(\lambda_{11}^{\tilde\mu}(t,z_1)-z_1^\top\,D_{11}(t)\,z_1\right)b_1,
\end{align}
for each $(t,b_1)\in\mathbb R_0^+\times\,[0,1]$. The only equilibria are $b_1^*=-1$, $b_1^*=0$, and $b_1^*=1$, which correspond to $z_1^*=-z_{\sigma_1}^*$, $z_1^*\in\mathcal E(\tilde\mu)$ and $z_1^*=z_{\sigma_1}^*$, respectively. Given that:
\begin{align}\nonumber
& \lambda_{11}^{\tilde\mu}(t,z_1)-z_1^\top\,D_{11}(t,z_1)\,z_1\\\nonumber
& \quad=\sum_{\mu=1}^n\left(\lambda_{11}^{\tilde\mu}(t,z_1)-\lambda_{11}^\mu(t,z_1)\right)(z_1^\mu)^2\\\nonumber
& \quad=\sum_{\mu\neq\tilde\mu}\left(\lambda_{11}^{\tilde\mu}(t,z_1)-\lambda_{11}^\mu(t,z_1)\right)(z_1^\mu)^2\\\label{proof:gap}
& \quad\geq\gamma(1-b_1^2).
\end{align}
where we used~$|z_1|=1$, we conclude that $\dot b_1\geq\alpha\,\gamma\,(1-b_1^2)\,b_1$, for each $b_1\in[0,1]$. Given that $a_1^*=1$ is an exponentially stable equilibrium of the system $\dot a_1=(1-a_1^2)\,a_1$, $a_1\in[0,1]$, with domain of attraction $]0,1]$, the Gr\"onwall--Bellman inequality ensures that $b_1^\star=1$ is an exponentially stable equilibrium of~\eqref{proof:dotb1} with domain of attraction $]0,1]$. This corresponds to $z_1^\star=z_{\sigma_1}^\star$ being an exponentially stable equilibrium of the dynamics of the first token of~\eqref{SSM:cont} with domain of attraction $\Omega_0^{\sigma_1}(\tilde\mu)$.
\end{proof}

\medskip

We are ready to find the domain of attraction and study the stability of the consensus equilibria for the SSM dynamics.

\medskip

\begin{theorem}[Asymptotic stability of consensus]\label{theorem:global}
Under Assumptions~\ref{ass:SSM} and~\ref{ass:SBCglobal}, the consensus set $\mathcal C_\ell(\tilde\mu)$ is exponentially stable for the continuous SSM dynamics~\eqref{SSM:cont} with domain of attraction:
\begin{align*}
& \mathcal D_\ell(\tilde\mu)=(\mathbb S^{n-1})^\ell-\mathcal B_\ell(\tilde\mu)\\
& =\{Z_0\in(\mathbb S^{n-1})^\ell\mid\limsup_{t\to\infty}|z_i^{\tilde\mu}(t,Z_0)|>{c_\textrm{eff}},~1\leq i\leq\ell\}.
\end{align*}
\end{theorem}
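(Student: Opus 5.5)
The plan mirrors the proof of Theorem~\ref{theorem:localavg}: we exploit the cascade structure of~\eqref{SSM:cont} and argue by induction on the token index, now using Proposition~\ref{prop:dotbibound} in place of Proposition~\ref{prop:windowed-ISS}.

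\textbf{Fixing signs.} Fix $Z_0\in\mathcal D_\ell(\tilde\mu)$. For each $i$ we have $\limsup_{t\to\infty}|z_i^{\tilde\mu}(t)|>c_\textrm{eff}$. We then pick $\sigma_i$ equal to the sign of $z_i^{\tilde\mu}$ along a sequence of times realizing this $\limsup$. With $\Sigma$ fixed, define $b_i,V_i,e_i$ by~\eqref{PE:bis}--\eqref{PE:eis}, and choose $c\in\,]c_\textrm{eff},b^\textrm{max}[$ so that Proposition~\ref{prop:dotbibound} applies with constants $\kappa,M$.

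\textbf{Base case.} For $i=1$, Proposition~\ref{prop:dotbibound} gives $\dot V_1\leq-\kappa V_1$ whenever $b_1\geq c$. Since $b_1$ exceeds $c$ at some time $t_1$, the set $\{b_1\geq c\}$ is forward invariant from then on: at $b_1=c$ we have $\dot b_1\geq\kappa(1-c)>0$. Hence $V_1(t)\leq e^{-\kappa(t-t_1)}V_1(t_1)$. Alternatively, the base case follows directly from Lemma~\ref{lemma:1stglobal}, since $b_1>0$ at some time.

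\textbf{Inductive step.} Assume $|e_j(t)|\leq K_j e^{-\rho_j t}$ for all $j<i$. On $I^\Sigma_{c,i}$ we have $\dot V_i\leq-\kappa V_i+M\sum_{j<i}K_je^{-\rho_jt}$, which is an ISS estimate with an exponentially decaying input.

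The main obstacle is showing that $b_i$ eventually stays in $[c,1]$, since the input term could push it back below $c$. To handle this, pick a time $t_i$ at which $b_i(t_i)>c'$ for some $c'\in\,]c,b^\textrm{max}[$, and which is late enough that $M\sum_j K_je^{-\rho_jt}<\kappa(1-c)$ for all $t\geq t_i$. Such a $t_i$ exists because $b_i$ exceeds $c'$ infinitely often, by the definition of $\limsup$ and the choice of $\sigma_i$. At any later time where $b_i=c$ we then get $\dot b_i\geq\kappa(1-c)-(\text{input})>0$, so $[c,1]$ is forward invariant after $t_i$.

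\textbf{Convergence rate.} Gr\"onwall's inequality (or the variation of constants formula) applied to the ISS inequality gives $V_i(t)\leq e^{-\kappa(t-t_i)}V_i(t_i)+C\,e^{-\min(\kappa,\rho_j)t/2}$. Thus $V_i\to0$ exponentially and $|e_i|=\sqrt{2V_i}$ decays exponentially, which closes the induction.

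\textbf{Complement.} If $Z_0\in\mathcal B_\ell(\tilde\mu)$, then some $|z_i^{\tilde\mu}|$ has $\limsup\leq c_\textrm{eff}<1$, so the trajectory cannot converge to $\mathcal C_\ell(\tilde\mu)$, where $|z_i^{\tilde\mu}|=1$. This shows the domain of attraction is exactly $\mathcal D_\ell(\tilde\mu)$.

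\textbf{Local stability.} Near each point of $\mathcal C_\ell(\tilde\mu)$, the exponential estimates above are uniform. For initial data with every $b_i$ close to $1$, we may take $t_i=0$, and the constants then depend only on $\kappa$, $M$ and $\rho_j$. This yields exponential stability of the set rather than mere attractivity.
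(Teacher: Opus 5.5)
Your proposal is essentially the paper's proof. It uses the same induction along the cascade via Proposition~\ref{prop:dotbibound}, the same choice of a late time after which $M\sum_{j<i}|e_j|<\kappa(1-c)$ and $b_i>c$, the same barrier argument making $\{b_i\geq c\}$ forward invariant, and the same one-line argument for $\mathcal B_\ell(\tilde\mu)$. The differences are minor: you extract exponential rates by variation of constants where the paper only concludes $b_i\to1$ via Lemma~\ref{lemma:comparison}, and you obtain local stability from the same estimates with $t_i=0$, whereas the paper invokes Theorem~\ref{theorem:localavg}, noting that Assumption~\ref{ass:SBCglobal} implies Assumption~\ref{ass:selfavg}.
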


\medskip

\begin{proof}
Given that Assumption~\ref{ass:SBCglobal} is stronger than Assumption~\ref{ass:selfavg}, Theorem~\ref{theorem:localavg} holds, whence $\mathcal C_\ell(\tilde\mu)$ is locally exponentially stable. Thus, it is enough to show that its domain of attraction is $\mathcal D_\ell(\tilde\mu)=(\mathbb S^{n-1})^\ell-\mathcal B_\ell(\tilde\mu)$, \textit{i.e.},
\begin{align*}
\lim_{t\to\infty}Z(t,Z_0)\in\mathcal C_\ell(\tilde\mu)~\Leftrightarrow~Z_0\in\mathcal D_\ell(\tilde\mu).
\end{align*}

To specify the dependence on the initial condition $Z_0\in(\mathbb S^{n-1})^\ell$, as well on the signs $\Sigma=(\sigma_1,\hdots,\sigma_\ell)\in\{-1,+1\}^\ell$, the scalar functions~\eqref{PE:bis} and error functions~\eqref{PE:eis} corresponding to the $i$-th component of the solution $Z(t,Z_0)$ of~\eqref{SSM:cont} are denoted by $b_i^\Sigma(t,Z_0)$ and $e_i^\Sigma(t,Z_0)$, respectively, for each $i\in\{1,\hdots,\ell\}$.

\medskip

\begin{description}
    \item[$(\Rightarrow)$] Let $Z_0\in(\mathbb S^{n-1})^\ell$ be such that $\lim_{t\to\infty}Z(t,Z_0)\in\mathcal C_\ell(\tilde\mu)$, \textit{i.e.}, $\lim_{t\to\infty}|b_i^\Sigma(t,Z_0)|=1$ for each $1\leq i\leq\ell$ and $\Sigma\in\{-1,+1\}^\ell$. By contradiction, suppose that $Z_0\in\mathcal B_\ell(\tilde\mu)$. Then there exists $i\in\{1,\hdots,\ell\}$ such that $\limsup_{t\to\infty}z_i(t,Z_0)\not\in\Omega_{c_\textrm{eff}}^\sigma(\tilde\mu)$, i.e.:
    \begin{align*}
    \limsup_{t\to\infty}|b_i^\Sigma(t,Z_0)| & \leq c_\textrm{eff}<1=\lim_{t\to\infty}|b_i^\Sigma(t,Z_0)|.
    \end{align*}
    As a result, $Z_0\not\in\mathcal B_\ell(\tilde\mu)$, \textit{i.e.}, $Z_0\in\mathcal D_\ell(\tilde\mu)$.

    \medskip
    
    \item[$(\Leftarrow)$] Let us proceed by induction in the number of tokens. For each $1\leq i\leq\ell$, we denote $Z^{(i)}=(z_1,\hdots,z_i)\in(\mathbb S^{n-1})^i$ and $\Sigma^{(i)}=(\sigma_1,\hdots,\sigma_i)\in\{-1,+1\}^i$. Similarly, the solution of the subsystem of~\eqref{SSM:cont} given by the first $i$ tokens with initial condition $Z_0^{(i)}\in(\mathbb S^{n-1})^i$ is denoted by $Z^{(i)}(t,Z_0^{(i)})=(z_1^{(i)}(t,Z_0^{(i)}),\hdots,z_i^{(i)}(t,Z_0^{(i)}))$, for each $t\in\mathbb R_0^+$.

    \medskip
    
    \textbf{Base case.} From Lemma~\ref{lemma:1stglobal}, $\mathcal C_1(\tilde\mu)$ is exponentially stable for the subsystem of~\eqref{SSM:cont} given by the first token, and its domain of attraction is $\Omega_0(\tilde\mu)$.  In particular, $\mathcal B_1(\tilde\mu)=\mathcal E(\tilde\mu)$.

    \medskip
    
    \textbf{Induction hypothesis.} Let $1<i\leq\ell$. For each $Z_0^{(i-1)}\in\mathcal D_{i-1}(\tilde\mu)$, we have:
    \begin{align*}
    \lim_{t\to\infty}Z^{(i-1)}(t,Z_0^{(i-1)})\in\mathcal C_{i-1}(\tilde\mu).
    \end{align*}
    Equivalently, there exists $\Sigma^{(i-1)}=(\sigma_1,\hdots,\sigma_{i-1})\in\{-1,+1\}^{i-1}$ such that:
    \begin{align}\label{proof:ejlimit}
    \lim_{t\to\infty} e_j^{\Sigma^{(j)}}(t,Z_0^{(j)})=0,~1\leq j\leq i-1,
    \end{align}
    where $Z_0^{(j)}=(z_{1,0},\hdots,z_{j,0})\in\mathcal D_j(\tilde\mu)$ and $\Sigma^{(j)}=(\sigma_1,\hdots,\sigma_j)\in\{-1,+1\}^j$.
    
    \medskip
    
    \textbf{Inductive step.} For each $Z_0^{(i)}=(z_{1,0},\hdots,z_{i,0})\in\mathcal D_i(\tilde\mu)$, we can pick $c\in\,]c_\textrm{eff},b_i^{\max}[\,$, where $b_i^{\max}=\limsup_{t\to\infty}|z_i^{\tilde\mu}(t,Z_0)|$. In addition, thanks to the induction hypothesis and the fact that $Z_0^{(i-1)}\in\mathcal D_{i-1}(\tilde\mu)$, there exists $\Sigma^{(i-1)}=(\sigma_1,\hdots,\sigma_{i-1})\in\{-1,+1\}^{i-1}$ such that~\eqref{proof:ejlimit} holds. By taking $\kappa,M\in\mathbb R^+$ as in Proposition~\ref{prop:dotbibound}, there exists $t_0\in\mathbb R_0^+$ such that:
    \begin{align}\label{proof:bounderror}
    M\sum_{j=1}^{i-1}|e_j^{\Sigma^{(j)}}(t,Z_0^j)|<\kappa\,(1-c),\qquad t\in[t_0,\infty[\,.
    \end{align}
    Moreover, given that $b_i^{\max}>c$, there exists $t_1\geq t_0$ such that $|z_i^{\tilde\mu}(t_1,Z_0)|>c$. Let $\sigma_i=\operatorname{sign}(z_i^{\tilde\mu}(t_1,Z_0))$ and set $\Sigma^{(i)}=(\sigma_1,\hdots,\sigma_{i-1},\sigma_i)\in\{-1,+1\}^i$. With the notation of Proposition~\ref{prop:dotbibound}, $t_1\in I_{c,i}^{\Sigma^{(i)}}(Z_0^{(i)},\tilde\mu)$, where $b_i^{\Sigma^{(i)}}(t_1,Z_0^{(i)})=\sigma_i\,z_i^{\tilde\mu}(t_1,Z_0^{(i)})$. Let us show that, in fact, $I_{c,i}^{\Sigma^{(i)}}(Z_0^{(i)},\tilde\mu)=[t_1,\infty[\,$. By contradiction, suppose that:
    \begin{align*}
    t_\star=\inf\{t\geq t_1\mid b_i^{\Sigma^{(i)}}(t,Z_0^{(i)})\leq c\}\in\mathbb R^+.
    \end{align*}
    By continuity, $b_i^{\Sigma^{(i)}}(t_\star,Z_0^{(i)})=c$. Hence, from Proposition~\ref{prop:dotbibound} and~\eqref{proof:bounderror}, we obtain (the dependence on the initial condition $Z_0^{(i)}$ is dropped for simplicity):
    \begin{align*}
    \dot b_i^{\Sigma^{(i)}}(t_\star) & \geq\kappa\,(1-b_i^{\Sigma^{(i)}}(t_\star))-M\sum_{j=1}^{i-1}|e_j^{\Sigma^{(j)}}(t_\star)|\\
    & >\kappa\,(1-c)-\kappa(1-c)=0.
    \end{align*}
    This contradicts that $b_i^{\Sigma^{(i)}}(t)>c$ for each $t\in[t_1,t_\star[\,$.

    As a result, the bound in Proposition~\ref{prop:dotbibound} is valid on $I_{c,i}^{\Sigma^{(i)}}(Z_0^{(i)},\tilde\mu)=[t_1,\infty[\,$. This, together with Lemma~\ref{lemma:comparison} and~\eqref{proof:ejlimit}, gives the result.
    \end{description}
\end{proof}

\medskip

For each token $1\leq i\leq\ell$, the effective threshold $c_\textrm{eff}$ separates two regimes on $\mathbb{S}^{n-1}$. When $|z_i^{\tilde\mu}| > c_\textrm{eff}$, both the self-drift and the isotropic part of the cross-terms drive $b_i(t)$ towards $\pm1$ as $t\to\infty$. When $|z_i^{\tilde\mu}| \le c_\textrm{eff}$, however, the anisotropic cross-terms oppose the self-drift. The dominating term at the current time will determine the sign of $\dot b_i(t)$, which will change within time.

From Lemma~\ref{lemma:1stglobal}, the trapping set of the first token is the equator $\mathcal E(\tilde\mu)$, which has zero measure (as it has co-dimension 1). In other words, the domain of attraction $\mathcal D_1(\tilde\mu)$ is co-null. This follows from the fact that the dynamics of the first token has no cross-terms.

Let us analyze the size of the domain of attraction for the remaining tokens. We can distinguish the two cases:

\medskip

\begin{enumerate}
    \item\textbf{Isotropic case.}  When $S_{BC}(t) = g(t)\,\mathbb I_n$ for some continuous $g:\mathbb R_0^+\to[\alpha,\infty[\,$, with $\alpha\in\mathbb R^+$, the effective threshold vanishes: $c_\textrm{eff} = 0$. In this case, the trapping set is defined by an equality and, thus, it has zero measure (as it has positive co-dimension). Therefore, the domain of attraction $\mathcal D_\ell(\tilde\mu)$ is co-null.
    \item\textbf{Anisotropic case.} When $S_{BC}(t)$ is not a multiple of the identity, $c_\star\in\mathbb R_0^+$ and the trapping set may have positive measure. Furthermore, there might be locally asymptotically stable equilibria inside the trapping set. The size of the domain of attraction is a decreasing function of the effective threshold.
\end{enumerate}

\section{Mamba-2 Experiments}
\label{Mamba}
In this section, our objective is to verify whether the consensus phenomenon can be observed in the Mamba-2 model as the number of layers increases. To do so, we use the \texttt{mamba2-130m} model ($n = 768$, $24$ layers, $64$ channels per head) from~\cite{mamba2-130m}. Since our results are asymptotic, we extend the depth by cycling the $24$ available layers: once the tokens have passed through the last layer, they re-enter the first, so that the setting remains time-varying. We take $\ell = 50$ tokens, initialised by uniformly sampling from the dictionary, and iterate the layer update for a depth of $200$. Each experiment is repeated for $50$ independent draws of the initial tokens, and we report the mean over those runs.

Writing $Z = (z_1,\dots,z_\ell)$, let $\bar\lambda_1 \ge \dots \ge
\bar\lambda_\ell$ denote the eigenvalues of the Gram matrix of the normalised
tokens $z_i/|z_i|$, scaled so that $\sum_k \bar\lambda_k = 1$. We measure two
quantities:

\begin{itemize}
  \item $\bar\lambda_1$, the share of the total energy carried by the leading
        direction. It equals $1$ exactly when all tokens are collinear, and
        $1/\ell$ when they are mutually orthogonal.
  \item $r = \big(\sum_k \bar\lambda_k^2\big)^{-1}$, the participation ratio of
        that spectrum: the effective number of directions occupied by the
        tokens, ranging from $1$ (collinear) to $\ell$ (isotropic).
\end{itemize}

Both  detect
alignment along a common direction irrespective of the signs $\sigma_i$, \textit{i.e}, at any
consensus configuration in $\mathcal{C}_\ell(\tilde\mu)$ they satisfy
$\bar\lambda_1 = 1$ and $r = 1$.

\subsection{Results}

\begin{figure}[t]
  \centering
  \includegraphics[width=\columnwidth]{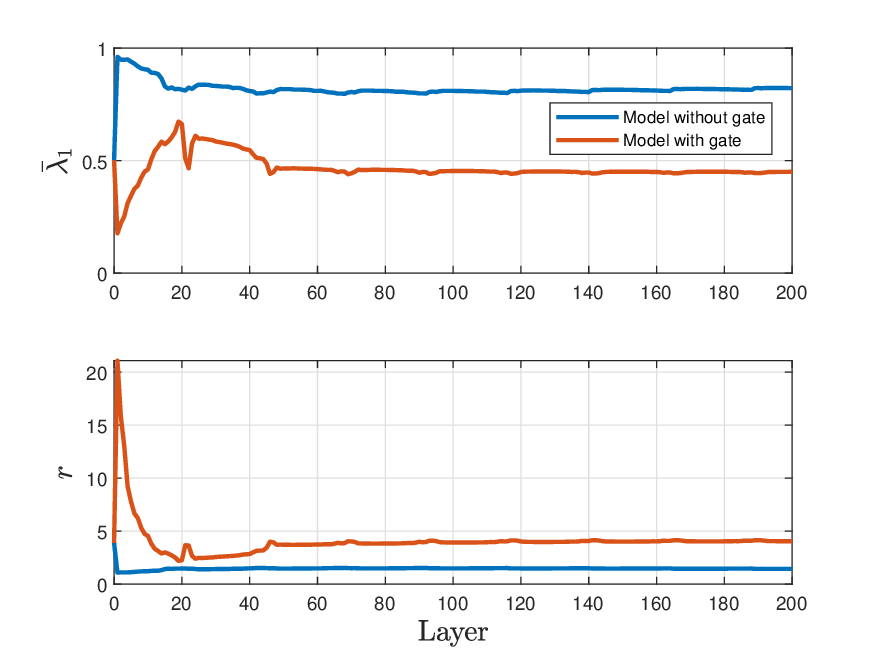}
  \caption{Weights of the \texttt{mamba2-130m} model, with and without the
    output gate. Top: $\bar\lambda_1$. Bottom: $r$.}
  \label{fig:pretrained}
\end{figure}

Figure~\ref{fig:pretrained} reports the values of $r$ and $\bar{\lambda}_1$ obtained using the original Mamba-2 weights and the SiLU nonlinearity, both with and without the output gate.
  Recall from Section~\ref{sec:model} that the output gate is a nonlinear function $g(z_i)$ that multiplies the output of the recurrence elementwise and is excluded from our model. The results with and without the output gate follow the same qualitative trajectory: the tokens aggregate along a single direction over the first layers, after which the two quantities reach a plateau and remain there for the rest of the run. What differs is how far the aggregation proceeds before settling. Without the gate, $\bar\lambda_1$ rises to approximately $0.8$ and $r$ falls to about $1.4$, against roughly $0.45$ and $4$ when the gate is retained. This supports the claim that the gate is what keeps the tokens from converging to a single direction.

\begin{figure}[t]
  \centering
  \includegraphics[width=\columnwidth]{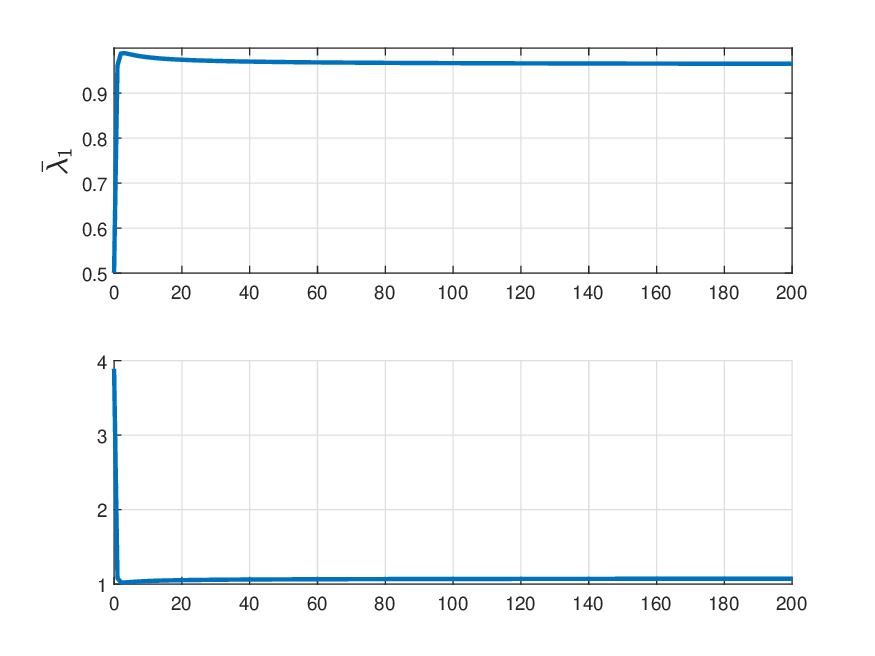}
  \caption{A single layer of the \texttt{mamba2-130m} model applied repeatedly,
    with the output gate removed. Top: $\bar\lambda_1$. Bottom: $r$.}
  \label{fig:single}
\end{figure}

In the second experiment, our objective was to investigate how the consensus phenomenon depends on the time-varying nature of the weights.
 To do so, we repeated the previous experiment without the gate, using the same
layer at every depth, so that the weight matrices are time-invariant and, in
particular, the dominant eigenvalue of $D_{ij}$ is attained at the same index
$\tilde\mu$ throughout. In Figure~\ref{fig:single}, it can be seen that the behaviour changes qualitatively: $\bar\lambda_1$ exceeds $0.95$ within the first few layers and $r$ drops to approximately $1.05$, both remaining there for the rest of the run, so that the tokens are aligned along a single direction up to numerical accuracy.

Comparing the two experiments, the weaker consensus in Figure~\ref{fig:pretrained} can be attributed to the time variation of the weights. When the weights vary with the layer, the index $\tilde\mu$ of the dominant eigenvalue of $D_{ij}$ need not remain constant, and as mentioned in Remark~\ref{ass:SSM}, each time the index changes the tokens begin realigning towards a new principal eigenvector. The convergence established on each interval of constancy is therefore interrupted before the tokens reach the corresponding equilibrium. Holding the weights fixed removes these transitions, and the consensus phenomenon can be observed in full.

\begin{figure}[t]
  \centering
  \includegraphics[width=\columnwidth]{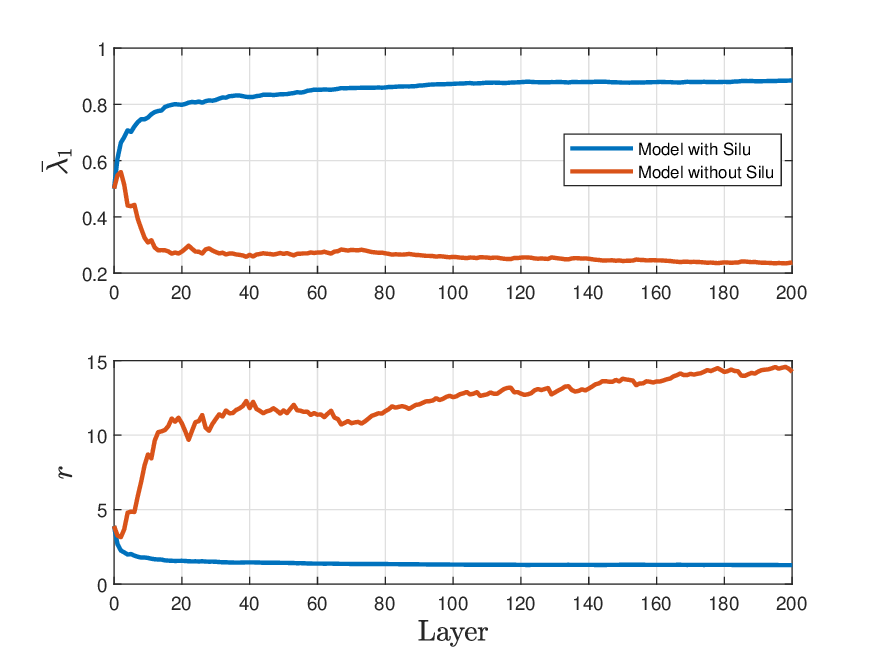}
  \caption{Randomly drawn weight matrices, redrawn at every layer and with the
    output gate removed, with and without the nonlinearity on the branch
    producing $B$ and $C$. Top: $\bar\lambda_1$. Bottom: $r$.}
  \label{fig:nosilu}
\end{figure}

In the third experiment our objective was to isolate the effect of the SiLU, the elementwise nonlinearity \mbox{$\varsigma(a) = a\,(1+e^{-a})^{-1}$}, $a \in \mathbb{R}$, present in the Mamba-2 architecture used in this section. To do so, we draw the weight matrices independently at every layer, so that the layers are not periodic, and remove the gate, running the model with and without the SiLU. Figure~\ref{fig:nosilu} reports the values of $r$ and $\bar \lambda_1 $ for the random model with and without SiLU.

It can be seen that, for both $\bar\lambda_1$ and $r$, the curves with and
without the SiLU separate within the first few layers and then move in opposite
directions. With the nonlinearity, $\bar\lambda_1$ increases steadily to
approximately $0.88$ while $r$ decreases to about $1.3$, so the tokens
concentrate along a single direction. Without it, $\bar\lambda_1$ falls to
roughly $0.23$ while $r$ grows past $14$ and is still increasing at the end of
the run: the tokens do not merely fail to reach consensus, they spread over an
increasing number of directions.

Lemma~\ref{lemma:silu-bias} accounts for the difference between the experiments with and without the SiLU. With matrices drawn independently at every layer, which is the setting of this experiment, the interaction term is centered at $0$ without the nonlinearity, whereas the SiLU shifts its mean to be strictly positive. While this does not establish that Assumption~\ref{ass:selfavg} is satisfied, and hence does not guarantee that the tokens converge, it indicates that the interaction term is biased towards positive values, which is what drives the tokens together.

\subsection{Persistency of Excitation}

Our results rely on Assumption~\ref{ass:selfavg} and in this section we report on experiments designed to test it. We therefore turn to the
assumption itself. For the first token, item~1) of
Proposition~\ref{prop:windowed-ISS} reads $\dot b_1 \ge a_c(t,z_1)(1-b_1)$, so
the quantity that governs its convergence is $a_c$ evaluated along the
trajectory. We therefore record, at every layer and for each of the $50$ runs, the
interaction term of each token:
\[
  q_i(t) \;=\; \varsigma\big(S_C(t)\,z_i(t)\big)^{\!\top} \varsigma\big(S_B(t)\,z_i(t)\big),
\]
which reduces to $z_i(t)^\top S_{BC}(t)\, z_i(t)$ when the nonlinearity is
removed, and which agrees with $a_c(t, z_i(t))$ up to a positive constant, for
each $t$. For a window length $T$ we compute the worst-case average of $q_i$
over all windows of that length contained in the run, and call $T$ admissible
when it is positive. In the first two results we look for the smallest
admissible window:
\begin{equation}\label{eq:T}
  T^\star \;=\; \min\left\{\, 1\leq T\leq \kappa  \;\middle|\;
    \min_{0 \le t \le \kappa - T}\;\frac{1}{T}\sum_{s=t+1}^{t+T} q_i(s) \;>\; 0 \,\right\},
\end{equation}
where the inner minimum is taken over all windows contained in the run, and
$T^\star$ does not exist when no window length is admissible. Note that the existence of $T^*$ does not establish Assumption~\ref{ass:selfavg}, which requires
the averaged condition at every point of the cap rather than along the sampled
trajectories. The two are nonetheless related: whenever $z_i(t)$ lies in the
cap, $q_i(t)$ is an upper bound for $\alpha_c(t)$ up to a positive constant, so
a negative average refutes the assumption, while a positive one across many
trajectories is supporting evidence for it, without establishing it.

Figures~\ref{fig:pe-model} and~\ref{fig:pe-random} illustrate the measurement on
the first token of one run, with the original weights of the Mamba-2 and for randomly drawn
weights respectively, both without the output gate. The sign of $q_1$ alternates in both figures, so although pointwise positivity fails, $q_1(t)$ remains positive on average.
 The minimum average is negative for short windows, but increases
with the window length, and crosses zero at $T = 8$ for the model and $T = 34$
for the random weights. With windows of at least that length the interaction term is positive on average
along these trajectories, and hence so is the upper bound it provides for
$\alpha_c$.

Figure~\ref{fig:pe-summary} collects the four configurations, taking the largest $T^*$ of \eqref{eq:T} over all tokens and all runs. With the original Mamba-2 weights the smallest
admissible window is $T = 10$, and removing the nonlinearity leaves it
essentially unchanged at $T = 11$. For matrices drawn independently at every
layer, the smallest admissible window is $T = 35$ with the nonlinearity, and no
window length up to the length of the run is admissible without it, $i.e.$ along
these trajectories Assumption~\ref{ass:selfavg} is violated. This behaviour is consistent with Lemma~\ref{lemma:silu-bias}. The nonlinearity therefore accounts for the clustering seen with random
matrices, while for the trained matrices the interaction term is positive on
average along these trajectories whether or not the nonlinearity is present.

Overall, along the trajectories we sample, the interaction term of the trained
Mamba-2 is positive on average over short windows compared with the depth
simulated. This does not establish Assumption~\ref{ass:selfavg}, but it is
consistent with it, and indicates that the mechanism analysed in
Section~\ref{sec:PE} is at work in Mamba-2 itself and not only in the setting
under which our results were proved.

\begin{figure}[t]
  \centering
  \includegraphics[width=\columnwidth]{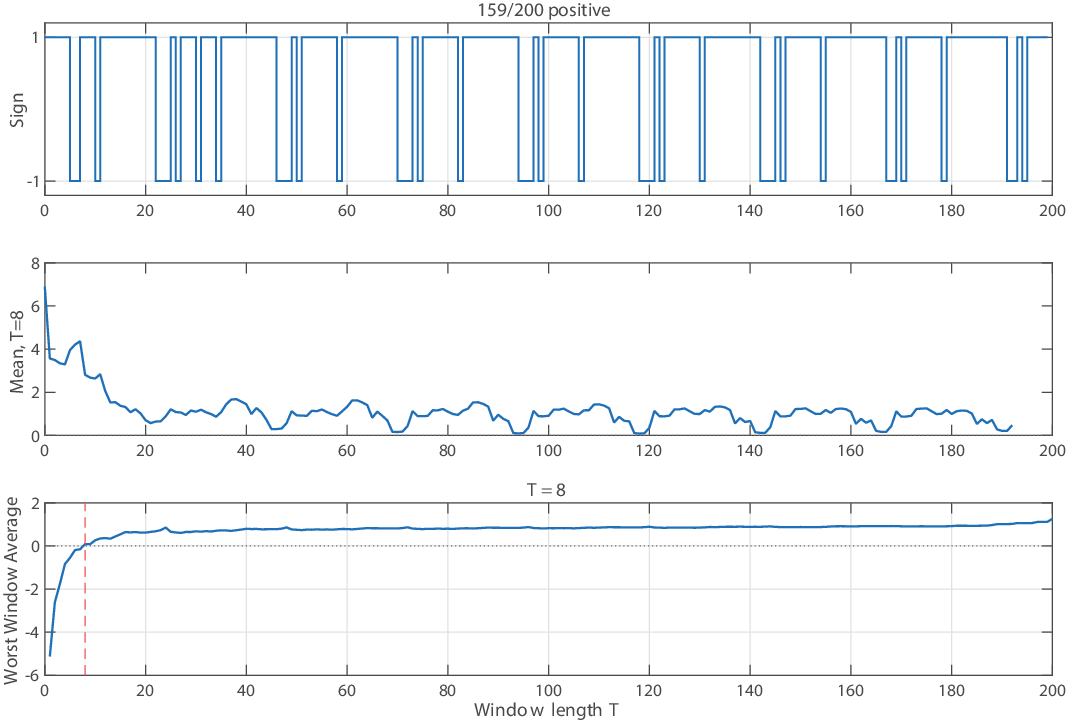}
  \caption{Interaction term of the first token along one trajectory of the
    \texttt{mamba2-130m} model. Top: its sign at each layer. Middle: its average
    over windows of length $T = 8$. Bottom: the worst-case window average as a
    function of the window length, with the smallest admissible $T$ marked.}
  \label{fig:pe-model}
\end{figure}

\begin{figure}[t]
  \centering
  \includegraphics[width=\columnwidth]{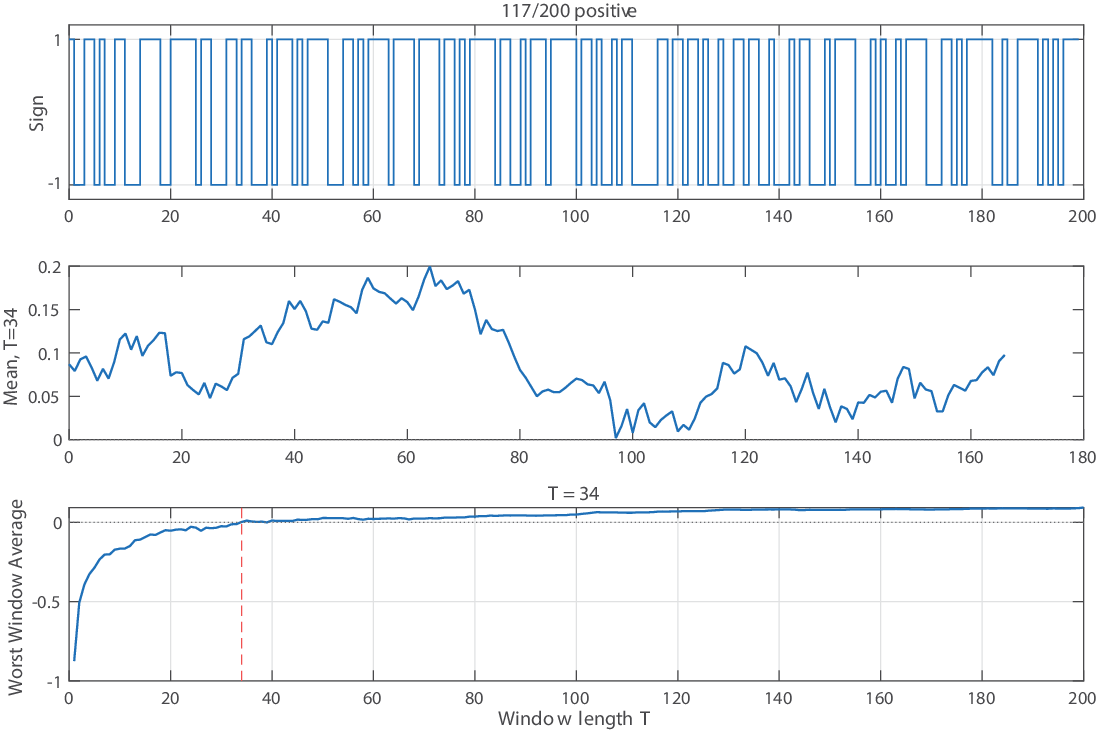}
  \caption{As in Figure~\ref{fig:pe-model}, for weight matrices drawn
    independently at every layer and with the output gate removed. The smallest
    admissible window is longer, $T = 34$.}
  \label{fig:pe-random}
\end{figure}

\begin{figure}[t]
  \centering
  \includegraphics[width=\columnwidth]{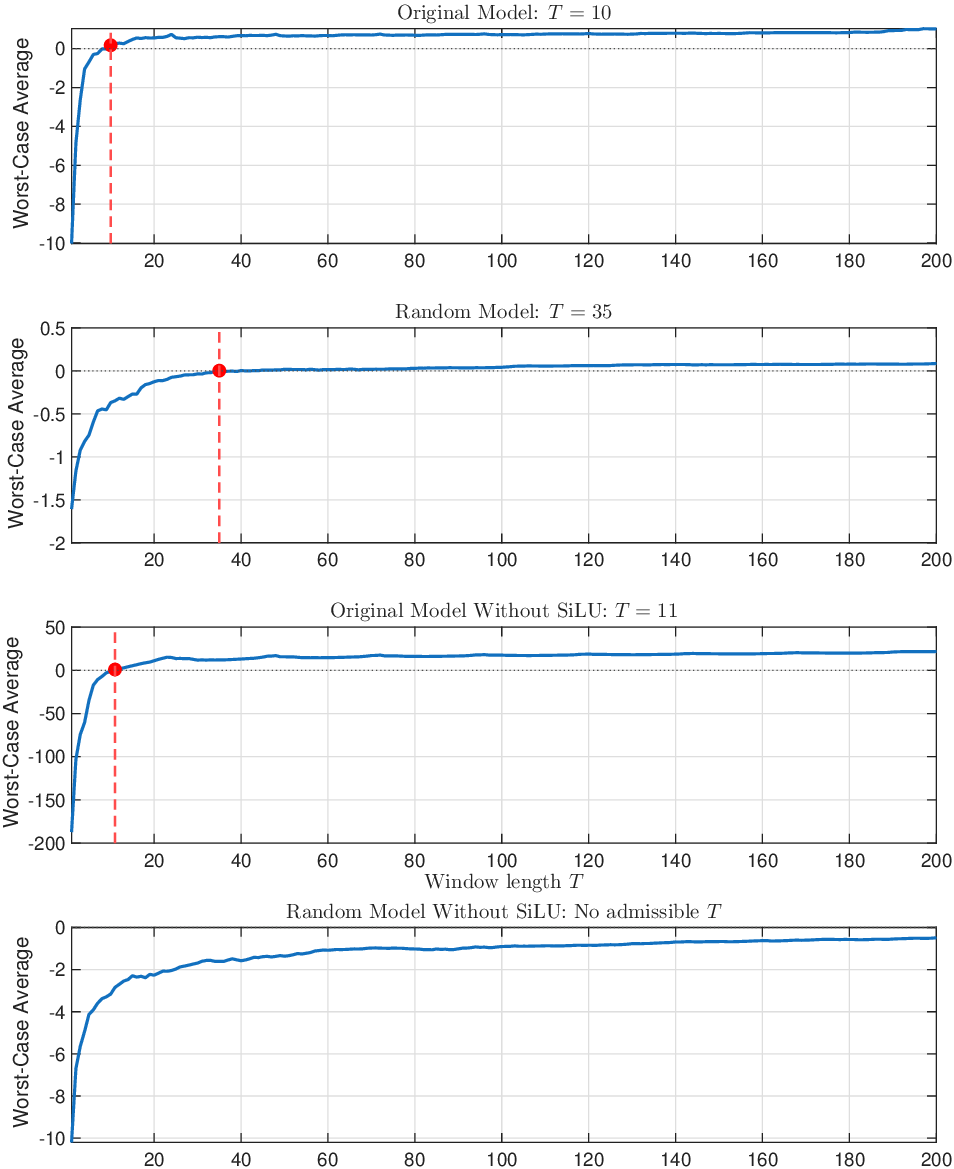}
  \caption{Worst-case window average of the interaction term of the first token
    against the window length, for the four configurations considered: the
    \texttt{mamba2-130m} weights and independently drawn weights, each with and
    without the nonlinearity on the branches producing $B$ and $C$. The smallest
    admissible window is marked where it exists; for independently drawn weights
    without the nonlinearity no window length is admissible.}
  \label{fig:pe-summary}
\end{figure}

\section{Conclusion}

In this paper, we showed that the relationship between selective state space models and transformers extends beyond their input-output representations and reaches the dynamics induced by depth. By deriving a continuous-time model for Mamba-2 with time-varying weight matrices, we recast the evolution of tokens as a dynamical system on the sphere. In this framework, using the causal cascade structure of selective SSMs and input-to-state stability, we proved local exponential stability of the consensus equilibria under a persistency of excitation condition. Then, under a stronger global condition, we described the corresponding domain of attraction. Our experiments further indicate that the output gate is the component that attenuates the consensus phenomenon, which suggests that gating is fundamental in preventing all tokens from converging to a single cluster. These results provide, to the best of our knowledge, the first consensus analysis for selective SSMs with time-varying weight matrices, showing that the mechanisms underlying their efficient recurrent structure also lead to the loss of token diversity.

\bibliographystyle{ieeetr}
\bibliography{biblio.bib}

\appendix


We begin by showing the bounds for the projected variables in section~\ref{sec:PE}.

\medskip

\begin{proof}[of Proposition~\ref{prop:windowed-ISS}]
For each $(t,Z)\in\mathbb R_0^+\times(\mathbb S^{n-1})^\ell$ and $i,j\in\{1,\hdots,\ell\}$ with $j\leq i$, we may write:
\begin{align*}
z_i^\top\,S_{BC}(t)\,(e_j+\sigma_j\mathfrak{e}^{\tilde\mu})=z_i^\top\,S_{BC}(t)\,e_j+\sigma_j\,z_i^\top\,S_{BC}(t)\,\mathfrak{e}^{\tilde\mu}.
\end{align*}
This allows for expressing the dynamics~\eqref{SSM:cont} of the $i$-th token as follows:
\begin{align}\nonumber
\dot z_i=\, & T_{z_i}\pi\cdot(z_i^\top\,S_{BC}(t)\,z_i)D_{ii}(t,Z)\,z_i\\\nonumber
& +T_{z_i}\pi\cdot\sum_{j=1}^{i-1}(z_i^\top S_{BC}(t)\,\mathfrak{e}^{\tilde\mu})D_{ij}(t,Z)\,\mathfrak{e}^{\tilde\mu}\\\nonumber
& +T_{z_i}\pi\cdot\sum_{j=1}^{i-1}\sigma_j\,(z_i^\top S_{BC}(t)\,\mathfrak{e}^{\tilde\mu})D_{ij}(t,Z)\,e_j\\\nonumber
& +T_{z_i}\pi\cdot\sum_{j=1}^{i-1}\sigma_j\,(z_i^\top S_{BC}(t)\,e_j)D_{ij}(t,Z)\,\mathfrak{e}^{\tilde\mu}\\\nonumber
& +T_{z_i}\pi\cdot\sum_{j=1}^{i-1}(z_i^\top S_{BC}(t)\,e_j)D_{ij}(t,Z)\,e_j,
\end{align}
where the sums are zero when $i=1$ and we used that $\sigma_j^2=1$ for each $j\in\{1,\hdots,i-1\}$. Hence (we omit the arguments $t$ and $Z$ for brevity):
\begin{align}\label{PE:dotbi1}
\dot b_i =~ & (z_i^\top\,S_{BC}\,z_i)\,(\lambda_{ii}^{\tilde\mu}-z_i^\top\,D_{ii}\,z_i)\,b_i\\\label{PE:dotbi2}
& +\sum_{j=1}^{i-1}(z_i^\top\,S_{BC}\,\mathfrak{e}^{\tilde\mu})(\sigma_i\lambda_{ij}^{\tilde\mu}-(z_i^\top\,D_{ij}\,\mathfrak e^{\tilde\mu})\,b_i)\\\label{PE:dotbi3}
& +\sum_{j=1}^{i-1}\sigma_j\,(z_i^\top\,S_{BC}\,\mathfrak{e}^{\tilde\mu})\,(\sigma_i\lambda_{ij}^{\tilde\mu}\,e_j^{\tilde\mu}-(z_i^\top\,D_{ij}\,e_j)\,b_i)\\\label{PE:dotbi4}
& +\sum_{j=1}^{i-1}\sigma_j\,(z_i^\top\,S_{BC}\,e_j)(\sigma_i\lambda_{ij}^{\tilde\mu}-(z_i^\top\,D_{ij}\,\mathfrak e^{\tilde\mu})\,b_i)\\\label{PE:dotbi5}
& +\sum_{j=1}^{i-1}(z_i^\top\,S_{BC}\,e_j)(\sigma_i\lambda_{ij}^{\tilde\mu}\,e_j^{\tilde\mu}-(z_i^\top\,D_{ij}\,e_j)\,b_i).
\end{align}

From the fact that $|z_i|=1$, we obtain $\gamma(1-b_i^2)\leq\lambda_{ii}^{\tilde\mu}-z_i^\top D_{ii} z_i\leq\Gamma(1-b_i^2)$. Hence, given $(t,b_i)\in\mathbb R_0^+\times\,]c,1]$, we distinguish two cases:
\begin{enumerate}
    \item $z_i^\top\,S_{BC}\,z_i\in\mathbb R_0^+$, then:
    \begin{align*}
    \eqref{PE:dotbi1} & \geq(z_i^\top\,S_{BC}\,z_i)\gamma(1+b_i)(1-b_i)b_i\\
    & \geq\gamma c(1+c)(z_i^\top\,S_{BC}\,z_i)\,(1-b_i)=a_c(t,z_i)\,(1-b_i).
    \end{align*}
    \item $z_i^\top\,S_{BC}\,z_i\in\mathbb R^-$, then:
    \begin{align*}
    \eqref{PE:dotbi1} & \geq(z_i^\top\,S_{BC}\,z_i)\Gamma(1+b_i)(1-b_i)b_i\\
    & \geq 2(\Gamma+\Lambda)(z_i^\top\,S_{BC}\,z_i)\,(1-b_i)=a_c(t,z_i)\,(1-b_i),
    \end{align*}
    where we used that $(1+b_i)b_i\leq2$ and $\Lambda\in\mathbb R^+$.
\end{enumerate}

For $i=1$,~\eqref{PE:dotbi2},~\eqref{PE:dotbi3},~\eqref{PE:dotbi4} and~\eqref{PE:dotbi5} vanish, so we conclude that $\dot b_1\geq a_c(t,z_1)\,(1-b_1)$.

For $i\in\{2,\hdots,\ell\}$, we need to bound the remaining terms. Using that $\sigma_i\,\mathfrak e^{\tilde\mu}=z_i-e_i$, we obtain:
\begin{align*}
\eqref{PE:dotbi2} & =\sum_{j=1}^{i-1}\sigma_i(z_i^\top\,S_{BC}\,\mathfrak{e}^{\tilde\mu})\,\lambda_{ij}^{\tilde\mu}\,(1-b_i^2)\\
& =\sum_{j=1}^{i-1}(z_i^\top\,S_{BC}\,z_i-z_i^\top\,S_{BC}\,e_i)\,\lambda_{ij}^{\tilde\mu}\,(1-b_i^2)\\
& =\sum_{j=1}^{i-1}\lambda_{ij}^{\tilde\mu}\,(z_i^\top\,S_{BC}\,z_i-z_i^\top\,S_{BC}\,e_i)\,(1-b_i^2).
\end{align*}
As for the first token, given $(t,b_i)\in\mathbb R_0^+\times\,]c,1]$, we distinguish two cases:
\begin{enumerate}
    \item $z_i^\top\,S_{BC}\,z_i\in\mathbb R_0^+$, then $\eqref{PE:dotbi2}\geq-\Lambda\,(z_i^\top\,S_{BC}\,e_i)\,(1-b_i^2)$.
    \item $z_i^\top\,S_{BC}\,z_i\in\mathbb R^-$, then $\eqref{PE:dotbi2}\geq2\Lambda\,(z_i^\top\,S_{BC}\,z_i)\,(1-b_i)-\Lambda\,(z_i^\top\,S_{BC}\,e_i)\,(1-b_i^2)$, where we used that $1+b_i\leq2$ and $\Lambda\in\mathbb R^+$.
\end{enumerate}
Therefore, in both cases we obtain:
\begin{align*}
\eqref{PE:dotbi1}+\eqref{PE:dotbi2}\geq a_c(t,z_i)\,(1-b_i)-\Lambda\,(z_i^\top\,S_{BC}\,e_i)\,(1-b_i^2).
\end{align*}
Let us pick $d_\varepsilon\in\,]0,1-c]$ such that $2\sqrt{2d_\varepsilon}\Lambda\sup_{t\in\mathbb R_0^+}\|S_{BC}(t)\|\leq\varepsilon$. Hence, for each $(t,b_i)\in\mathbb R_0^+\times\,]1-d_\varepsilon,1]$, we have $|e_i|=\sqrt{2(1-b_i)}<\sqrt{2d_\varepsilon}$, whence:
\begin{align*}
-\Lambda\,(z_i^\top\,S_{BC}\,e_i)\,(1-b_i^2) & \geq -2\sqrt{2d_\varepsilon}\Lambda\,\sup_{t\in\mathbb R_0^+}\|S_{BC}(t)\|\,(1-b_i)\\
& \geq-\varepsilon\,(1-b_i),
\end{align*}
where we used that $1+b_i\leq 2$ and $|z_i|=1$. As a result:
\begin{align*}
\eqref{PE:dotbi1}+\eqref{PE:dotbi2}\geq(a_c(t,z_i)-\varepsilon)\,(1-b_i).
\end{align*}

Lastly, from Assumption~\ref{ass:SSM} and Lemma~\ref{lemma:lambdabound}, as well as compactness of the sphere, the error terms are all bounded, \textit{i.e.}, there exists $M\in\mathbb R^+$ such that:
\begin{align}
\label{eq:residual-bound}
        \eqref{PE:dotbi3}+\eqref{PE:dotbi4}+\eqref{PE:dotbi5}
        \geq -M\sum_{j=1}^{i-1}|e_j|.
\end{align}
for each $(t,b_i)\in\mathbb R_0^+\times\,]c,1]$, and in particular, for each $(t,b_i)\in\mathbb R_0^+\times\,]1-d_\varepsilon,1]$.

By gathering the previous bounds, we conclude.
\end{proof}

\medskip

The following two technical lemmas are useful in the proof of Theorem~\ref{theorem:localavg}.

\medskip

\begin{lemma}
\label{lemma:cisavg}
Let $d_\varepsilon,H,M_i,\theta,r,R\in\mathbb R^+$, $i\in\{1,\hdots,\ell\}$, be such that $\theta<1$ and $R>r\geq 1$. Then there exists a sequence:
\begin{align*}
(d_1,\hdots,d_\ell)\in\prod_{i=1}^\ell\,]d_i^{\min},d^{\max}[\,,
\end{align*}
where $d^{\max}=d_\varepsilon/R$, $d_1^{\min}=0$ and:
\begin{align*}
d_i^{\min}=\max\left\{d_{i-1},~\frac{H}{\mu}\sum_{j=1}^{i-1}\sqrt{2M_jd_j}\right\},
\end{align*}
for $i\in\{2,\hdots,\ell\}$, with $\mu=\min\{1-\theta,R-r\}\in\mathbb R^+$.
\end{lemma}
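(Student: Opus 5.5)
The sequence will be built greedily, one index at a time. The only point to check is that at each step the admissible interval $]d_i^{\min},d^{\max}[$ is nonempty. The key observation is that $d_i^{\min}$ depends only on $d_1,\hdots,d_{i-1}$, and it tends to $0$ as those earlier values tend to $0$.

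For the base case, $d_1^{\min}=0<d^{\max}=d_\varepsilon/R$, since $d_\varepsilon,R\in\mathbb R^+$. So any $d_1\in\,]0,d^{\max}[$ is admissible.

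For the inductive step there is a subtlety. If $d_1,\hdots,d_{i-1}$ are fixed first, nothing forces
\[
\max\Bigl\{d_{i-1},\tfrac{H}{\mu}\textstyle\sum_{j<i}\sqrt{2M_jd_j}\Bigr\}<d^{\max}.
\]
The square-root term may already exceed $d^{\max}$ when the earlier $d_j$ are not small enough. I will therefore make all choices small from the outset, backwards in the index.

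First define the map $\phi_i(x_1,\hdots,x_{i-1})=\max\{x_{i-1},\tfrac{H}{\mu}\sum_{j<i}\sqrt{2M_jx_j}\}$. It is continuous, nondecreasing in each argument, and vanishes at $0$. Next choose a target upper bound $u_\ell=d^{\max}$. Then, for $i=\ell,\ell-1,\hdots,2$, choose $u_{i-1}\in\,]0,u_i[$ small enough that
\[
\tfrac{H}{\mu}\textstyle\sum_{j<i}\sqrt{2M_ju_{i-1}}<u_i.
\]
This is possible because the left side tends to $0$ as $u_{i-1}\to0^+$. Since $u_{i-1}<u_i$, the first entry of the max also stays below $u_i$.

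Now pick the sequence forward. Let $d_1\in\,]0,u_1[$. Inductively, pick $d_i\in\,]\phi_i(d_1,\hdots,d_{i-1}),u_i[$. This interval is nonempty because each $d_j<u_j\le u_{i-1}$ for $j<i$, so monotonicity gives
\[
\phi_i(d_1,\hdots,d_{i-1})\le\max\Bigl\{u_{i-1},\tfrac{H}{\mu}\textstyle\sum_{j<i}\sqrt{2M_ju_{i-1}}\Bigr\}<u_i.
\]
Finally, $u_i\le d^{\max}$ for every $i$, so each $d_i\in\,]d_i^{\min},d^{\max}[$ as required. The hypotheses $\theta<1$ and $R>r$ guarantee $\mu>0$, so every quantity is finite.

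The only delicate step is the one just handled: ensuring nonemptiness at every stage. It is resolved by fixing the nested upper bounds $u_i$ before choosing any $d_i$, which uses only continuity of $x\mapsto\sqrt{x}$ at $0$.
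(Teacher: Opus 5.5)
Your proof is correct, but it takes a different route from the paper's.

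\textbf{The paper's route.} The paper does not choose the $d_i$ freely. It fixes the explicit recursion $d_i=2d_{i-1}+\frac{2H}{\mu}\sum_{j=1}^{i-1}\sqrt{2M_jd_j}$, which exceeds $d_i^{\min}$ automatically, since $2a+2b>\max\{a,b\}$ whenever $a,b\geq0$ are not both zero. Each $d_i$ is then a continuous function of the single parameter $d_1$ that tends to $0$ as $d_1\to0^+$. One smallness condition on $d_1$ therefore pushes every $d_i$ below $d^{\max}$.

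\textbf{Your route.} You fix the nested ceilings $u_1<\cdots<u_\ell=d^{\max}$ backwards. You then choose the $d_i$ forwards inside $]\phi_i(d_1,\hdots,d_{i-1}),u_i[$, using monotonicity of $\phi_i$ and $d_j<u_j\leq u_{i-1}$. You were right to flag that naive forward-greedy selection can fail, and both arguments avoid that trap.

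\textbf{Comparison.} Both rest on the same fact, namely $\sqrt{x}\to0$ as $x\to0^+$, so the square-root lower bounds are compatible with absolute smallness. The paper's version is shorter and gives an explicit one-parameter family, with a single knob to turn. Yours makes the order of quantifiers fully explicit. It also shows that the admissible set leaves freedom at every index rather than being a single curve. Finally, it uses only monotonicity and continuity of $\sqrt{\cdot}$ at $0$, rather than continuity of the composed recursive maps in $d_1$.
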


\medskip

\begin{proof}
Given $d_1\in\,]0,d^{\max}[\,$, we recursively define:
\begin{align*}
        d_i
        =
        2d_{i-1}
        +
        \frac{2H}{\mu}
        \sum_{j=1}^{i-1}\sqrt{2M_jd_j},
\end{align*}
for each $i\in\{2,\hdots,\ell\}$. It is clear that $d_i>d_i^{\min}$ by construction.

To conclude, we need to show that $d_1$ can be chosen so that $d_i<d^{\max}$ for each $i\in\{2,\hdots,\ell\}$. To that end, we regard $d_i=d_i(d_1)$, and note that they are continuous functions on $]0,d^{\max}[\,$. Hence, $\lim_{d_1\to0^+}d_i(d_1)=d_i(0)=0$. Hence, $d_1\in\,]0,d^{\max}[$ can be chosen so that $d_i<d^{\max}$ for each $i\in\{2,\hdots\ell\}$.
\end{proof}

\medskip

\begin{lemma}[Discrete ISS]
\label{lemma:discrete-comp}
Let $\theta\in\,]0,1[\,$, $\nu,d\in\mathbb R^+$, and
$\varrho=\frac{1}{2}\min\{-\log(\theta)/T,\nu\}$. There exists $C(T,\theta,\varrho)\in[1,\infty[$ such that, for each real sequence $(y_k)_{k\in\mathbb N}$ and $y_0\in[0,d[$ satisfying:
\begin{align*}
y_k\leq\theta\,y_{k-1}+(1-\theta)\,d\,\exp(-\nu(k-1)T),\qquad k\in\mathbb N,
\end{align*}
then $y_k\leq C(T,\theta,\varrho)\,d\,\exp(-\varrho kT)$ for each $k\in\mathbb N$.
\end{lemma}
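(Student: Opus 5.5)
We unroll the recursion and then bound the resulting convolution sum by a single geometric series.

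\textbf{Step 1: unroll.} Iterating the hypothesis from $y_0$ gives, for each $k\in\mathbb N$,
\begin{align*}
y_k\leq\theta^k y_0+(1-\theta)\,d\sum_{s=1}^{k}\theta^{k-s}\exp(-\nu(s-1)T).
\end{align*}
Nonnegativity of the $y_k$ is not needed here, since each step only uses $\theta>0$ and the upper bound. Because $y_0<d$, the first term is at most $d\,\theta^k$.

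\textbf{Step 2: convert both rates to $\varrho$.} Write $\theta=\exp(-\omega T)$ with $\omega=-\log(\theta)/T>0$. By definition $\varrho\leq\omega/2$ and $\varrho\leq\nu/2$, so the rate $\varrho$ is strictly slower than both $\omega$ and $\nu$.
\begin{enumerate}
\item The first term satisfies $d\,\theta^k=d\exp(-\omega kT)\leq d\exp(-\varrho kT)$.
\item In the sum, use $\exp(-\nu(s-1)T)\leq\exp(-2\varrho(s-1)T)$ and $\theta^{k-s}=\exp(-\omega(k-s)T)\leq\exp(-2\varrho(k-s)T)$.
\item Each summand is then bounded by $\exp(-2\varrho(k-1)T)$, and there are $k$ summands.
\item This gives the sum bound $k\exp(-2\varrho(k-1)T)=\exp(2\varrho T)\,k\exp(-\varrho kT)\exp(-\varrho kT)$.
\item Since $\sup_{k\geq 1}k\exp(-\varrho kT)\leq 1/(e\varrho T)$, the sum is at most $\exp(2\varrho T)\,(e\varrho T)^{-1}\exp(-\varrho kT)$.
\end{enumerate}

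A cleaner alternative avoids the factor $k$. Factor out $\exp(-\varrho kT)$ and bound what remains by a convergent geometric series with ratio $\exp(-\varrho T)<1$. Using $\omega-\varrho\geq\varrho$ and $\nu-\varrho\geq\varrho$, the remainder is at most $\exp(\nu T)\sum_{s\geq 0}\exp(-\varrho sT)$.

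\textbf{Step 3: collect constants.} Set
\begin{align*}
C(T,\theta,\varrho)=1+(1-\theta)\,\frac{\exp(2\varrho T)}{e\,\varrho T}\geq 1 .
\end{align*}
This constant depends only on $(T,\theta,\varrho)$, as the statement requires, because $\nu$ entered only through $\varrho\leq\nu/2$. Then $y_k\leq C\,d\exp(-\varrho kT)$ for every $k\in\mathbb N$.

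The only delicate point is this dependence check: the constant must be independent of $\nu$ and of the sequence $(y_k)$. The halving built into the definition of $\varrho$ handles it, at the cost of slightly weaker constants. This choice also covers the equal-rate case $\omega=\nu$, where no resonance factor $k$ survives beyond what $\sup_k k\exp(-\varrho kT)$ absorbs.
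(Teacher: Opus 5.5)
Your proof is correct, but it takes a different route from the paper.

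\textbf{The paper's route.} The paper never unrolls the recursion. It makes the ansatz $y_k\le C\,d\,e^{-\varrho kT}$ and proves it by induction on $k$, with $C=(1-\theta)/(e^{-\varrho T}-\theta)$. This constant is chosen exactly so that $e^{\varrho T}(\theta C+1-\theta)=C$. The base case then reduces to $C\ge 1$. The inductive step reduces to $e^{-\nu(k-1)T}e^{\varrho kT}\le e^{\varrho T}$. That argument only needs $\theta<e^{-\varrho T}$ and $\varrho\le\nu$. So the factor $\tfrac12$ in $\varrho$ serves only to make the first inequality strict. Its constant also stays bounded, tending to $1$, as $\varrho\to0$.

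\textbf{Your route.} Your discrete variation-of-constants approach is more mechanical and needs no guessed fixed-point constant. However, your main branch genuinely spends the halving. Bounding every summand by $e^{-2\varrho(k-1)T}$ manufactures the resonance factor $k$, which you then absorb with $\sup_k k\,e^{-\varrho kT}\le 1/(e\varrho T)$. As a result, your constant blows up like $1/\varrho$. If you instead keep $\theta^{k-s}$ and sum the geometric series with ratio $\theta e^{\varrho T}<1$, you get $C=1+(1-\theta)/(e^{-\varrho T}-\theta)$. That is essentially the paper's constant.

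\textbf{A small slip.} In your side remark, the prefactor should be $e^{\varrho T}$, not $e^{\nu T}$. As written, the bound is still true, but it would make the constant depend on $\nu$, contradicting your own dependence check. This is harmless, since Step 3 does not use it.
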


\medskip

\begin{proof}
Given that $\varrho\in\,]0,-\log(\theta)/T[\,$, we have $\exp(-\varrho T)\in\,]\theta,1[$. Thus, we define:
\begin{align*}
C(T,\theta,\varrho)=\frac{1-\theta}{\exp(-\varrho T)-\theta}\in[1,\infty[\,.
\end{align*}
For brevity, we denote $C=C(T,\theta,\varrho)$. Let us proceed by induction in $k\in\mathbb N$.

\medskip

\textbf{Base case}. For $k=1$, we have:
\begin{align*}
y_1 & \leq\theta\,y_0+(1-\theta)\,d\\
& \leq (\theta+C(\exp(-\varrho T)-\theta))\,d\\
& \leq C\,d\,\exp(-\varrho T).
\end{align*}

\medskip

\textbf{Induction hypothesis}. For $k\in\mathbb N$ with $k\geq 2$, we have $y_{k-1}\leq C\,d\,\exp(-\varrho(k-1)T)$.

\medskip

\textbf{Inductive step}. 
Note that $\exp((\varrho-\nu)(k-1)T)\in\,]0,1[$ since $\varrho\in\,]0,\nu[\,$. Hence:
\begin{align*}
\exp(-\nu(k-1)T+\varrho kT) & =\exp(\varrho T+(\varrho-\nu)(k-1)T)\\
& <\exp(\varrho T). 
\end{align*}
From this, the recurrence inequality, the induction hypothesis and the definition of $C=C(T,\theta,\varrho)$, we conclude:
\begin{align*}
y_k & \le \theta\,y_{k-1} + (1-\theta)\,d\,\exp(-\nu(k-1)T) \\
& \le\theta\,C\,d\,\exp(-\varrho (k-1)T) + (1-\theta)\,d\,\exp(-\nu(k-1)T)\\
& =\left(\theta\,C\,\exp(\varrho T) + (1-\theta)\exp(\varrho T)\right)d\exp(-\varrho kT)\\
& =C\,d\,\exp(-\varrho kT).
\end{align*}
\end{proof}

\medskip

Now we prove Claim~\ref{PE:claim}, which was used in the proof of Theorem~\ref{theorem:localavg}.

\medskip

\begin{proof}[of Claim~\ref{PE:claim}]
Let us show each statement: 
\begin{enumerate}
    \item By contradiction, suppose that $t_0=\inf\{t\in[(k-1)T,kT[\,\mid V_i(t)=Rd_i\}\in[(k-1)T,kT[$. Given that $Rd_i<d_\varepsilon$ by construction, we have $V_i(t)\leq d_\varepsilon$ for each $t\in[(k-1)T,t_0]$. Hence, from Proposition~\ref{prop:windowed-ISS} and~\eqref{proof:exponential}, we obtain:
    \begin{align}\nonumber
    \dot V_i & \leq-(a_c(t,z_i)-\varepsilon)\,V_i+M\sum_{j=1}^{i-1}|e_j|\\\nonumber
    & \leq-(a_c(t,z_i)-\varepsilon)\,V_i+M\sum_{j=1}^{i-1}\sqrt{2 M_j d_j}\exp\left(-\frac{\rho_j t}{2}\right)\\\label{proof:diffineq}
    & \leq-\eta_\varepsilon\,V_i+M\sum_{j=1}^{i-1}\sqrt{2 M_j d_j},
    \end{align}
    for each $t\in[(k-1)T,t_0]$, where we used that $a_c(t,z_i)-\varepsilon\geq\eta_\varepsilon$ by definition. For brevity, we write $\tau=t_0-(k-1)T\in[0,T[\,$. Moreover, recall that $V_i((k-1)T)<d_i$, $r=\exp(-\eta_\varepsilon T)$ and $H=T\,M\exp(-\eta_\varepsilon T)$.
    By integrating~\eqref{proof:diffineq} between $t=(k-1)T$ and $t=t_0$ and using Lemma~\ref{lemma:cisavg}, we arrive at a contradiction. Indeed, we denote $K=M\sum_{j=1}^{i-1}\sqrt{2M_jd_j}$ and we distinguish two cases:
    \begin{enumerate}
        \item $\eta_\varepsilon=0$:
        \begin{align*}
        Rd_i=V_i(t_0) & \leq V_i((k-1)T)+\tau\,K\\
        & <d_i+T\,K<Rd_i.
        \end{align*}
        \item $\eta_\varepsilon\in\mathbb R^-$:
        \begin{align*}
        \dot V_i\leq-\eta_\varepsilon V_i+K,
        \end{align*}
        for each $t\in[(k-1)T,t_0]$. By using the integrating factor $\exp(\eta_\varepsilon t)$, we obtain:
        \begin{align*}
        \frac{d}{dt}\left(\exp(\eta_\varepsilon t)V_i\right)\leq K\exp(\eta_\varepsilon t),
        \end{align*}
        for each $t\in[(k-1)T,t_0]$. By writing $\tau=t_0-(k-1)T\in[0,T[\,$, we obtain:
        \begin{align*}
        & R d_i =V_i(t_0)\\ 
        & \leq\exp(-\eta_\varepsilon\tau)\,V_i((k-1)T)+\frac{K}{\eta_\varepsilon}(1-\exp(-\eta_\varepsilon\tau))\\
        & =\exp(-\eta_\varepsilon\tau)\,V_i((k-1)T)+K\int_0^\tau\exp(-\eta_\varepsilon s)\,ds\\
        & \leq\exp(-\eta_\varepsilon T)(d_i+TM\sum_{j=1}^{i-1}\sqrt{2M_jd_j})<Rd_i.
        \end{align*}
    \end{enumerate}
     Hence, $V_i(t)\in[0,Rd_i[$ for each $t\in[(k-1)T,kT[\,$.

    \medskip
    
    \item Given that $d_i\in\,]0,d_\varepsilon/R[\,$, from the previous item we conclude that $V_i(t)\in[0,d_\varepsilon[$ for each $t\in[(k-1)T,kT[\,$. Moreover, $d_\varepsilon\in\,]0,1-c]$, whence $V_i(t)\in[0,1-c[\,$, \textit{i.e.}, $z_i(t)\in\Omega_c^{\sigma_i}(\tilde\mu)$, for each $t\in[(k-1)T,kT[\,$. Thus, $a_c(t,z_i(t))\geq\alpha_c(t)$ for each $t\in[(k-1)T,kT[\,$. Thus, Assumption~\ref{ass:selfavg} gives:
    \begin{align}\label{proof:PEpart2}
    \int_{(k-1)T}^{kT}(a_c(s,z_i(s))-\varepsilon)\,ds\geq(\rho-\varepsilon)T.
    \end{align}
    In addition, Proposition~\ref{prop:windowed-ISS} holds, yielding:
    \begin{align*}
    \dot V_i & \leq-(a_c(t,z_i)-\varepsilon)\,V_i+M\sum_{j=1}^{i-1}|e_j|\\
    & \leq-(\alpha_c(t)-\varepsilon)\,V_i+\sum_{j=1}^{i-1}K_j\exp\left(-\frac{\rho_j}{2}t\right),
    \end{align*}
    for each $t\in[(k-1)T,kT[\,$, where we used~\eqref{proof:exponential} and denoted $K_j=M\sqrt{2M_jd_j}$ for brevity. The integrating factor $\phi(t)=\exp\left(\int_{(k-1)T}^t(\alpha_c(s)-\varepsilon)\,ds\right)$ gives:
    \begin{align}\nonumber
    \frac{d}{dt}(\phi(t)V_i) & \leq \sum_{j=1}^{i-1}K_j\exp\left(-\frac{\rho_j}{2}t\right)\phi(t)\\\label{proof:integrating}
    & \leq \sum_{j=1}^{i-1}K_j\exp\left(-\frac{\rho_j}{2}(k-1)T\right)\phi(t),
    \end{align}
    for each $t\in[(k-1)T,kT[\,$. Note that:
    \begin{align*}
    \phi(kT)^{-1}&=\exp\left(-\int_{(k-1)T}^{kT}(\alpha_c(s)-\varepsilon)\,ds\right)\\&\leq\exp(-(\rho-\varepsilon)T)=\theta,
    \end{align*}
    where we used~\eqref{proof:PEpart2}. Thus, by integrating~\eqref{proof:integrating} between $t=(k-1)T$ and $t=kT$, we obtain:
    \begin{align*}
    V_i(kT) & \leq V_i((k-1)T)\phi(kT)^{-1}\\
    &+K\int_{(k-1)T}^{kT}\exp\left(-\int_t^{kT}(\alpha_c(s)-\varepsilon)\,ds\right)dt\\
    & \leq \theta\,V_i((k-1)T)+ KT\exp(-\eta_\varepsilon\,T),
    \end{align*}
    where we used that $\alpha_c(t)-\varepsilon\geq\eta_\varepsilon$ and $\phi((k-1)T)=1$. 
    
    \medskip
    
    \item From the previous item, as well as $V_i((k-1)T)\in[0,d_i[$ and Lemma~\ref{lemma:cisavg}, we conclude:
    \begin{align*}
    V_i(kT)\leq\theta d_i+H\sum_{j=1}^{i-1}\sqrt{2M_jd_j}<d_i,
    \end{align*}
    where we recall that $H=TM\,\exp(-\eta_\varepsilon T)$.
\end{enumerate}
\end{proof}


Now we write the proofs for section~\ref{sec:global}. We begin by showing the bounds for the projected variables.

\begin{proof}[of Proposition~\ref{prop:dotbibound}]
Firstly, note that $]c_\textrm{eff},b_i^\textrm{max}[\,\neq\emptyset$ as $Z_0\not\in\mathcal B_\ell(\tilde\mu)$. Note that $V_i|_{]c_i,1[}>0$ and $V_i(1)=0$. Hence, its dynamics readily follows from~\eqref{PE:dotbi1},~\eqref{PE:dotbi2},~\eqref{PE:dotbi3},~\eqref{PE:dotbi4} and~\eqref{PE:dotbi5}. Using that $\dot V_i=-\dot b_i$, let us bound each term separately on $I_{c,i}^\Sigma(Z_0,\tilde\mu)$.

For the first term, we have (for brevity, we drop the argument $t$):
\begin{align}\label{dotbi:1}
-\eqref{PE:dotbi1} & \leq-\alpha\,\gamma\,c\,(1-b_i^2),
\end{align}
where we used~\eqref{proof:gap} and $\alpha=\inf_{(t,z)\in\mathbb R_0^+\times\mathbb S^{n-1}}z^\top\,S_{BC}(t)\,z$.

For the second term, by writing $z_i=b_i\,\mathfrak{e}^{\tilde\mu}+\sqrt{1-b_i^2}\,\epsilon_{\tilde\mu}$, with $\epsilon_{\tilde\mu}\in\mathbb S^{n-1}$ such that $\epsilon_{\tilde\mu}^\top\,\mathfrak{e}^{\tilde\mu}=0$, we have $z_i^\top S_{BC}\,\sigma_i\,\mathfrak{e}^{\tilde\mu}=b_i\left((\mathfrak{e}^{\tilde\mu})^\top\,S_{BC}\,\mathfrak{e}^{\tilde\mu}\right)+r_i(t)$, where:
\begin{align*}
r_i=\sqrt{1-b_i^2}\,\epsilon_{\tilde\mu}^\top\,(S_{BC}-((\mathfrak{e}^{\tilde\mu})^\top S_{BC}\,\mathfrak{e}^{\tilde\mu})\mathbb I_n)\,\mathfrak{e}^{\tilde\mu}.
\end{align*}
Note that $|r_i|\leq\alpha\,c_\star\,\sqrt{1-b_i^2}$. Hence, we have:
\begin{align}\nonumber
-\eqref{PE:dotbi2} & =-\sum_{j=1}^{i-1}\sigma_i\,\lambda_{ij}^{\tilde\mu}\,(z_i^\top S_{BC}\,\mathfrak{e}^{\tilde\mu})(1-b_i^2)\\\nonumber
& =-\sum_{j=1}^{i-1}\lambda_{ij}^{\tilde\mu}\left(b_i\left((\mathfrak{e}^{\tilde\mu})^\top\,S_{BC}\,\mathfrak{e}^{\tilde\mu}\right)+r_i(t)\right)(1-b_i^2)\\\label{dotbi:2}
& \leq-(i-1)\,\alpha\left(\lambda_{\min}\,c-\lambda_{\max}\,c_\star\sqrt{1-c^2}\right)(1-b_i^2).
\end{align}

From \eqref{dotbi:1} and \eqref{dotbi:2}, we obtain:
\begin{align}\nonumber
-\eqref{PE:dotbi1}-\eqref{PE:dotbi2}  \leq&-\alpha\Big(\gamma\,c+(i-1)\,\lambda_{\min}\,c\\&-(i-1)\,\lambda_{\max}\,c_\star\sqrt{1-c^2}\Big)(1-b_i^2)\nonumber\\\label{dotbi} 
 \leq&-\alpha\,\beta\,(1+c)\,V_i,
\end{align}
where $\beta=c\,(\gamma+(i-1)\,\lambda_{\min})-c_\star\,\lambda_{\max}\,(i-1)\,\sqrt{1-c^2}$. Note that $\beta\in\mathbb R^+$. Indeed, by squaring:
\begin{align*}
c\,(\gamma+(i-1)\,\lambda_{\min})>c_\star\,\lambda_{\max}\,(i-1)\,\sqrt{1-c^2}
\end{align*}
and rearranging terms, we obtain:
\begin{align*}
& c^2\left((\gamma+(\ell-1)\,\lambda_{\min})^2+c_\star^2\,\lambda_{\max}^2\,(\ell-1)^2\right)\\
& \quad \geq c^2\left((\gamma+(i-1)\,\lambda_{\min})^2+c_\star^2\,\lambda_{\max}^2\,(i-1)^2\right)\\
& \quad >c_\star^2\,\lambda_{\max}^2\,(i-1)^2\geq c_\star^2\,\lambda_{\max}^2.
\end{align*}
This holds provided:
\begin{equation*}
c>\frac{c_\star\,\lambda_{\max}}{\sqrt{(\gamma+(\ell-1)\,\lambda_{\min})^2+c_\star^2\,\lambda_{\max}^2\,(\ell-1)^2}}=c_\textrm{eff}.
\end{equation*}

Lastly, from Assumption~\ref{ass:SSM} and Lemma~\ref{lemma:lambdabound}, as well as compactness of the sphere, the error terms are all bounded:
\begin{align}\label{dotbi:3}
    -\eqref{PE:dotbi3}-\eqref{PE:dotbi4}-\eqref{PE:dotbi5}\leq M\sum_{j=1}^{i-1}|e_j|.
\end{align}

By gathering~\eqref{dotbi} and~\eqref{dotbi:3}, and denoting $\kappa=\alpha\,\beta\,(1+c)\in\mathbb R^+$, we conclude:
\begin{align*}
\dot V_i & \leq-\kappa\,V_i+M\sum_{j=1}^{i-1}|e_j|.
\end{align*}

\end{proof}

\medskip

Lastly, the next result is readily obtained by an ISS argument. It guarantees convergence to consensus of the last token at the inductive step in the proof of Theorem~\ref{theorem:global}.

\medskip

\begin{lemma}\label{lemma:comparison}
Let $b : \mathbb{R}_0^+ \to [-1,1]$ be continuously differentiable and $e: \mathbb{R}_0^+ \to \mathbb{R}_0^+$ be continuous with $\lim_{t \to \infty} e(t) = 0$. If there exist $\kappa \in \mathbb{R}^+$ and $t_0\in\mathbb R_0^+$ such that $\dot{b}\geq \kappa\,(1 - b) - e(t)$ for each $t\in[t_0,\infty[\,$, then $\lim_{t \to \infty} b(t) = 1$.
\end{lemma}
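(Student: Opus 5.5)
Set $V=1-b\ge 0$ and rewrite the hypothesis as the differential inequality $\dot V\le -\kappa V+e(t)$ on $[t_0,\infty[$. The claim $\lim_{t\to\infty}b(t)=1$ is then equivalent to $V(t)\to 0$. This is the scalar ISS estimate for an exponentially stable linear system driven by a vanishing input, and I will prove it by the integrating-factor comparison argument already used in the proof of Claim~\ref{PE:claim}.

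\textbf{Step 1: variation-of-constants bound.} Multiply by $\exp(\kappa t)$. This gives $\frac{d}{dt}\bigl(\exp(\kappa t)V(t)\bigr)\le \exp(\kappa t)e(t)$ for $t\ge t_0$, using that $b$ is $C^1$. Integrating from $t_0$ to $t$ yields
\begin{align*}
V(t)\le \exp(-\kappa(t-t_0))\,V(t_0)+\int_{t_0}^t \exp(-\kappa(t-s))\,e(s)\,ds .
\end{align*}
The first term tends to zero, since $V(t_0)\le 2$ because $b\in[-1,1]$.

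\textbf{Step 2: the convolution term vanishes.} This is the only step needing care. Fix $\epsilon>0$ and pick $t_1\ge t_0$ with $e(s)\le\epsilon$ for $s\ge t_1$. Split the integral at $t_1$.

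On $[t_1,t]$ the integrand is at most $\epsilon\exp(-\kappa(t-s))$, so that piece is at most $\epsilon/\kappa$.

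On $[t_0,t_1]$, $e$ is continuous on a compact interval, hence bounded by some $E$. That piece is at most $E\,\exp(-\kappa(t-t_1))/\kappa$, which tends to zero as $t\to\infty$.

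Therefore $\limsup_{t\to\infty}V(t)\le \epsilon/\kappa$ for every $\epsilon>0$.

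\textbf{Conclusion.} Since $V\ge0$, this gives $\lim_{t\to\infty}V(t)=0$, that is, $b(t)\to1$. Note that the bound $b\le1$ enters only to ensure $V\ge0$, so the limit inferior is automatically controlled. No growth condition on $e$ beyond continuity and decay is required.
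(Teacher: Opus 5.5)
Your proof is correct and is the ISS argument the paper has in mind: the paper gives no proof beyond saying the lemma ``is readily obtained by an ISS argument.'' Your variation-of-constants bound, together with splitting the convolution integral at $t_1$, supplies exactly the omitted details.
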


\medskip

\end{document}